\documentclass{article}

\usepackage[accepted]{icml2026}
\usepackage{xcolor}
\definecolor{darkmutedteal}{RGB}{25, 95, 105}
\usepackage{hyperref}
\hypersetup{
    colorlinks=true,
    citecolor = darkmutedteal,
    linkcolor=darkmutedteal,
    filecolor=darkmutedteal,      
    urlcolor=black,
    pdftitle={A Mean-Field Framework for Inference-Time Distributional Control of Diffusion Models},
}

\usepackage{latexsym,amssymb}
\usepackage{amsmath, amsthm, bbm, bm}
\usepackage{cleveref}

\usepackage{algorithmic}

\usepackage{graphicx}
\usepackage{subcaption}
\usepackage{float}
\usepackage{mathtools}
\usepackage[dvipsnames]{xcolor}
\usepackage[nodayofweek]{datetime}
\usepackage{makecell}
\usepackage{booktabs}
\usepackage{wrapfig}
\usepackage{enumitem}
\usepackage{amsmath}

\usepackage{thmtools, thm-restate}

\usepackage{cleveref}

\usepackage{booktabs}
\usepackage{tabularx}
\usepackage{array}
\usepackage{gensymb}

\newcolumntype{Y}{>{\raggedright\arraybackslash}X}

\DeclareMathOperator*{\argmax}{arg\,max}
\DeclareMathOperator*{\argmin}{arg\,min}

\newtheoremstyle{custom}  
  {10pt}   
  {10pt}   
  {\itshape}  
  {}        
  {\bfseries} 
  {.}       
  {5pt}     
  {}        
\theoremstyle{custom}

\newtheorem{theorem}{Theorem}[section]
\newtheorem{corollary}{Corollary}[theorem]
\newtheorem{proposition}[theorem]{Proposition}
\newtheorem{lemma}[theorem]{Lemma}
\newtheorem{assumption}[theorem]{Assumption}

\newtheorem{remark}{Remark}[section]

\usepackage{soul}

\def\to{\rightarrow}

\def\eps{\varepsilon}

\def\cF{\mathcal{F}}

\def\cP{\mathcal{P}}

\def\cR{\mathcal{R}}

\def\cX{\mathcal{X}}

\def\sE{{\mathbb{E}}}

\def\sR{{\mathbb R}}

\def\rmd{{\mathrm{d}}}

\def\KL{{\mathrm{KL}}}

\def\CV{{\mathrm{CV}}}
\def\MMD{{\mathrm{MMD}}}
\def\SD{{\mathrm{SD}}}

\def\tmu{{\Tilde{\mu}}}
\def\tPhi{{\Tilde{\Phi}}}

\usepackage[textsize=tiny]{todonotes}

\icmltitlerunning{A Mean-Field Framework for Inference-Time Distributional Control of Diffusion Models}

\begin{document}

\twocolumn[
\icmltitle{A Mean-Field Framework for Inference-Time \\ Distributional Control of  Diffusion Models}




\begin{icmlauthorlist}
\icmlauthor{Samuel Howard}{ox}
\icmlauthor{Nikolas N\"usken}{kcl}
\end{icmlauthorlist}

\icmlaffiliation{ox}{Department of Statistics, University of Oxford}
\icmlaffiliation{kcl}{Department of Mathematics, King's College London}

\icmlcorrespondingauthor{Samuel Howard}{howard@stats.ox.ac.uk}


\vskip 0.3in
]



\printAffiliationsAndNotice{}  

\begin{abstract}
Diffusion models are increasingly used as controllable samplers, whose generations can be steered at inference time according to a chosen reward function. While such rewards are typically defined on individual samples, for many applications it is desirable to steer according to distribution-level rewards, for example to calibrate with population-level information or to encourage diversity. In both cases, simply incorporating the reward gradient into the dynamics, while often effective, comes with few theoretical guarantees on the sampled distribution. For pointwise rewards, recent work has therefore sought to develop a principled framework for targeting a prescribed tilted distribution using particle reweighting. However, an analogous theoretically-grounded approach for distributional rewards is currently lacking. In this work, we formulate inference-time distributional control as targeting a tilted measure under a mean-field framework, and derive a weighted interacting particle scheme to target it in a principled manner. Our framework recovers pointwise-reward steering as a special case, while providing a theoretical foundation for existing batch-level steering methods. Empirically, we verify that the procedure correctly targets the prescribed distribution in tractable low-dimensional settings, and investigate its behaviour in higher-dimensional protein conformation tasks.
\end{abstract}

\section{Introduction}

\begin{figure*}[t]
    \centering
    \includegraphics[
        width=0.86\linewidth,
        trim={30mm 60mm 60mm 50mm},
        clip
    ]{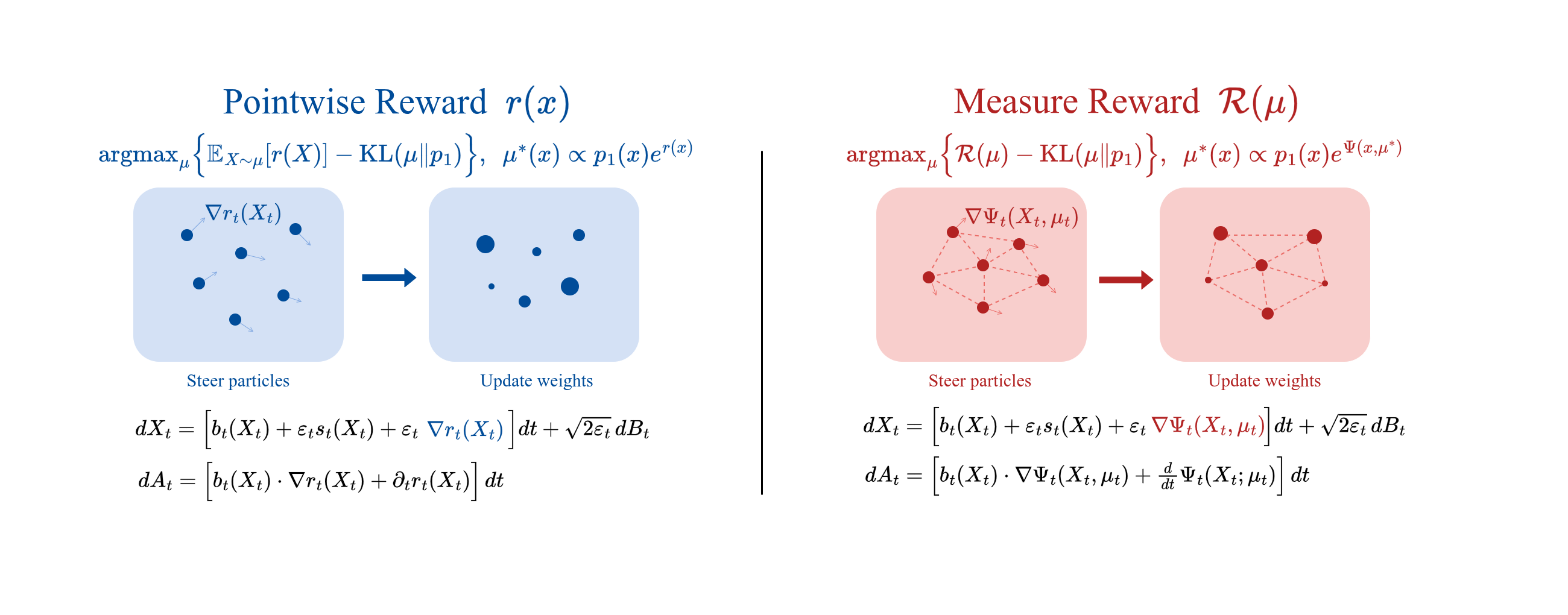}
    \caption{\textit{(Left)} \textbf{Standard Inference-Time Steering:} For pointwise rewards, the reward-gradients $\nabla r_t$ can be incorporated into the inference dynamics, to steer the particles independently towards areas of higher reward. Feynman-Kac corrections are applied to the log-weights $A_t$ to ensure the weighted particles trace the correct marginals $\mu_t^*(\cdot) \propto  p_t(\cdot) e^{r_t(\cdot)}$. \textit{(Right)} \textbf{Mean-Field Inference-Time Steering:} For a \textit{measure-defined} reward function $\cR(\mu)$, we incorporate the gradient of the first-variation, $\nabla \Psi_t(\cdot, \mu_t)$ as the steering term, leading to interaction between the particles. Through a mean-field framework, we derive the corrective reweighting procedure to ensure the reweighted interacting particles trace out the target tilted measures $\mu_t^*(\cdot) \propto p_t(\cdot) e^{\Psi_t(\cdot, \mu^*)} $. }
\end{figure*}

\paragraph{Diffusion and Flow Models}
Diffusion and flow-based generative models \citep{song2021scorebased, lipman2023flow, liu2022, albergo2023building} generate novel samples from a data distribution by learning time-dependent dynamics that transport an initial Gaussian distribution $p_0$ to the target $p_1$. Such models have achieved state-of-the-art performance across a wide range of domains, including images \citep{rombach22_SD}, molecules \citep{Watson2023_RFDiffusion}, and videos \citep{Ho22_video}.

\paragraph{Inference-Time Steering}
In many applications, the goal is not only to sample from the base distribution $p_1$, but to instead obtain samples that exhibit certain desired properties. To this end, one can `steer' the generative process by including additional drift terms in the inference dynamics.
Such steering mechanisms can be used to direct a pretrained generative model towards higher-quality samples, or samples that fulfil certain task-specific properties, without requiring further training.
Most existing inference-time steering methods aim to improve samples with respect to a \textit{pointwise} reward function $r: \sR^d \rightarrow \sR$ defined on individual generated samples.
Early steering approaches generally incorporated the reward gradient $\nabla r (X_t)$ into the inference dynamics, to `push' the generations towards areas of higher reward \citep{Dhariwal21_diffusion_beats_GANs, chung2023DPS, bansal2024universal}. While effective at improving sample quality, these heuristic modifications to the dynamics come with a poor theoretical understanding of the sampled distribution.

This limitation has motivated a large number of recent works aiming to place inference-time steering on a firmer theoretical footing. To do so, the dominant paradigm aims to sample from the $\KL$-regularised objective,
\begin{equation}
\label{eq:pointwise_KL_reg_objective}
    \argmax_\mu \big\{ \sE_{X \sim \mu} [r(X)] - \KL(\mu \| p_1) \big\}
\end{equation}
which corresponds to the \textit{exponentially-tilted} distribution
$\mu^*_1(x_1) \propto p_1(x_1)e^{ r(x_1)}$.
Since gradient-steering alone does not sample from this target tilted measure, recent works have sought to correct for the discrepancy using Feynman-Kac particle reweighting schemes, enabling correct tracking of the desired distributional path via SMC resampling procedures \citep{wu2023twisted, thornton2025controlled, singhal2025fks, skreta2025fkc, he2026_RNE, sabour2025FMTT}.

\paragraph{Distributional-Reward Steering}
Following these recent advances, steering according to pointwise-defined reward functions is now well-understood and based upon a principled theoretical foundation.
However, many important control objectives are inherently \textit{distributional} rather than pointwise. In many applications, rather than aiming for individual samples to score highly, one may wish to shape the overall generated distribution according to global properties, such as calibration to experimental observations, balancing across modes, or encouraging diversity. To formalise such distributional-steering, we will instead consider optimising according to a\textit{ measure-defined} reward function $\cR: \cP(\sR^d) \rightarrow \sR$,
\begin{equation}
\label{eq:distributional_KL_reg_objective}
    \argmax_\mu \big\{ \cR(\mu) - \KL(\mu \| p_1) \big\}.
\end{equation}
A number of existing works have incorporated steering terms that operate at the distributional level.
For example, \citet{corso2024particleguidance, kirchhof2025shielded, lam2026metadiffusion} steer towards generating diverse samples, \citet{maddipatla2025inverse} steer according to experimental observations, \citet{saravanan2026diffbed} steer to increase expected information gain, and  \citet{sani2026_mmdsteering} guide with MMD gradients. However, as in the pointwise-reward case, such approaches are largely heuristic in nature and it is not clear precisely what distribution these procedures target. Unlike in the pointwise-reward setting, a theoretically-grounded framework and approach for \textit{distributional} inference-time steering of diffusion models is currently lacking.

\paragraph{Contributions}
In this work, we address this gap by reconciling these two currently distinct literatures through a \textit{mean-field framework}, allowing us to place distributional steering on a similarly principled footing to the pointwise case. In particular, we make the following contributions.
\begin{itemize}[leftmargin=*, align=left]
    \item We formalise distribution-level steering \eqref{eq:distributional_KL_reg_objective} as aiming to sample from a tilted distribution for a measure-defined reward function $\cR$, in which the tilt now depends on the measure itself. This gives a distributional analogue of the reward-tilted targets used in the pointwise case.
    \item Using this mean-field framework, we derive weighted McKean-Vlasov dynamics whose marginals correctly track the desired tilted path. From this, we develop a novel weighted interacting-particle sampling procedure that enables principled distribution-level steering.
    \item We show that our framework recovers standard pointwise reward steering as a special case, and provides a principled counterpart to existing gradient-steering methods for distributional control. Empirically, we validate that our procedure correctly targets the tilted distribution in verifiable low-dimensional settings, and examine its behaviour on higher-dimensional protein conformation tasks relative to standard gradient-steering approaches.
\end{itemize}

\section{Background: Inference-Time Steering for Pointwise Rewards}
\label{sec:pointwise background}
In this work, we will largely follow the presentation of \citet{sabour2025FMTT}. To begin, let us consider a flow-based model given by the dynamics
\begin{equation}
\label{eq:base model}
    \rmd X_t = \big[ b_t(X_t) + \eps_t s_t(X_t) \big] \rmd t + \sqrt{2 \eps_t} \, \rmd B_t.
\end{equation}
Here, $\eps_t$ is an arbitrary noise schedule, and $s_t$ is the score function which can be obtained from a transformation of the vector field $b_t$ (see \citet{albergo2023_unifying}, for example). These dynamics trace out a path of marginals $p_t$ which interpolates from $p_0$ to the data distribution $p_1$.

Given a reward $r$ defined on the generated samples $x_1$, inference-time steering methods aim to modify the generation procedure to instead sample from the \textit{reward-tilted} distribution, which is the maximiser of the KL-regularised objective \eqref{eq:pointwise_KL_reg_objective},
\begin{equation}
\label{eq:pointwise reward tilt}
\mu^*(x) \propto p_1(x) e^{r(x)}.
\end{equation}
We remark that inference-time steering differs from the fine-tuning paradigm \citep{clark2024draft, xu2023imagereward, domingo-enrich2025adjoint}, as there is no additional training. Here, the dynamics are modified for the currently-chosen reward function $r$, and different choices of $r$ can be used without having to perform extra training.

\paragraph{Gradient-Steered Dynamics}
The aim of inference-time steering methods \citep{singhal2025fks, skreta2025fkc, he2026_RNE, sabour2025FMTT} is to `push' the particles towards areas of higher reward during the generation, while performing SMC-style reweighting of the particles to ensure they trace out a prescribed path of marginal distributions from $p_0$ to $\mu^*$.
To do so, one chooses a time-dependent reward function $r_t$ which satisfies $r_0=0$ and $r_1 = r$; we aim to trace out the \textit{tilted} path of marginals $\mu_t^*(x) \propto p_t(x)e^{r_t(x)}$, which will interpolate from $p_0$ to the target tilted distribution $\mu^* \propto p_1(x)e^{r(x)}$.
A common approach is then to modify the dynamics by incorporating a reward-gradient steering term,
\begin{equation}\label{eq:modified dynamics}
    \rmd X_t = \big[ b_t(X_t) + \eps_t s_t(X_t) + \eps_t \nabla r_t(X_t) \big] \rmd t + \sqrt{2 \eps_t} \rmd B_t.
\end{equation}
As the original reward function $r$ is defined on data, a natural choice is 
\begin{equation}
    \label{eq:pointwise_r_t}
    r_t(x_t) = t \, r( \hat{x}_1(t, x_t)),
\end{equation}
where $\hat{x}_1(t, x_t) $ is an estimate of the terminal point $x_1$ given the current point $(t, x_t)$ in the trajectory. In this work, we will consider using a reward construction that utilises the denoiser estimate, though we note that this framework is suitable for any such choice of $r_t$ (see \citet{sabour2025FMTT}, for example).

\paragraph{Corrective Feynman-Kac Reweighting}
While the additional steering term pushes the particles towards areas of higher reward, the dynamics \eqref{eq:modified dynamics} do not in general follow the desired tilted marginal path $\mu^*_t$. To correct for the discrepancy, it is necessary to adjust the measure using particle weights, which can then be used in an SMC-style resampling scheme to target $\mu^*$ in a theoretically-principled manner. The log-weights $A_t^i$ evolve according to the dynamics
\begin{equation}
    \label{eq:FMTT_weights}
    \rmd A_t = \Big[ b_t(X_t) \cdot \nabla r_t(X_t) + \partial_t r_t(X_t) \Big] \, \rmd t.
\end{equation}
Note that one can incorporate additional `pushing' terms rather than only weighting by $\eps_t$ in \eqref{eq:modified dynamics}, though this would introduce additional divergence terms in the weight dynamics (see \citet{sabour2025FMTT}, for example); for simplicity, we therefore consider the steering outlined above in our presentation.
\section{A Mean-Field Framework for Inference-Time Distributional Control}

\label{sec:mean_field_framework}

\subsection{Distributional-Reward Measure Tilting}
\label{sec:dist tilt}

Consider now that we want to tilt towards a desirable \textit{distributional} property of the generated measure, quantified by a measure-dependent reward function $\cR: \cP(\sR^d) \rightarrow \sR$ defined on the terminal distribution $\mu_1$. We can formalise this as aiming to sample from
\begin{equation}
    \mu^* = \argmax_\mu \big\{ \cR(\mu) - \KL( \mu \| p_1)  \big\},
\end{equation}
assuming for the moment that the maximiser is unique. 
To understand the properties of $\mu^*$, let us introduce the functional derivative $\Psi(x, \mu) := \frac{\delta \cR}{\delta \mu}(x, \mu)$, which quantifies the change of the reward $\mathcal{R}$ when the measure $\mu$ is perturbed (for a rigorous definition and examples, see Appendix \ref{app:fun derivatives}). Using this object, we can characterise $\mu^*$ as an exponential tilt, mirroring \eqref{eq:pointwise reward tilt} for pointwise rewards.
\begin{proposition}[First-order optimality condition]
\label{prop:mean field tilt}
Assuming mild regularity properties and concavity of $\cR$, the maximiser in \eqref{eq:distributional_KL_reg_objective} is unique and satisfies
\begin{equation}
\label{eq:distributional_tilt_target}
    \mu^*(\rmd x) = \frac{1}{Z} e^{\Psi(x, \mu^*)} \, p_1(x) \, \rmd x,
\end{equation}
for normalising constant $Z = \int_{\mathbb{R}^d} e^{\Psi(x, \mu^*)} \, p_1(x) \, \rmd x$.
\end{proposition}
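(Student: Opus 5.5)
We write $F(\mu) := \cR(\mu) - \KL(\mu\|p_1)$ and work on the convex set of probability measures $\mu \ll p_1$ with $\KL(\mu\|p_1)<\infty$. The argument has three steps: existence of a maximiser, uniqueness, and a first-order variation along convex perturbations that yields the tilt \eqref{eq:distributional_tilt_target}.

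\textbf{Existence.} The relative entropy $\KL(\cdot\|p_1)$ is convex, lower semicontinuous for the weak topology, and has weakly compact sublevel sets. We impose as part of the ``mild regularity'' that $\cR$ is upper semicontinuous and bounded above, or at least grows slower than $\KL$ on sublevel sets. Then $F$ is upper semicontinuous. Any maximising sequence eventually lies in a sublevel set of $\KL$, so it is tight, and a maximiser exists.

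\textbf{Uniqueness.} $\KL(\cdot\|p_1)$ is strictly convex on measures with finite entropy, and $\cR$ is concave. Hence $F$ is strictly concave, so any two distinct maximisers would have a strictly better midpoint. This gives uniqueness.

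\textbf{First-order condition.} Fix the maximiser $\mu^*$ and any admissible $\nu$ with $\KL(\nu\|p_1)<\infty$, and set $\mu_\epsilon = \mu^* + \epsilon(\nu-\mu^*)$ for $\epsilon\in[0,1]$.

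1. By the definition of the functional derivative in Appendix~\ref{app:fun derivatives},
\[
\tfrac{d}{d\epsilon}\cR(\mu_\epsilon)\big|_{0^+}=\int \Psi(x,\mu^*)\,(\nu-\mu^*)(\rmd x).
\]
2. A direct computation gives
\[
\tfrac{d}{d\epsilon}\KL(\mu_\epsilon\|p_1)\big|_{0^+}=\int \log\tfrac{\rmd\mu^*}{\rmd p_1}\,\rmd(\nu-\mu^*).
\]
3. Optimality gives $\tfrac{d}{d\epsilon}F(\mu_\epsilon)|_{0^+}\le 0$ for every such $\nu$, that is,
\[
\int\Big(\Psi(x,\mu^*)-\log\tfrac{\rmd\mu^*}{\rmd p_1}(x)\Big)(\nu-\mu^*)(\rmd x)\le 0 .
\]
4. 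Taking $\nu$ to be normalised restrictions of $p_1$ to small sets shows that the function $\Psi-\log\frac{\rmd\mu^*}{\rmd p_1}$ is $\mu^*$-a.e. equal to a constant $c$, and is $\le c$ wherever $\mu^*$ has no mass.
5. To exclude the second case, we show $\mu^*$ is equivalent to $p_1$. If $\mu^*$ vanished on a set $A$ with $p_1(A)>0$, perturb towards $\nu=p_1(\cdot\mid A)$. The entropy derivative is then $-\infty$, since $\frac{d}{d\epsilon}[\epsilon\log\epsilon]\to-\infty$, while the $\cR$-derivative stays finite. This contradicts optimality.
6. Hence $\log\frac{\rmd\mu^*}{\rmd p_1}=\Psi(\cdot,\mu^*)-c$ holds $p_1$-a.e. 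Exponentiating gives \eqref{eq:distributional_tilt_target}, and normalising identifies $e^{c}=Z$.

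Conversely, concavity of $F$ shows that any measure satisfying \eqref{eq:distributional_tilt_target} satisfies the first-order condition and is therefore a maximiser. This is consistent with uniqueness.

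The main obstacle is justifying the differentiation in steps 1–2. For $\cR$, we need $\Psi(\cdot,\mu^*)$ to be integrable against $\nu$ and $\mu^*$, which we put into the regularity assumptions. For $\KL$, we need to exchange derivative and integral for $\int \phi(\rmd\mu_\epsilon/\rmd p_1)\,\rmd p_1$ with $\phi(u)=u\log u$. We plan to handle this by monotone convergence on the convex difference quotients of $\phi$, which are monotone in $\epsilon$. This monotonicity is also what makes the $-\infty$ argument in step 5 rigorous.
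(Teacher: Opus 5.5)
Your proposal is correct and follows the same route as the paper: existence from weak compactness of $\KL$ sublevel sets together with upper semicontinuity of $\cR$, uniqueness from strict concavity, and the tilt obtained from the first variation along $\mu_\epsilon=(1-\epsilon)\mu^*+\epsilon\nu$. Your version is more careful than the paper's, which simply asserts that the derivative vanishes and concludes constancy because $\nu$ is arbitrary; your one-sided inequality, the $\epsilon\log\epsilon$ argument giving $\mu^*\sim p_1$, and the monotone-convergence justification of the entropy derivative supply exactly the steps the paper glosses over.
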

The proof, essentially setting the functional derivative of the objective $\cR(\mu) - \KL( \mu \| p_1)$ to zero, can be found in Appendix \ref{app:proof mean field tilt}. 
In contrast to \eqref{eq:pointwise reward tilt}, the tilt \eqref{eq:distributional_tilt_target} is \textit{implicit}: the target measure $\mu^*$ appears on both sides of the equation, and the tilting potential $\Psi(x,\mu^*)$ is itself measure-dependent. \Cref{prop:mean field tilt} therefore places the problem in a \textit{mean-field regime}: the optimality condition is a self-consistency relation, in which the target measure is characterised by a potential that depends on the measure itself.

As in the standard case from Section \ref{sec:pointwise background}, let us fix a family of measure-defined reward functions $\cR_t: \cP(\sR^d) \rightarrow \sR$ interpolating from $\cR_0 = 0$ to $\cR_1 = \cR$. Extending \eqref{eq:distributional_tilt_target}, we can define the curve of distributions  
\begin{equation}
\label{eq:targets}
\mu_t^*(\rmd x) = \frac{1}{Z_t} e^{\Psi_t(x, \mu_t^*)} \, p_t(x) \, \rmd x
, \qquad t \in [0,1],
\end{equation}
tracing from $p_0$ to $\mu_1^*$.
As in \eqref{eq:distributional_KL_reg_objective}, the measures $\mu_t^*$ are defined implicitly and self-consistently through the corresponding measure-dependent potentials $\Psi_t(x,\mu_t)$, similarly defined as functional derivatives of the time-dependent rewards $\mathcal{R}_t$. 
We note that these intermediate measures $\mu_t^*$ satisfy analogous variational principles, namely
\(
    \mu_t^* = \argmax_\mu \big\{ \cR_t(\mu) - \KL( \mu  \| p_t)  \big\}.
\)
Following \eqref{eq:pointwise_r_t}, in this work we will take $\cR_t = t \, \cR(\hat x_1(\cdot, t) \# \mu)$, using the denoiser estimate $\hat x_1$.

\subsection{Idealised Dynamics}

\paragraph{Distributionally-Steered Dynamics}
We will use this mean-field framework to develop a probabilistically-principled inference-time steering procedure that traces out this distributional path $\mu_t^*$ during the generation process.
As in the standard setting described in \Cref{sec:pointwise background}, it is again intuitive  to include a gradient-steering term to `push' the generated distribution in the direction of higher reward during the inference procedure. To do so, one can consider the modified dynamics,
\begin{align}
    \rmd X_t &= \big[ b_t(X_t) + \eps_t s_t(X_t) + \eps_t \nabla_x \Psi_t( X_t; \mu_t) \big] \rmd t \notag \\
    & \qquad \qquad + \sqrt{2 \eps_t} \, \rmd B_t.
\end{align}
In fact, such gradient-steering resembles many existing works for batch-level reward steering that incorporate gradient terms into the inference dynamics, for example for repulsive potentials \citep{corso2024particleguidance, lam2026metadiffusion}, ensemble likelihoods given experimental data \citep{maddipatla2025inverse}, expected information gain \citep{saravanan2026diffbed}, and maximum mean discrepancy to a reference dataset \citep{sani2026_mmdsteering}.

\paragraph{Corrective Mean-Field Reweighting}
Just as in the standard case, merely incorporating a gradient steering term into the dynamics does not in general follow the target marginals $\mu_t^*$, nor come with any strong theoretical understanding of the generated distribution. It is therefore again necessary to incorporate a Feynman-Kac style reweighting scheme to correct for this discrepancy. This setting differs from the standard pointwise case, as the particles now \textit{interact} during the inference dynamics due to the dependence of the steering term on the evolving distribution. Fortunately, by appealing to our mean-field framework we can derive the corrective potentials under this interacting particle behaviour.
This yields our main result, in which we construct mean-field type, or McKean-Vlasov, dynamics whose coefficients are chosen so that the corresponding law follows the desired path $\mu_t^*$.

\begin{restatable}[\textbf{Weighted McKean-Vlasov Dynamics}]
{theorem}{mfdynamics}
\label{th:mf_weighted_dynamics}
    Consider the dynamics
\begin{subequations}
\label{eq:McKean}
\begin{align}
\label{eq:X McKean}
\rmd X_t &= \big[ b_t(X_t) + \eps_t s_t(X_t) + \eps_t \nabla_x \Psi_t( X_t; \mu_t) \big] \rmd t \notag \\
& \qquad \qquad + \sqrt{2 \eps_t} \, \rmd B_t, ~X_0 \sim p_0 \\
\rmd A_t &= \big[ b_t(X_t) \cdot \nabla_x \Psi_t( X_t; \mu_t) + \dot{\Psi}_t(X_t) \big] \rmd t, ~ A_0 = 0 
\end{align}
\end{subequations}
with the weighted measure $\mu_t \propto \mathrm{Law}(X_t) e^{A_t}$,
and $\dot{\Psi}_t(x) :=  \tfrac{\rmd}{\rmd t} \Psi_t( x; \mu_t)$. Then, under mild conditions, $\mu_t = \mu_t^*$, for all $t \in [0,1]$.
\end{restatable}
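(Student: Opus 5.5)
The plan is to compare two curves of unnormalised densities: the weighted law produced by the dynamics, and the target curve from \eqref{eq:targets}. We show that both solve the same linear PDE with the same initial condition. The PDE is linear once the measure argument of $\Psi_t$ is frozen along the solution, so uniqueness then forces the two curves to coincide.

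First, we write the forward equation for the weighted law. Let $\rho_t$ be the unnormalised weighted density, defined by $\int f\,\rmd\rho_t = \sE[f(X_t)e^{A_t}]$. Apply It\^o's formula to $f(X_t)e^{A_t}$, with $A_t$ of bounded variation, and treat the drift coefficient $v_t(x) := b_t + \eps_t s_t + \eps_t\nabla\Psi_t(\cdot;\mu_t)$ as a given time-dependent field, where $\mu_t = \rho_t/\rho_t(\sR^d)$. This gives
\begin{equation*}
\partial_t\rho_t = -\nabla\cdot(\rho_t v_t) + \eps_t\Delta\rho_t + g_t\,\rho_t, \qquad g_t := b_t\cdot\nabla\Psi_t(\cdot;\mu_t) + \dot\Psi_t,
\end{equation*}
with $\rho_0 = p_0$.

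Second, we check that the target curve solves this equation. Set $\tilde\rho_t := p_t e^{\Psi_t(\cdot;\mu_t^*)}$ and use the base Fokker--Planck equation $\partial_t p_t = -\nabla\cdot(p_t(b_t+\eps_t s_t)) + \eps_t\Delta p_t$. Since $s_t = \nabla\log p_t$, this reduces to $\partial_t p_t = -\nabla\cdot(p_t b_t)$. Hence
\begin{equation*}
\partial_t\tilde\rho_t = e^{\Psi_t}\partial_t p_t + \tilde\rho_t\,\dot\Psi_t^*, \qquad e^{\Psi_t}\partial_t p_t = -\nabla\cdot(\tilde\rho_t b_t) + \tilde\rho_t\, b_t\cdot\nabla\Psi_t .
\end{equation*}
Here $\dot\Psi^*_t$ is the total time derivative along $\mu^*_t$. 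It remains to check that the diffusion part cancels:
\begin{equation*}
-\nabla\cdot\bigl(\tilde\rho_t\,\eps_t(s_t+\nabla\Psi_t)\bigr) + \eps_t\Delta\tilde\rho_t = 0 .
\end{equation*}
This holds because $\nabla\log\tilde\rho_t = s_t + \nabla\Psi_t$. So $\tilde\rho_t$ satisfies the same equation with $\mu_t$ replaced by $\tilde\rho_t/\tilde\rho_t(\sR^d) = \mu_t^*$. At $t=0$ we have $\Psi_0 = 0$ because $\cR_0 = 0$, so $\tilde\rho_0 = p_0 = \rho_0$.

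Third, and this is the main obstacle, we must conclude $\rho_t = \tilde\rho_t$ despite the nonlinearity: the coefficients depend on the normalised measure, and $\dot\Psi_t$ involves $\tfrac{\rmd}{\rmd t}\mu_t$. I would handle this by a fixed-point/uniqueness argument for the nonlinear McKean--Vlasov system, under assumed regularity: $\Psi_t$ and $\nabla\Psi_t$ are Lipschitz in $\mu$ (say in a weighted total variation or Wasserstein metric) and bounded, and $\partial_t$ of $\Psi_t(x;\mu_t)$ is well defined via the chain rule $\partial_t\Psi_t + \int\frac{\delta\Psi_t}{\delta\mu}(x,y)\,\partial_t\mu_t(\rmd y)$. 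Concretely, the procedure is as follows. Freeze the measure curve $\mu^*$ in the coefficients. The resulting linear equation has the unique solution $\tilde\rho$, by standard uniqueness for linear Fokker--Planck equations with a bounded potential term. This shows that $\mu^*$ is a fixed point of the map taking a measure curve to the normalised law. Then a Gr\"onwall estimate on the distance between $\mu_t$ and $\mu^*_t$ shows that this fixed point is the unique solution, so $\mu_t = \mu^*_t$. Normalisation is harmless: constants multiplying $\rho$ do not affect the linear structure, and $\mu_t$ only sees ratios. These well-posedness assumptions are exactly the ``mild conditions'' invoked in the statement.
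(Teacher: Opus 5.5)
Your argument is essentially the paper's. It\^o's formula gives the Feynman--Kac equation for the weighted law, the $\eps_t$-terms cancel on the target because $\nabla\log\mu_t^* = s_t + \nabla_x\Psi_t(\cdot;\mu_t^*)$, and uniqueness closes the argument; your only real deviation is that you verify the \emph{unnormalised} equation for $\tilde\rho_t = Z_t\mu_t^*$, which neatly bypasses the $\partial_t\log Z_t$ computation of Proposition~\ref{prop:target evolution} that the paper needs for the normalised equation.

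One caution: your Gr\"onwall uniqueness sketch does not close under the regularity you list. The reason is that $g_t$ depends on $\partial_t\mu_t$ through $\dot\Psi_t$, while $\partial_t\mu_t$ depends back on $g_t$. You must first substitute the Feynman--Kac equation into the chain rule to get a linear Fredholm equation for $\dot\Psi_t$ in terms of $\mu_t$ alone (as in Proposition~\ref{prop:time derivative}), and assume its solution operator is bounded and Lipschitz in $\mu$. This is exactly the subtlety the paper flags before simply assuming well-posedness (Assumption~\ref{ass:well posed}).
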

Note that these dynamics strongly resemble the pointwise setting \eqref{eq:FMTT_weights}. However, a key difference is that now the time-derivative term $\partial_t r(\cdot)$ has been replaced with the term $\dot \Psi_t(\cdot) = \tfrac{\rmd}{\rmd t} \Psi_t(\cdot, \mu_t)$; here, the position $x$ is kept fixed but the time-dependence now arises through both $\Psi_t$ and the measure evolution $\mu_t$.
Again in contrast to the standard case, this induces \textit{implicit} behaviour in the log-weight updates—their dynamics involve the change in the measure $\mu_t$, but the measure evolution is itself dependent on the change in the weights.
We remark that the term $\dot \Psi_t$ can be equivalently characterised as solving the following implicit expression, which we will later use in our inference procedure.

\begin{restatable}{proposition}{timederivative}
\label{prop:time derivative}
    Define the second variation of the reward functional, $\Phi_t(x,y,\mu) = \frac{\delta^2 \cR_t}{\delta \mu^2}(x,y,\mu)$,
    and a mean-centred version $\tPhi_t(x,y,\mu) = \Phi_t(x,y,\mu) - \int \Phi_t(x,z,\mu) \, \mu(\rmd z)$. Then, under the target flow \eqref{eq:targets}, the time derivatives $\dot \Psi_t$ satisfy the following implicit equality:
\begin{align}
\label{eq:implicit_eq_for_dotPsi}
\dot \Psi&_t(x) = \partial_t \Psi_t(x, \mu^*_t) + \int_{\mathbb{R}^d} \Big(\tPhi_t(x,y,\mu^*_t) \dot \Psi_t(y) \notag \\
&+ b_t(y) \cdot \nabla_y \tPhi_t(x,y,\mu^*_t) \notag \\ 
& + \tPhi_t(x,y,\mu^*_t) \, b_t(y) \cdot \nabla_y \Psi_t(y,\mu^*_t) \Big) \; \rmd \mu^*_t(y).
    \end{align}
\end{restatable}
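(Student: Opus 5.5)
The plan is to compute $\dot\Psi_t(x)=\tfrac{\rmd}{\rmd t}\Psi_t(x,\mu_t^*)$ with the chain rule for functional derivatives, and then to evaluate the resulting integral against $\partial_t\mu_t^*$ using the weighted McKean--Vlasov evolution from \Cref{th:mf_weighted_dynamics}, which identifies $\mu_t=\mu_t^*$.

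\textbf{Step 1 (chain rule).} Assume enough regularity for $\Psi_t$ to be differentiable along the curve $t\mapsto\mu_t^*$. By the definition of the second variation (Appendix \ref{app:fun derivatives}),
\[
\dot\Psi_t(x)=\partial_t\Psi_t(x,\mu_t^*)+\int \Phi_t(x,y,\mu_t^*)\,\partial_t\mu_t^*(\rmd y).
\]
Each $\mu_t^*$ is a probability measure, so $\partial_t\mu_t^*$ has zero total mass. Hence we may replace $\Phi_t$ by the mean-centred kernel $\tPhi_t$ without changing the integral. This also removes the usual additive-constant ambiguity in the functional derivative. Note that $\nabla_y\tPhi_t=\nabla_y\Phi_t$ and that $\int\tPhi_t(x,y,\mu_t^*)\,\mu_t^*(\rmd y)=0$.

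\textbf{Step 2 (weak evolution of $\mu_t^*$).} Fix $x$ and set $f(y)=\tPhi_t(x,y,\mu_t^*)$. The measure $\mu_t\propto \mathrm{Law}(X_t)e^{A_t}$ evolves by the generator of \eqref{eq:X McKean},
\[
\mathcal L_t f=(b_t+\eps_t s_t+\eps_t\nabla\Psi_t)\cdot\nabla f+\eps_t\Delta f,
\]
together with the log-weight rate $g_t=b_t\cdot\nabla\Psi_t+\dot\Psi_t$. It therefore satisfies the normalised Feynman--Kac equation
\[
\int f\,\partial_t\mu_t=\int(\mathcal L_t f+f g_t)\,\rmd\mu_t-\int f\,\rmd\mu_t\int g_t\,\rmd\mu_t .
\]
This can be checked by applying It\^o's formula to $f(X_t)e^{A_t}$ and taking the ratio with $\sE[e^{A_t}]$. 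With $\mu_t=\mu_t^*$ from \Cref{th:mf_weighted_dynamics}, the last term vanishes because $f$ is $\mu_t^*$-centred.

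\textbf{Step 3 (cancellation of the noise terms).} This is the key step. By \eqref{eq:targets}, $\nabla\log\mu_t^*=s_t+\nabla\Psi_t(\cdot,\mu_t^*)$. Integrating by parts then gives
\[
\int\eps_t\big[(s_t+\nabla\Psi_t)\cdot\nabla f+\Delta f\big]\,\rmd\mu_t^*=\eps_t\int\nabla\cdot\big(\mu_t^*\nabla f\big)\,\rmd y=0,
\]
assuming enough decay at infinity. Only the drift contribution $b_t(y)\cdot\nabla_y\tPhi_t$ survives, together with $\tPhi_t\,(b_t\cdot\nabla_y\Psi_t+\dot\Psi_t)$. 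Substituting these into Step 1 yields exactly \eqref{eq:implicit_eq_for_dotPsi}.

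The main obstacle is a potential circularity. \Cref{th:mf_weighted_dynamics} itself uses $\dot\Psi_t$, so invoking it to derive an equation for $\dot\Psi_t$ must be justified. The statement only needs the fact that $\mu_t^*$ satisfies the weighted evolution equation with rate $g_t$ built from its own $\dot\Psi_t$, and that fact follows from the proof of the theorem. If this is deemed insufficient, the fallback is a direct argument. Differentiate $\log\mu_t^*=\log p_t+\Psi_t(\cdot,\mu_t^*)-\log Z_t$ in time, and use the Fokker--Planck equation of $p_t$ under \eqref{eq:base model}, namely $\partial_t p_t=-\nabla\cdot(b_tp_t)$ (the $\eps_t$ terms cancel). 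This gives $\partial_t\mu_t^*=\mu_t^*\big(-\nabla\cdot b_t-b_t\cdot s_t+\dot\Psi_t-c_t\big)$ for a normalising constant $c_t$. Integrating $\tPhi_t$ against this expression and integrating by parts on the $b_t$ terms, with $s_t=\nabla\log\mu_t^*-\nabla\Psi_t$, reproduces the same three integrands. Regularity and integrability of $\tPhi_t$, $\Psi_t$ and $b_t$ are taken as part of the ``mild conditions''.
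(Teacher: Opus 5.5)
Your proof is correct, and your fallback is the paper's argument. The paper tests the replicator form $\partial_t\mu_t^* = \mu_t^*\bigl(h_t - \mu_t^*(h_t)\bigr)$ from Proposition~\ref{prop:target evolution} against $\tPhi_t(x,\cdot,\mu_t^*)$. It then removes the $\nabla\cdot b_t$ term by one integration by parts using $\nabla\log\mu_t^* = s_t + \nabla\Psi_t(\cdot,\mu_t^*)$, after which the score terms cancel.

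Your main route is a mild variant. You test the Feynman--Kac form $\partial_t\mu_t^* = (L_t^{\mu_t^*})^\dagger\mu_t^* + \bigl(g_t - \mu_t^*(g_t)\bigr)\mu_t^*$ instead. There $b_t\cdot\nabla_y\tPhi_t$ appears directly by duality, and the integration by parts goes into cancelling the $\eps_t$-terms. The two forms are interchangeable via the pointwise identity $(L_t^{\mu_t^*})^\dagger\mu_t^* + g_t\mu_t^* = h_t\mu_t^*$, which is established inside the proof of Theorem~\ref{th:mf_weighted_dynamics}.

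Your circularity worry is resolved as you suggest. That identity follows from Proposition~\ref{prop:target evolution} alone, not from the well-posedness part of the theorem. As the paper notes, the well-posedness part itself presupposes $\dot\Psi_t$ defined through \eqref{eq:implicit_eq_for_dotPsi}. So you should cite that verification rather than the theorem's conclusion $\mu_t=\mu_t^*$.

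Your route buys an interpretation: \eqref{eq:implicit_eq_for_dotPsi} is the weak form of the same Feynman--Kac equation that drives the weighted particle system, tested against $\tPhi_t(x,\cdot)$. It also makes the independence from $\eps_t$ and the absence of $\nabla\cdot b_t$ visible from the start. The paper's route is more economical: it needs only the continuity equation for $p_t$ and never touches the steered dynamics.
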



\subsection{Method}
\label{sec:method}

\begin{algorithm*}[t]
\captionsetup{labelfont=bf}
\caption{Mean-Field Inference-Time Steering for Diffusion Models (Single Batch)}
\label{alg:interacting_particle_steering}

\begin{minipage}{\linewidth}
\setlist[itemize]{nosep, topsep=2pt, partopsep=0pt, parsep=0pt, itemsep=1pt}
\setlength{\abovedisplayskip}{4pt}
\setlength{\belowdisplayskip}{4pt}

\noindent\textbf{Input:}
Number of particles $N$, distributional reward function $\cR$ and corresponding first variation $\Psi_t$, time increment $\delta t$ or alternative time schedule.

\noindent\textbf{Initialisation:} Sample $X_0^{1:N} \sim p_0^{\otimes N}$, initialise log-weights $A_0^i=0$.

\noindent\textbf{For} $t \in \{0, \delta t, 2\delta t, \dots, 1-\delta t\}$:
\begin{itemize}[label=\textbullet, leftmargin=2.5em, labelsep=0.7em]
    \item \textbf{Positional updates:}
    \begin{itemize}[label=$\circ$, leftmargin=2.5em, labelsep=0.7em]
        \item Perform discretisation step for positional updates,
        \[
            X_{t+\delta t}^i = X_t^i + \big[ b_t(X_t^i) + \eps_t s_t(X_t^i) + \eps_t \nabla_x \Psi_t(X_t^i; \hat \mu_t^N) \big]\delta t + \sqrt{2\eps_t}\,\delta B_t^i,
        \]
        where $\hat \mu_t^N = \sum_{j=1}^N w_t^j \delta_{X_t^j}$ for $w_t^j \propto e^{A_t^j}$.
    \end{itemize}

    \item \textbf{Log-weight updates:}
    \begin{itemize}[label=$\circ$, leftmargin=2.5em, labelsep=0.7em]
        \item Compute $\tfrac{\rmd}{\rmd t}\Psi_t(\cdot;\hat \mu_t^N)$ at points $X_t^i$ using \Cref{alg:Psi_solver_implicit} or \Cref{alg:Psi_solver_fixedpoint}.
        \item Compute mean-field FK potentials $g_t^i = b_t(X_t^i) \cdot \nabla_x \Psi_t(X_t^i;\hat \mu_t^N) + \tfrac{\rmd}{\rmd t}\Psi_t(X_t^i;\hat \mu_t^N)$.
        \item Perform discretisation step for log-weight updates,
        \(
            A_{t+\delta t}^i = A_t^i + g_t^i\delta t.
        \)
    \end{itemize}
    \item \textbf{Resample:} Optionally resample according to current weights $w_t^i \propto e^{A_t^i}$.
\end{itemize}
\textbf{Return:} Weighted particles $(X_1^i, w_1^i)$, where $w_1^i \propto e^{A_1^i}$.
\end{minipage}
\end{algorithm*}

To turn the idealised dynamics of \Cref{th:mf_weighted_dynamics} into a finite-particle algorithm, we approximate the McKean-Vlasov system \eqref{eq:McKean} with an ensemble $(X_t^i, w_t^i)_{i=1}^N$, where the weighted empirical measure $\hat\mu_t = \sum_{j=1}^N w_t^j \delta_{X_t^j}$ approximates $\mu_t$. To do so, we need the following terms. 

\paragraph{Gradient-Steering Term $\nabla_x \Psi_t( x; \mu_t)$}
The gradient-steering term $\nabla_x \Psi_t( x; \mu_t)$ in the positional updates can be easily computed with autodifferentiation, evaluated on the current empirical measure $\hat{\mu}_t = \sum_{j=1}^N w_t^j \delta_{X_t^j}$ with weights $w_t^j \propto e^{A_t^j}$ . When one can analytically compute the first variation (as is often the case; see Appendix \ref{app:fun derivatives}), then one can use a finite-sample approximation $\Psi_t(X_t^i, \hat{\mu}_t)$ and then differentiate with respect to the first argument. Alternatively, one could approximate the steering term using $(w_t^i)^{-1} \nabla_{X_t^i} \cR_t(\hat \mu_t)$, by directly evaluating the reward on the empirical measure and differentiating with respect to the positions $X_t^i$ (see Appendix \ref{app:implementation_details}).

\paragraph{Time-Derivative Term $\dot \Psi_t(x)$}
The more challenging term in the dynamics is the time-derivative $\dot \Psi_t(x) = \tfrac{\rmd}{\rmd t} \big( \Psi_t(x, \mu_t) \big)$ due to its implicit behaviour.
We provide two different approaches for computing this term. The first involves solving a linear system based on the implicit expression in \eqref{eq:implicit_eq_for_dotPsi}, which involves only tractable terms, and the second solves for $\dot \Psi_t(x)$ through a lightweight Picard iteration scheme; we describe these in \Cref{alg:Psi_solver_implicit,alg:Psi_solver_fixedpoint} respectively. We remark that the cost of such procedures is low, and the computational cost is often dominated by the network evaluations rather than these lightweight solvers (see Appendix \ref{app:comp_cost}). We found that the choice of solver for $\dot \Psi_t(x)$ had little effect on the empirical performance of the method and we therefore primarily used the fixed-point iterative solver for simplicity, but we remark that the first approach may be useful in certain settings.

\paragraph{SMC Resampling} 
As in the pointwise-reward Feynman-Kac steering approaches \citep{singhal2025fks, skreta2025fkc}, we use an SMC-style resampling scheme to avoid weight degeneracy. Any such scheme could be used; in our experiments we use residual resampling at fixed intervals.

\paragraph{Algorithm}

We outline the inference procedure for a single batch of $N$ particles in \Cref{alg:interacting_particle_steering}, which will approximate the target tilted measure $\mu^*$ as the batch size $N$ goes to infinity.
In practice, the batch size is limited by computational constraints; we therefore also provide a multi-batch procedure in \Cref{alg:interacting_particle_steering_multibatch} in Appendix \ref{app:implementation_details}.

\paragraph{Convergence in $N$}

Under mild assumptions on the dynamics and the $\dot \Psi_t$ solver (see Assumption \ref{ass:particle_convergence}), we can show convergence of the finite-particle system to the idealised dynamics $\eqref{eq:McKean}$. In particular, for $\delta_N$ (defined in \eqref{eq:solver_error}) quantifying the approximation error when solving for $\dot \Psi_t$, we have the following convergence result. 

\begin{restatable}[\textbf{Convergence of the Weighted Particle System}]{theorem}{convergenceresultmain}
\label{thm:weighted_particle_convergence_main}
Under mild assumptions,  
we have that 
\begin{equation}
\label{eq:test_function_qualitative_convergence_main}
    \mathbb E\left[
        \sup_{t\in[0,1]}
        \left|
            \hat\mu_t^N(\phi)-\mu_t^*(\phi)
        \right|
    \right]
    \longrightarrow0,
\end{equation}
 for every $\phi\in C_b^2(\mathbb R^d)$, as $N \rightarrow \infty$.
\end{restatable}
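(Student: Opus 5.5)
We would prove \Cref{thm:weighted_particle_convergence_main} by a synchronous-coupling argument in the style of Sznitman, adapted to the weighted (Feynman--Kac) setting. First, write $\mu_t^*(\phi) = \sE[\phi(\bar X_t) e^{\bar A_t}]/\sE[e^{\bar A_t}]$, where $(\bar X_t,\bar A_t)$ solves the McKean--Vlasov system \eqref{eq:McKean}. This identity is exactly \Cref{th:mf_weighted_dynamics}. Next, construct $N$ i.i.d.\ copies $(\bar X^i_t,\bar A^i_t)$ driven by the same Brownian motions $B^i$ and the same initial conditions $X_0^i$ as the particle system. Finally, split
\begin{equation*}
\bigl|\hat\mu_t^N(\phi)-\mu_t^*(\phi)\bigr| \le \bigl|\hat\mu_t^N(\phi)-\bar\mu_t^N(\phi)\bigr| + \bigl|\bar\mu_t^N(\phi)-\mu_t^*(\phi)\bigr|,
\end{equation*}
where $\bar\mu_t^N=\sum_j \bar w_t^j\delta_{\bar X_t^j}$ with $\bar w^j\propto e^{\bar A^j_t}$ is the self-normalised weighted empirical measure of the i.i.d.\ copies.

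The second term is a pure law-of-large-numbers error for self-normalised importance sampling. Under the assumed boundedness of the potential $g_t=b_t\cdot\nabla\Psi_t+\dot\Psi_t$ on $[0,1]$ (part of Assumption \ref{ass:particle_convergence}), the weights $e^{\bar A_t}$ lie in $[e^{-C},e^{C}]$. The numerator and denominator are then averages of bounded i.i.d.\ terms, so each has error $O(N^{-1/2})$ at fixed $t$. To obtain the supremum over $t$, we would control increments: $\phi\in C_b^2$ and It\^o's formula give a semimartingale decomposition of $\phi(\bar X_t)e^{\bar A_t}$ whose drift and martingale parts are averaged over i.i.d.\ copies. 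Doob's inequality then bounds the supremum of the martingale part by $O(N^{-1/2})$, and the drift part is handled pointwise under the integral.

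The first term is the coupling error. We set $D_t=\frac1N\sum_i \sE\bigl[\sup_{s\le t}(|X_s^i-\bar X_s^i|^2+|A_s^i-\bar A_s^i|^2)\bigr]$. Using Lipschitz continuity of $b_t,s_t$ in $x$, and of $\nabla_x\Psi_t$ and $g_t$ jointly in $(x,\mu)$ with respect to a suitable metric on weighted measures, we bound the difference of coefficients by
\begin{equation*}
C\bigl(|X^i-\bar X^i|+d(\hat\mu^N,\mu^*)\bigr)+\delta_N .
\end{equation*}
The natural metric is a bounded-Lipschitz or Wasserstein-type metric, which is stable under bounded reweighting. We then use the triangle inequality $d(\hat\mu^N,\mu^*)\le d(\hat\mu^N,\bar\mu^N)+d(\bar\mu^N,\mu^*)$. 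The piece $d(\hat\mu^N,\bar\mu^N)$ is controlled by $D_t$: the self-normalised weights are Lipschitz in the log-weights on bounded sets, so position and weight discrepancies combine linearly. The piece $d(\bar\mu^N,\mu^*)$ tends to $0$ in expectation by the LLN for empirical measures, for instance by Fournier--Guillin bounds using moment control from the SDE. Gronwall's inequality then yields
\begin{equation*}
D_1\le C\bigl(\sE\, d(\bar\mu^N,\mu^*)^2+\delta_N^2\bigr)\to 0 .
\end{equation*}
Since $\phi$ is Lipschitz and bounded, this implies that $\sE\sup_t|\hat\mu^N_t(\phi)-\bar\mu^N_t(\phi)|\to0$.

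The main obstacle is the log-weight equation. Its drift $\dot\Psi_t$ is only defined implicitly through \eqref{eq:implicit_eq_for_dotPsi}, in terms of the current measure, so we need a Lipschitz stability estimate for the map $\mu\mapsto\dot\Psi_t(\cdot;\mu)$. Our approach is to view \eqref{eq:implicit_eq_for_dotPsi} as $(I-K_\mu)\dot\Psi=f_\mu$, where $K_\mu$ is the integral operator with kernel $\tPhi_t(x,y,\mu)$. Assuming $\|K_\mu\|<1$ uniformly, or more generally uniform invertibility, the resolvent identity
\begin{equation*}
(I-K_\mu)^{-1}-(I-K_\nu)^{-1}=(I-K_\mu)^{-1}(K_\mu-K_\nu)(I-K_\nu)^{-1}
\end{equation*}
transfers Lipschitz dependence of $\tPhi_t$ and $f_\mu$ on $\mu$ to $\dot\Psi_t$. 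The remaining discrepancy between the particle-level solver and this exact solution is exactly what $\delta_N$ measures.
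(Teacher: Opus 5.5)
Your synchronous-coupling route is genuinely different from the paper's and can be made to work. However, the step where the solver error enters does not go through as written under the paper's hypothesis. In \eqref{eq:solver_error}, $\delta_N$ is an expected, time-integrated, \emph{weight-averaged} $L^1$ error. Moreover, only the exact potential $g_t$ is assumed bounded, not the implemented $\dot\Psi_t^{(N)}$. This has three consequences:
\begin{enumerate}
\item you cannot add $\delta_N$ particle by particle to the coefficient bound;
\item you cannot obtain $\delta_N^2$ from an $L^1$ hypothesis;
\item the log-weights $A_t^i$ need not stay in a bounded set, which is exactly what your ``weights are Lipschitz on bounded sets'' step requires.
\end{enumerate}
The averaged functional $\frac1N\sum_i|A^i_t-\bar A^i_t|^2$ is the wrong measure of weight discrepancy here. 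A low-weight particle with solver error $e^i\equiv -M$ costs only $O(1/N)$ in $\delta_N$ whatever $M$ is, yet it inflates $D_t$ by order $M^2/N$. Conversely, raising a single log-weight by $\log N$ moves $\hat\mu_t^N$ by $O(1)$, while changing $D_t$ by only $(\log N)^2/N$.

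A clean repair is to strip the solver error out of the weights. Let $e_t^i:=\dot\Psi_t^{(N)}(X_t^i)-\dot\Psi_t^{\mathrm{exact}}(X_t^i,\hat\mu_t^N)$, set $\tilde A_t^i:=A_t^i-\int_0^t e_s^i\,\rmd s$, and let $\tilde w_t$ be the corresponding normalised weights. Differentiating both sets of normalised weights and using $|g_t|\le L$ gives, in integrated form,
\begin{equation*}
\tfrac{\rmd}{\rmd t}\|w_t-\tilde w_t\|_1\le 3L\|w_t-\tilde w_t\|_1+2\textstyle\sum_i w_t^i|e_t^i| .
\end{equation*}
Hence $\sE\sup_t\|w_t-\tilde w_t\|_1\le 2e^{3LT}\delta_N$, which is precisely the paper's weighted quantity. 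Since $|\tilde A^i_t|,|\bar A^i_t|\le LT$, all weights involved lie in $[e^{-2LT}/N,e^{2LT}/N]$. The normalised weights are therefore Lipschitz with respect to $\frac1N\sum_i|\tilde A^i-\bar A^i|$.

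Because the noise is additive and shared, $X^i-\bar X^i$ is absolutely continuous. You can therefore run Gronwall pathwise in $L^1$ (no $L^2$ needed) on $\frac1N\sum_i\sup_{s\le t}\bigl(|X_s^i-\bar X_s^i|+|\tilde A_s^i-\bar A_s^i|\bigr)$, with metric $d_{\mathrm{BL}}$. The forcing terms are $\|w_s-\tilde w_s\|_1$ and $d_{\mathrm{BL}}(\bar\mu_s^N,\mu_s^*)$, and the rest of your plan then closes.

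For comparison, the paper never couples to i.i.d.\ copies. It applies It\^o's formula to $\hat\mu_t^N(\phi)$ and splits it into three parts:
\begin{enumerate}
\item the Feynman--Kac drift $\mathcal F_t^\phi(\hat\mu_t^N)$;
\item a weighted solver residual, which is why $\delta_N$ has exactly the weighted form;
\item a martingale whose quadratic variation is controlled by $\sum_i(w_t^i)^2$ (\Cref{lem:S}).
\end{enumerate}
It then proves tightness in $C([0,T],\cP(\sR^d))$ via Jakubowski's criterion and compact containment. Every limit point is identified as a weak solution of \eqref{eq:normalised-fk-short}, hence equal to $\mu^*$ by uniqueness.

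That argument is purely qualitative. It needs only uniqueness of the nonlinear Feynman--Kac equation plus the boundedness and Lipschitz hypotheses, and it absorbs resampling through one extra jump martingale (Corollary~\ref{cor:particle_convergence_resampling}). Resampling would instead break your synchronous coupling and force a re-coupling at each resampling time. Your repaired route buys a quantitative bound, of order $\delta_N$ plus the rate of the weighted law of large numbers, and particle-level propagation of chaos, without any tightness machinery.

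Finally, the Lipschitz stability of $\mu\mapsto\dot\Psi_t(\cdot;\mu)$, which you single out as the main obstacle, is already assumed in item 2 of Assumption~\ref{ass:particle_convergence} (through $g_t$). Your resolvent argument is therefore a sufficient condition for that hypothesis rather than a step the proof needs. If you pursue it, note that $K_\mu-K_\nu$ also changes the integrating measure. You will then additionally need $\dot\Psi_t(\cdot;\nu)$ to be Lipschitz in its argument.
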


\paragraph{Generalising the Pointwise Setting}
We remark that the pointwise-reward setting is recovered from our framework by choosing
$\cR(\mu) = \int r(x) \, \rmd \mu(x)$. In this case, the first variation is $\Psi(x,\mu)=r(x)$, so our mean-field steering procedure recovers the standard procedure from \Cref{sec:pointwise background}.

\section{Experiments}
\label{sec:experiments}

\begin{figure*}[t!]
    \centering

    \begin{minipage}{\linewidth}
        \centering
        \includegraphics[width=0.65\linewidth]{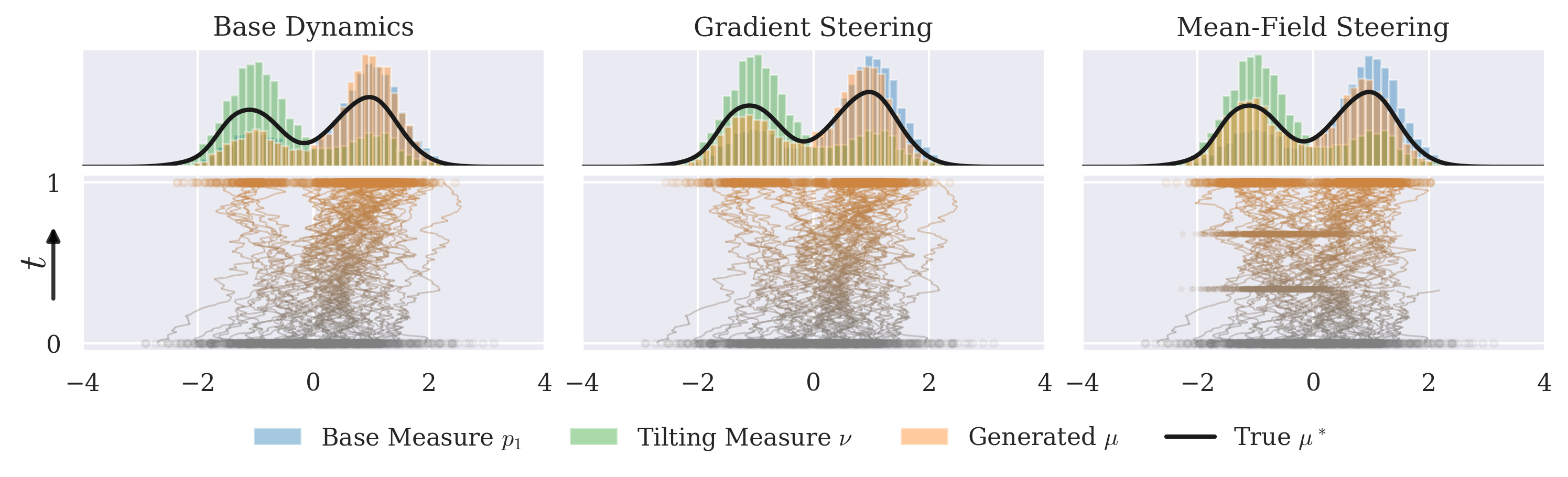}
        \caption{Trajectories and final measures $\mu_1$ for the base, gradient-steered, and mean-field dynamics, alongside an approximation of the true reward-tilted target $\mu^*$ for comparison.}
        \label{fig:trajectories_3_panel}
    \end{minipage}

    \begin{minipage}{\linewidth}
        \centering
        \includegraphics[width=0.65\linewidth]{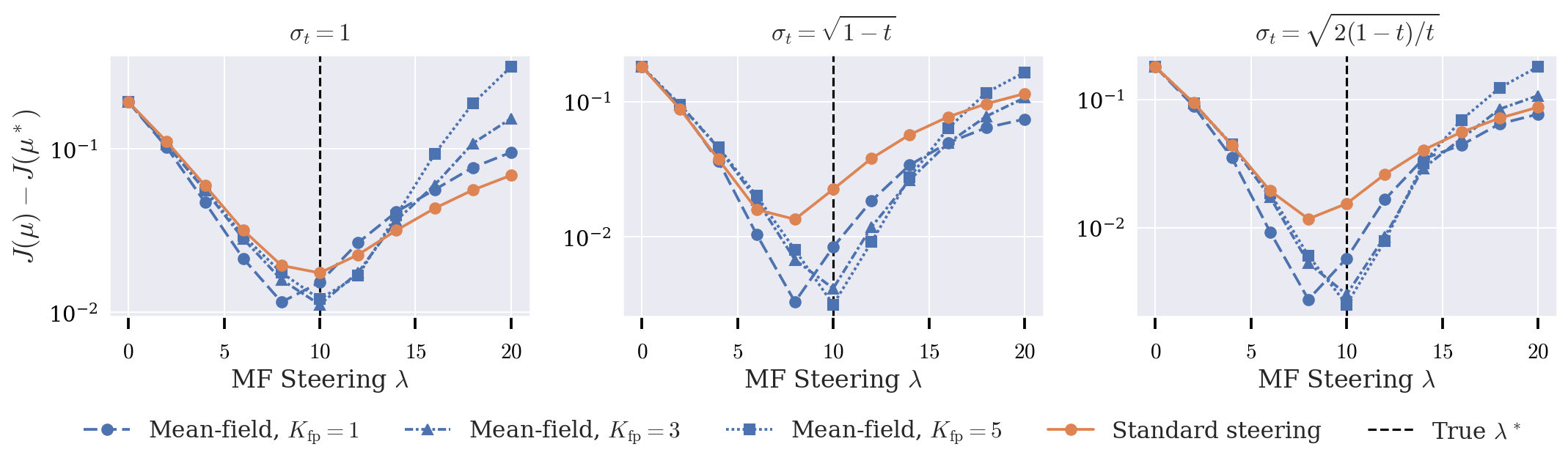}
        \caption{Objective gaps $J(\mu)-J(\mu^*)$ for mean-field sampling and gradient steering in the $\MMD$ example with $\lambda^*=10$. We compare fixed-point iterations $K_{FP}=1,3,5$ for our method against gradient steering with the steering strength varied linearly over $[0,4]$.}
        \label{fig:objective_curves_main}
    \end{minipage}

\end{figure*}

In this section, we empirically examine the behaviour of our proposed framework, aiming to verify that the resulting samples are drawn from the prescribed reward-tilted distribution.
We first consider low-dimensional settings where the reward-tilted target can be tractably computed, allowing us to reliably quantify the sampling accuracy. Following this, we consider larger-scale illustrative case studies on proteins, where we observe similar behaviour in higher-dimensional, realistic settings.

\subsection{Low-dimensional experiments}
\label{sec:low_dim}
We first consider a flow model that samples from a 1-dimensional bimodal Gaussian mixture $p_1$ with means $(-1.0, 1.0)$, both modes with unit variance, and mode weights $(1,3)$ respectively. To construct a distributional reward, we consider tilting the output distribution towards a similar Gaussian mixture $\nu$ but now with respective weights $(3,1)$ using a squared Maximum Mean Discrepancy (MMD) reward (similar to \citet{sani2026_mmdsteering}). We aim to sample from
\begin{equation}
    \label{eq:MMD objective}
    \mu^* = \argmin_\mu \big\{ \KL( \mu \| p_1) + \lambda^* \, \MMD^2( \mu , \nu  )  \big\}.
\end{equation}

\paragraph{Trajectory Visualisation}
In \Cref{fig:trajectories_3_panel}, we qualitatively compare trajectories for the base dynamics, the gradient-steered dynamics (representative of existing approaches such as \citet{corso2024particleguidance, maddipatla2025inverse, lam2026metadiffusion, saravanan2026diffbed, sani2026_mmdsteering}), and our mean-field steering approach. We use a constant noise schedule $\sigma_t=1.0$ and $\lambda^*=10$, and include an approximation to the true tilted solution $\mu^*$ for reference. 
As expected, the base dynamics sample the original measure $p_1.$
We see that standard gradient steering does push the particles towards the tilting measure $\nu$, but does not target $\mu^*$. In contrast, incorporating the mean-field reweighting procedure accurately samples from the tilted target $\mu^*$.

\paragraph{Quantitative Analysis of Optimality}
We quantitatively assess how accurately each sampler targets the tilted distribution $\mu^*$. For the objective $J(\mu)=\KL(\mu\|p_1)+\lambda^* \MMD^2(\mu,\nu)$, we report the optimality gap $J(\mu)-J(\mu^*)$. In \Cref{fig:objective_curves_main}, we compare gradient steering with our mean-field sampler using fixed-point iterations to compute $\dot{\Psi}_t$. Results are shown for three noise schedules: fixed $\sigma_t=1.0$, decaying $\sigma_t=\sqrt{1-t}$, and the memoryless schedule $\sigma_t=\sqrt{2(1-t)/t}$ used in \citet{domingo-enrich2025adjoint}.

In all cases, the objective gap is minimised when using mean-field steering with the correct $\lambda$ value, confirming that our procedure correctly targets the prescribed distribution $\mu^*$. The behaviour is consistent across noise schedules, indicating that the targeted distribution is invariant to these hyperparameter choices. Gradient-steering, by contrast, samples from a measure whose form depends on the steering strength and which does not in general coincide with $\mu^*$. The mean-field procedure thus provides a sampler whose target distribution is explicitly characterised by $\lambda$, rather than being indirectly induced by the choice of steering parameters.

We also note that running $K_{FP}=1$ in the fixed-point $\dot \Psi_t$ solver appears to give a slight bias as expected, but running $K_{FP}=3, 5$ are highly accurate and perform very similarly. We provide full experimental details and additional results, including for different tilting strengths $\lambda$, different measure-defined rewards $\cR$, and a more detailed analysis of the fixed-point solver, in \Cref{app:low_dim_details}.

\subsection{Tilting Protein Conformational Ensembles Towards Experimental Observations}

\begin{figure*}[!t]
    \centering

    \begin{minipage}{0.5\textwidth}
        \centering
        \includegraphics[width=0.6\linewidth]{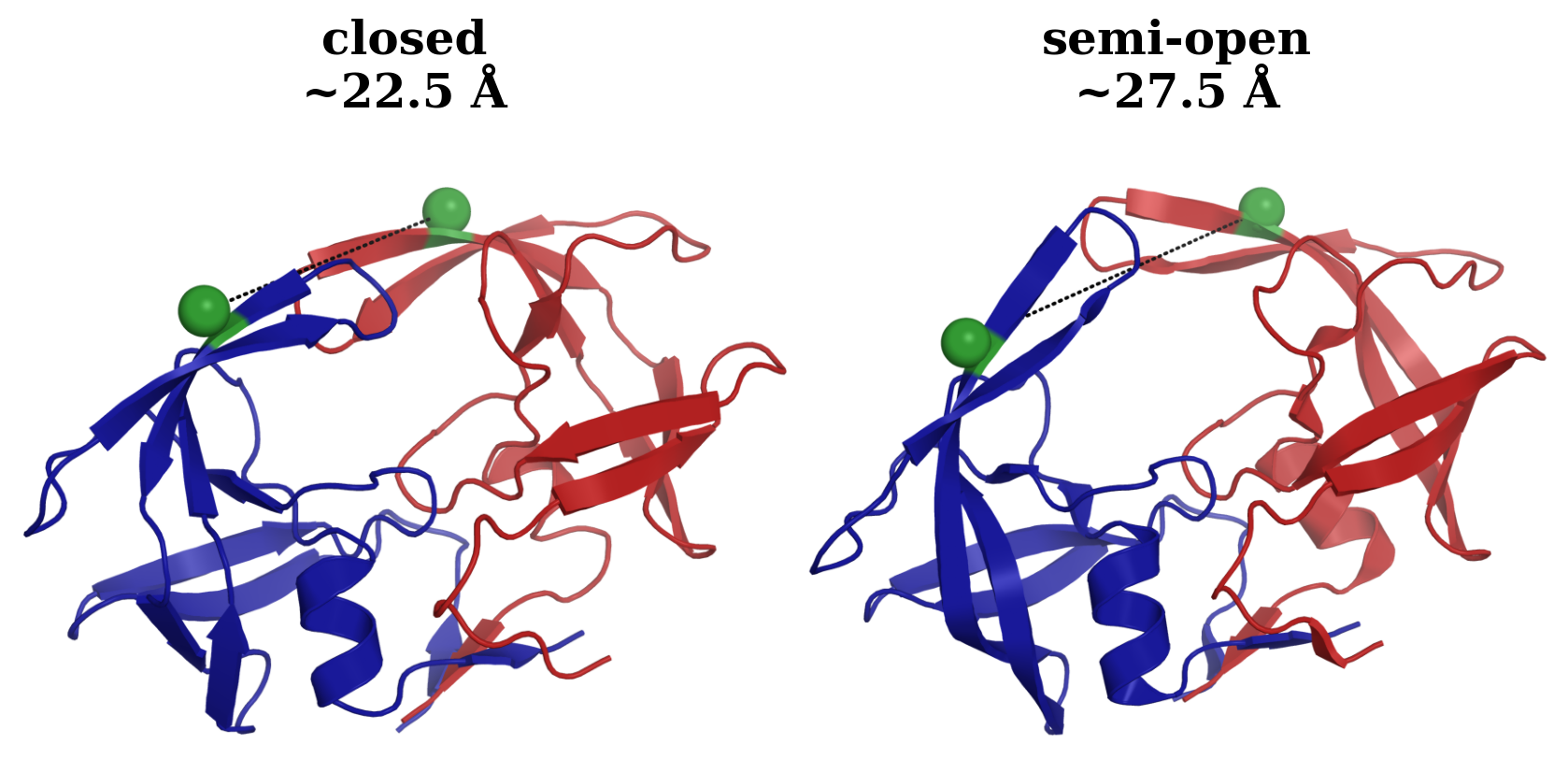}
        \caption{HIV-1 Protease exhibits two main conformations: closed \textit{(left)} and semi-open \textit{(right)}.}
        \label{fig:v6apo_visualisation}
    \end{minipage}
    \hfill
    \begin{minipage}{0.47\textwidth}
        \centering
        \includegraphics[width=0.42\linewidth]{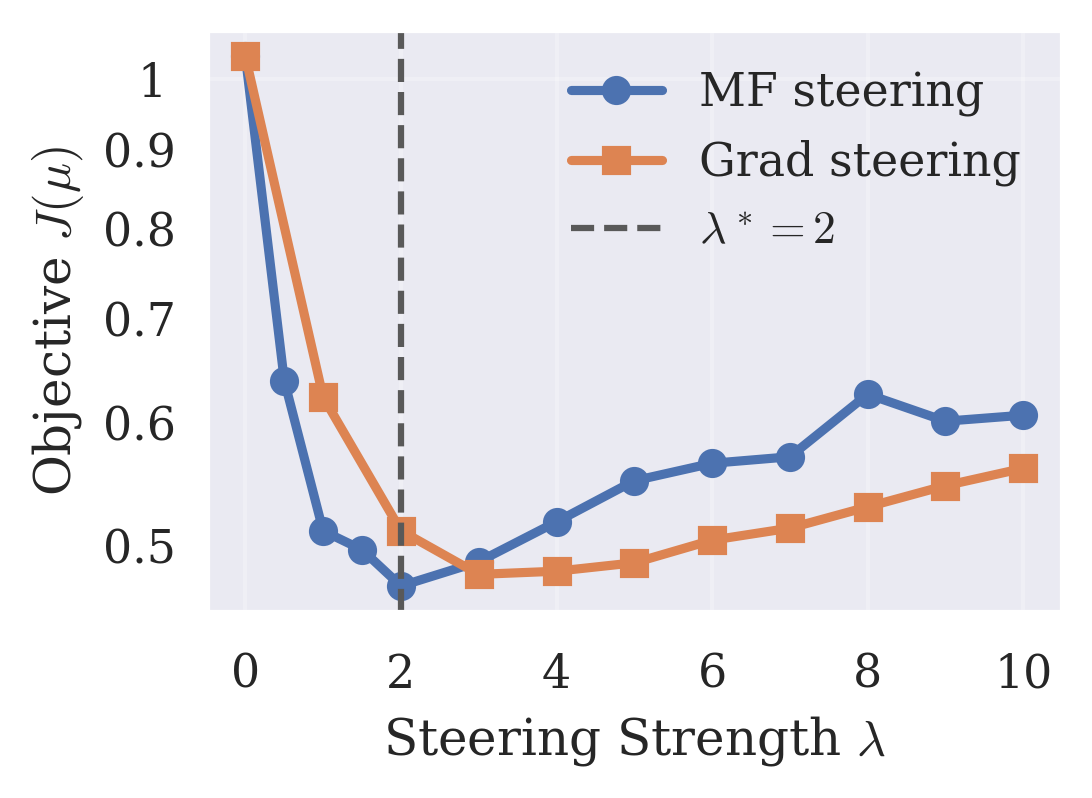}
        \caption{The objective proxy is minimised at the correct value for the steering coefficient $\lambda$.}
        \label{fig:v6apo_objective}
    \end{minipage}


    \begin{minipage}{\textwidth}
        \centering
        \includegraphics[width=0.7\linewidth]{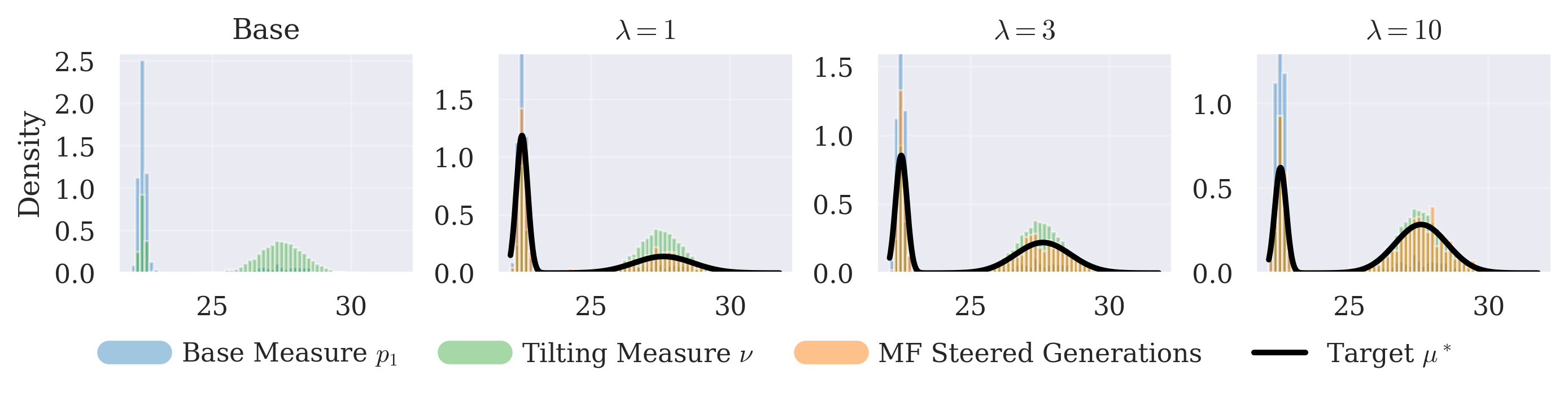}
        \caption{Increasing the strength $\lambda$ increases the tilt towards the observed data $\nu$, and accurately matches the empirically-observed data when computed on the 55-residue distances.}
        \label{fig:v6apo_histograms_mf}
    \end{minipage}
    \vspace{-1em}

\end{figure*}

In the previous section, we verified that our approach provides a principled correction mechanism for distributional steering in controlled low-dimensional settings. We now investigate to what extent this holds in higher-dimensional generative modelling tasks. Diffusion models are widely used for complex modalities such as images, videos, and proteins; among these, protein generation is a particularly natural application for the distributional steering setting, since the generated ensemble is often itself the object of interest and experimental observations frequently provide indirect, distribution-level information. This has motivated recent work on steering generated protein ensembles using distributional reward functions derived from experimental observations \citep{maddipatla2025inverse, Maddipatla2026_nature}.

\subsubsection{Distributions Over Residue Distances}
\label{sec:HIV_protease_main}

We first consider two settings in which we steer conformational ensembles generated by Boltz-2 \citep{passaro2025boltz2, wohlwend2024boltz1} towards experimentally observed distributions over lower-dimensional features, again comparing relative to gradient-steering. 
We remark that our aim is not necessarily to improve over gradient-steering; indeed, this is highly effective at steering towards desired distributional behaviour. Rather, we aim to understand to what extent the improved consistency and predictability afforded by our theoretically grounded correction mechanism extends to these higher-dimensional, challenging settings.

\paragraph{Protein Conformation Generation}
Proteins typically occupy ensembles of conformational states rather than a single fixed structure, and characterising these ensembles is crucial for understanding disease. Recent diffusion models can sample plausible conformations conditioned on an amino-acid sequence, but their induced conformational distributions may be miscalibrated, for example collapsing onto certain modes or assigning incorrect mode proportions.
Experimental methods can probe conformational ensembles, but they often provide indirect, distribution-level information rather than individual measurements. Such data can nevertheless reveal approximate conformational state proportions; this motivates combining experimental data with generative modelling, by `tilting' the base model toward empirically observed distributional information.

\paragraph{HIV-1 Protease}
We first consider HIV-1 protease; this protein exhibits a `flap region' that opens and closes between different conformational states (\Cref{fig:v6apo_visualisation}), which can be inferred from the residue-55 $C_\alpha-C_\alpha$ distances. 
Under the base Boltz-2 model, the states are sampled with proportions 83\% closed and 17\% semi-open. 
The double electron-electron resonance (DEER) method can be used to infer state proportions; such analysis in \citet{Liu16_HIV1conformation} suggests that these relative proportions are 25\% and 75\%. 
Based on these findings, we will aim to tilt generations towards a Gaussian mixture $\nu$ over residue-55 distances with these proportions, using the MMD objective \eqref{eq:MMD objective}.

\paragraph{Results}
In \Cref{fig:v6apo_histograms_mf} we display histograms of the residue-55 distance for different tilt-strengths $\lambda$ using our method. 
For reference, we also plot a proxy for the target $\mu^*$ similarly to before; as the ambient-space $\KL(\mu \| p_1)$ term is intractable, we instead evaluate $\KL$ on the 1-dimensional residue distances.
We see that, as expected, the generations are pulled towards the observed data as $\lambda$ increases, and the induced distribution of residue distances accurately matches the `ground-truth' proxy $\mu^*$ (we give similar visualisations for gradient-steering in Appendix \ref{app:v6apo_details}).
As before, in \Cref{fig:v6apo_objective} we see that the objective is minimised at the correct $\lambda$ value when using mean-field steering, demonstrating that our principled approach remains effective even in this challenging, high-dimensional setting.

In Appendix \ref{app:Adk}, we also consider another Boltz-2 example in which generations of adenylate kinase are steered to match a two-dimensional distribution over bond angles obtained from molecular dynamics simulations.

\subsubsection{Crystallographic Electron Densities}

\begin{figure*}[!t]
    \centering

    \begin{minipage}[t]{0.47\textwidth}
        \centering
        \includegraphics[width=0.65\linewidth]
        {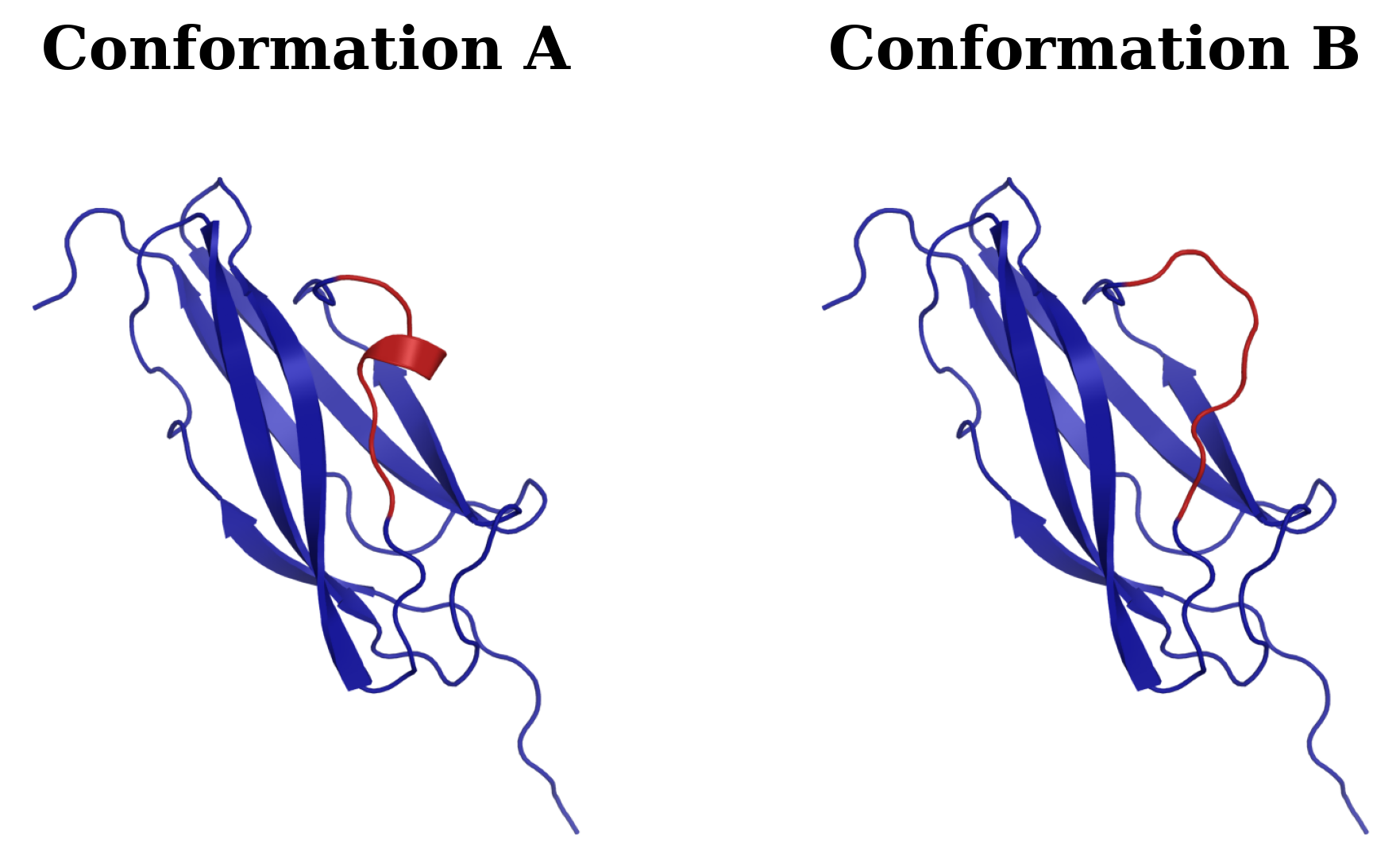}
        \captionof{figure}{Two conformations of 4OLE, with the region of interest (residues 423-431) highlighted in red.}
        \label{fig:4OLE_conformations}
    \end{minipage}
    \hfill
    \begin{minipage}[t]{0.47\textwidth}
        \centering

        \refstepcounter{figure}
        \setcounter{subfigure}{0}
        \vspace{-9em}
        \begin{subfigure}[t]{0.45\linewidth}
            \centering
            \includegraphics[width=0.8\linewidth]
            {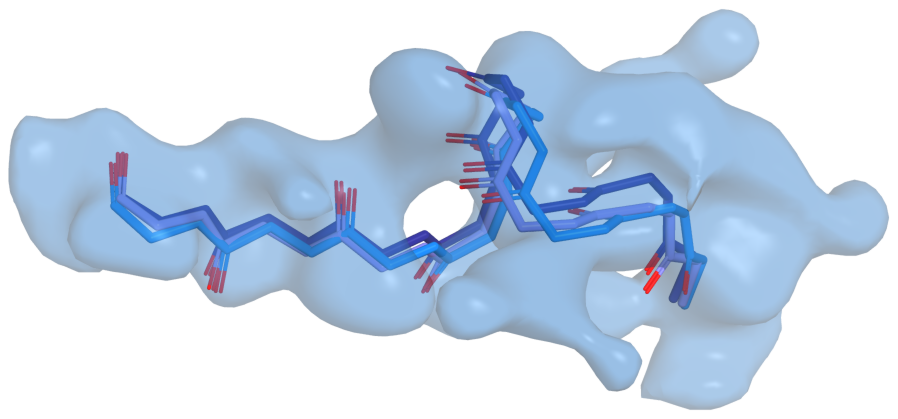}
            \caption{Base distribution $p_1$.}
            \label{fig:base_electron_density}
        \end{subfigure}
        \hfill
        \begin{subfigure}[t]{0.45\linewidth}
            \centering
            \includegraphics[width=0.8\linewidth]
            {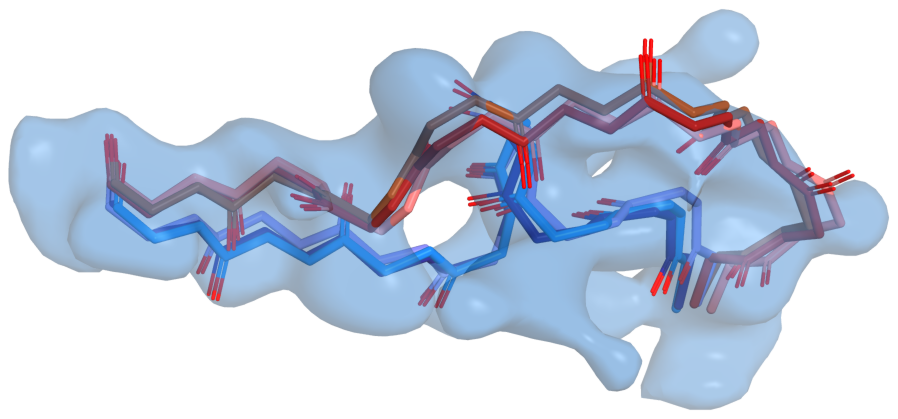}
            \caption{Steered distribution $\mu_1$.}
            \label{fig:steered_electron_density}
        \end{subfigure}

        \addtocounter{figure}{-1}
        \captionof{figure}{
            X-ray crystallography experiment and figure inspired by \citet{maddipatla2025inverse}, showing the sampled ROI and the target electron density. The base diffusion model samples nearly entirely conformation A. Distributional steering can cover both modes, giving a better fit to the observed density.
        }
        \label{fig:electron_densities}
    \end{minipage}

    \vspace{1.5em}
    \begin{minipage}{\textwidth}
        \centering

        \refstepcounter{figure}
        \setcounter{subfigure}{0}

        \begin{subfigure}[t]{0.3\textwidth}
            \centering
            \includegraphics[width=0.67\linewidth]
            {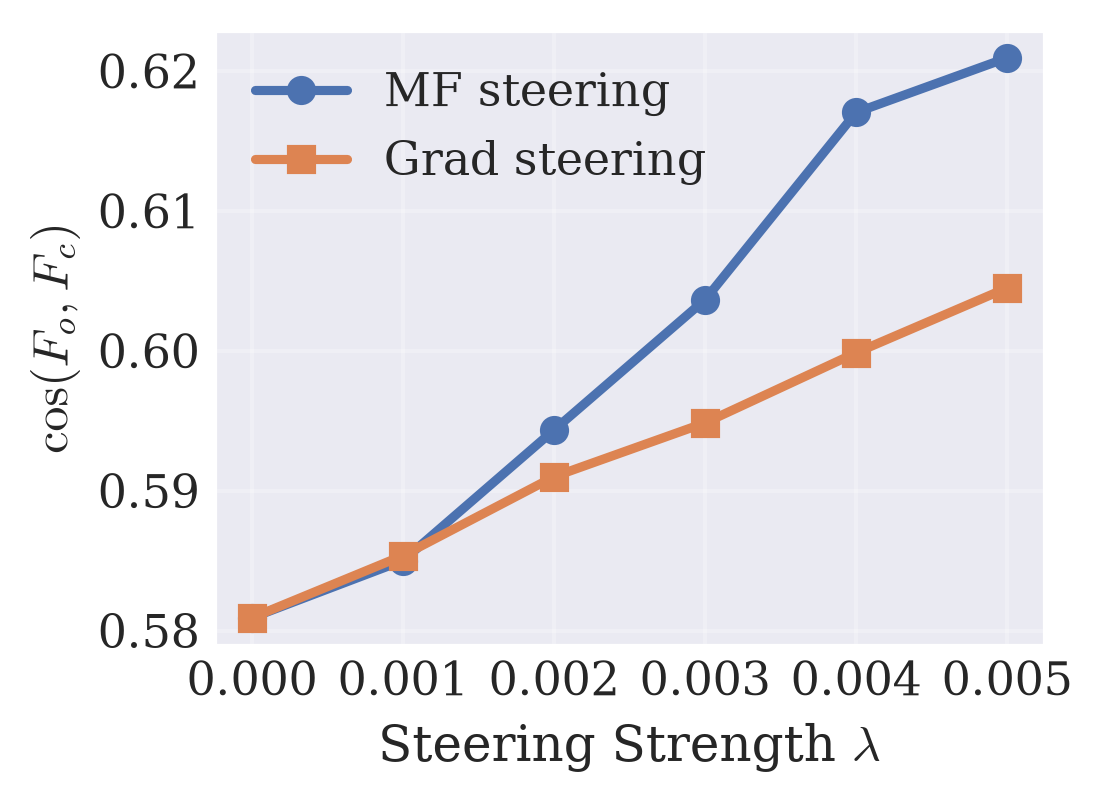}
            \caption{Cosine alignment between generated and target electron densities, as steering strength increases.}
            \label{fig:cosine_vs_strength_4OLE}
        \end{subfigure}
        \hfill
        \begin{subfigure}[t]{0.3\textwidth}
            \centering
            \includegraphics[width=0.67\linewidth]
            {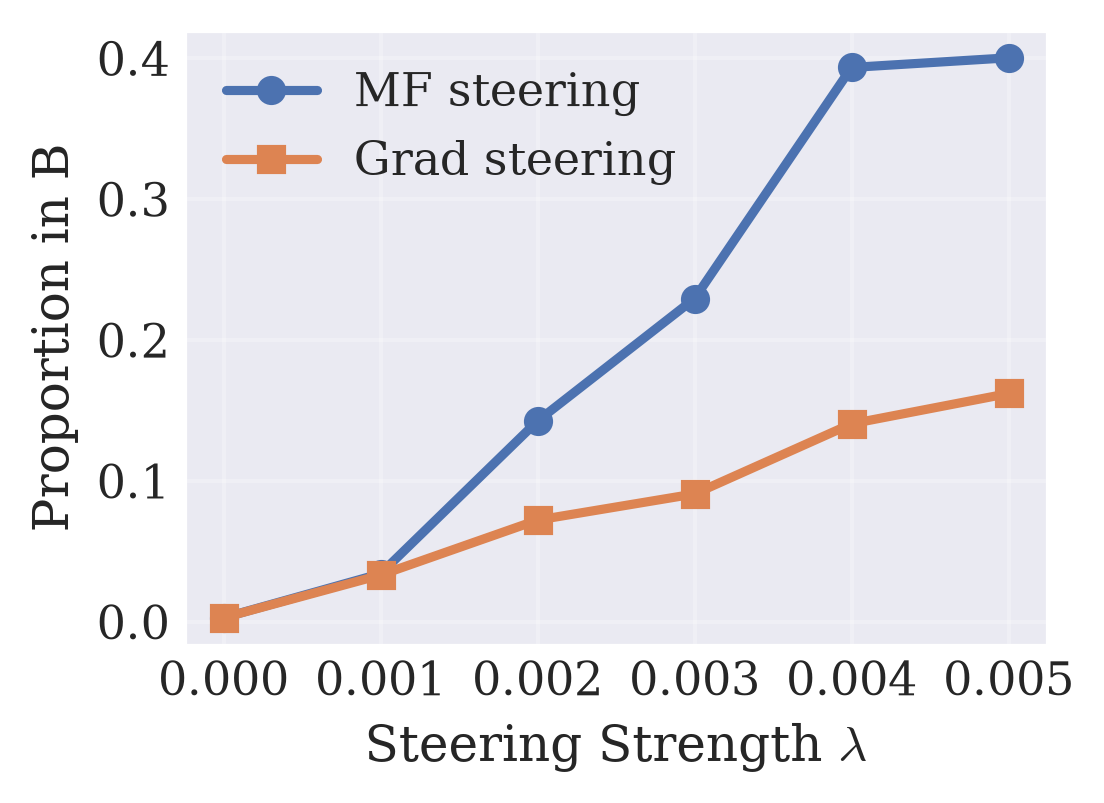}
            \caption{Proportion of generations closer to conformation $B$, as steering strength increases.}
            \label{fig:propB_vs_strength_4OLE}
        \end{subfigure}
        \hfill
        \begin{subfigure}[t]{0.3\textwidth}
            \centering
            \includegraphics[width=0.67\linewidth]
            {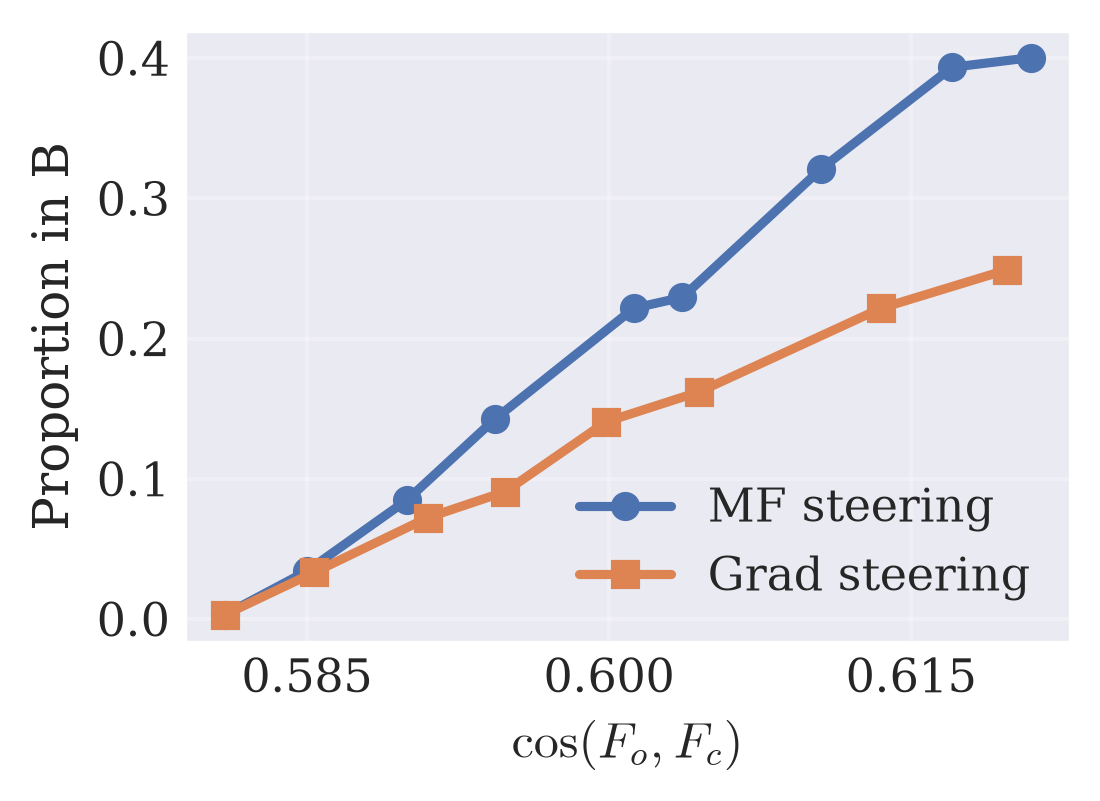}
            \caption{For the same cosine alignment, mean-field steering placed more samples closer to conformation B.}
            \label{fig:propB_vs_cosine_4OLE}
        \end{subfigure}

        \addtocounter{figure}{-1}
        \captionof{figure}{
            Comparing gradient-only and mean-field steering, when steering 4OLE generations according to the observed electron density.
        }
        \label{fig:4OLE_comparisons}
    \end{minipage}

\end{figure*}

To conclude, we consider a more challenging distributional-reward steering example, in which we steer the generations of a diffusion model to fit an electron density observed using X-ray crystallography. This technique provides a spatial and temporal average of the density over molecules in a crystal lattice, therefore constituting a population-level observation requiring a distribution-level reward. Here, we consider an example from \citet{maddipatla2025inverse}, who steer the Protenix model \citep{protenixv0} so that the generated ensemble's density is guided towards the empirical density. Here, we investigate how this steering mechanism is altered by including our additional mean-field correction.

We consider the PDB structure 4OLE, which is known to display heterogeneous conformations within the same crystal structure around a small region of interest (ROI) (\Cref{fig:4OLE_conformations}). The base model generates almost entirely from the helical conformation A, which does not explain the bimodal electron density in this region. By steering according to a log-likelihood reward function, \citet{maddipatla2025inverse} show that both modes can be recovered, giving an improved fit to the empirical electron density (\Cref{fig:electron_densities}).

Inspired by this experiment, in \Cref{fig:4OLE_comparisons} we compare the behaviour of standard gradient-only steering (similar to the steering used in \citet{maddipatla2025inverse}) with the additional inclusion of our mean-field reweighting scheme. For full details of the experimental setup, see Appendix \ref{app:electron_densities}. In the absence of a known ground-truth, we instead focus on discussing the qualitative differences between the results. We first observe that for the same steering strength, mean-field steering appears to result in a stronger reward tilt in this setting, giving a larger cosine alignment and more samples closer to conformation B. We also observe that mean-field steering generally placed more samples in the alternate conformation for the same cosine alignment.
\section{Discussion}
\label{sec:discussion}

\paragraph{Limitations}
The primary limitation of our approach is its reliance on resampling. While similar schemes are used in the pointwise-reward steering setting too, this can be a limitation in certain modalities (particularly images) as the final generations may include versions of similar images. For settings such as conformation generation, this is typically less of an issue.
We anticipate that this can be mitigated using methods to reduce weight variance \citep{ren2026driftlite}. We also require a differentiable reward function and an additional computational cost for the $\dot \Psi_t$ solver.
Finally, we remark that exact minimisation of a distributional objective is often not strictly necessary in practice, and gradient-steering alone may suffice in such cases; our framework can be seen as providing a principled foundation underlying such approaches.

\paragraph{Conclusion and Future Directions}
In this work, we have studied distributional reward steering under a mean-field framework, deriving a Feynman-Kac reweighting scheme that exactly targets the prescribed reward-tilted measure $\mu^*$. Our framework provides a principled counterpart to existing batch-level gradient-steering methods, and recovers existing FK-weighted steering for pointwise rewards as a special case. We empirically verified that our proposed framework targets the true tilted distribution in low-dimensional settings where the target is tractable, and illustrated the extent to which this behaviour transfers to higher-dimensional protein conformation tasks.
We hope that future work can build upon the framework developed here, drawing on domain expertise to formulate more expressive distributional steering objectives inspired by practical applications.

\pagebreak
\section*{Impact Statement}
This paper presents work whose goal is to advance the field of Machine
Learning. Our work concerns inference-time steering methods for diffusion and flow-based generative models, and therefore shares the broader societal impacts associated with these models.

\section*{Acknowledgements}
SH is supported by the EPSRC CDT in Modern Statistics and Statistical Machine Learning [EP/S023151/1].

\bibliography{bibliography}
\bibliographystyle{icml2026}

\newpage
\appendix
\onecolumn

\section*{Appendix}
\Crefname{section}{Appendix}{Appendices}

The Appendix is structured in the following way. In \Cref{app:background}, we discuss related work and required background information. In \Cref{app:implementation_details}, we provide additional details regarding implementation of the mean-field steering procedure. In \Cref{app:experimental_details}, we provide details of the experimental settings, along with additional experimental results. In \Cref{app:proofs}, we provide the proofs of the results stated in the main body. Finally, \Cref{app:licenses} includes the licenses for assets used.

\section{Background}
\label{app:background}

\subsection{Related Work}

\paragraph{Pointwise-Reward Gradient-Steering Methods}

Since the development of diffusion and flow-based generative models, there has been an extensive literature concerning incorporating additional steering terms into the inference dynamics to encourage desirable behaviour. Important examples include classifier-guidance \citep{Dhariwal21_diffusion_beats_GANs}, diffusion posterior sampling \citep{chung2023DPS}, and universal guidance \citep{bansal2024universal} to name a few. While effective, simply incorporating gradient terms comes with few theoretical guarantees on the generated distribution, so a more recent line of works has incorporated Feynman-Kac reweighting procedures to correct the dynamics to trace out a prescribed path of measures \citep{wu2023twisted, thornton2025controlled, singhal2025fks, skreta2025fkc, he2026_RNE, sabour2025FMTT}. Our presentation largely follows that in \citet{sabour2025FMTT}, though they focus on using a flow-map as their $\hat x_1$ prediction.
Recent works \citet{potaptchik2026MFMs, holderrieth2026diamond} also consider inference-time steering according to pointwise rewards, but sidestep the issues of requiring reweighting schemes by using a stochastic flow map to estimate the value function under a stochastic optimal control framework.

\paragraph{Batch-Reward Gradient-Steering Methods}
Recently there have been a number of works that have used steering terms that incorporate particle interactions in order to encourage desirable distributional properties in the generated samples. Similarly to the pointwise setting,  these approaches are effective but largely heuristic in nature, and come with a weak understanding of the sampled distribution. In fact, many of these methods use gradient-steered dynamics; our framework can thus be interpreted as providing a theoretical grounding for such methods through our derived reweighting scheme.

Many of these works focus on steering to encourage diversity in the generated samples. For a sequence of kernels $k_t$,  the \textit{Particle Guidance} method of \citet{corso2024particleguidance} modifies the inference dynamics to be
\begin{equation}\label{eq:particle_guidance}
    \rmd X_t^i = \big[ b_t(X_t^i) + \eps_t s_t(X_t^i) - \eps_t \nabla_{X_t^i} \sum_{j=1}^N k_t(X_t^i, X_t^j)  \big] \rmd t + \sqrt{2 \eps_t} \rmd B_t^i.
\end{equation}
which corresponds to gradient-steering for a batch-reward function $\cR_t^N(x^{1:N}) = - \tfrac{1}{2N}\sum_{i,j=1}^N k_t(x^i, x^j)$.
As commented in \citet{corso2024particleguidance}, this does not in general sample from $\mu_1^*$ as the dynamics do not correspond to a reversed diffusion process. To correct for this, \citet{corso2024particleguidance} also introduce a \textit{learned} potential steering approach, but this requires additional training and thus falls under the fine-tuning paradigm (see below) rather than the inference-time steering paradigm that we consider in this work.
Recent work by \citet{lam2026metadiffusion} from the computational biology literature  similarly incorporates the gradient of biasing potentials into the inference dynamics, including a diversity-encouraging potential $E_i = \tfrac{k}{N-1} \sum_{i\neq j} \exp\big(-\tfrac{(\CV_i - \CV_j)^2}{2 \sigma^2} \big)$ defined on chosen collective variables.
\citet{kirchhof2025shielded} include a \textit{sparse-repellency} term in the inference dynamics, that steers the generations out of shielded areas. While similar in spirit to particle guidance (in that the base dynamics are modified to include a repulsive interaction term), the control terms cannot be directly interpreted as gradients of a potential. Their steering terms are `sparse' in the sense that the base dynamics are only modified where necessary to avoid shielded areas. Finally, we also note the concurrent works \citet{azangulov2026variancetilted, vinograd2026_eddy} that have similarly targeted principled techniques for distributional steering, under the particular setting of encouraging diversity, and \citet{luan2026inferencetimeattributedistributionalignment} who consider attribute distribution alignment through an optimal control formulation.
We remark that our reweighting corrections are somewhat ill-suited to the diversity setting due to the reliance on resampling; we anticipate that combining with approaches to mitigate weight changes, such as \citet{ren2026driftlite}, can help to address this, and we believe this is a valuable direction for future work. 

Batch-level gradient steering has also been considered beyond the diversity setting. Such works include \citet{maddipatla2025inverse, Maddipatla2026_nature}, who consider steering the Protenix diffusion model \citep{protenixv0} according to experimental observations that provide distributional information. These works formulate the task as an inverse problem in which one aims to sample from the ensemble's posterior distribution $p( \mathcal{X} | a,y) \propto p(y | \mathcal{X}, a) \cdot p(\mathcal{X} | a)$; here, $\mathcal{X} = \{ X^1, ..., X^N \}$ is the sampled ensemble, $p(y | \mathcal{X}, a)$ denotes a chosen likelihood function, and the prior $p(\mathcal{X} | a)$ is given by the base diffusion model (conditioned on amino acid sequence $a$). This corresponds to choosing a log-likelihood reward function $\mathcal{R}(\mu) = \log p(y | \mu, a)$ in our framework. \citet{sani2026_mmdsteering} consider steering according to an MMD-based reward $\mathcal{R}(\hat \mu) = - \textrm{MMD}^2(\mu, \nu)$ for a reference dataset $\nu$, similar to several of our experimental settings. DiffBED \citep{saravanan2026diffbed} steers according to the expected information gain $\mathcal{R}(\hat \mu) = \mathrm{EIG}(\hat \mu)$, for applications in Bayesian experimental design.

We summarise the position of our work within the surrounding pointwise-reward and distributional-reward literature in \Cref{tab:position_in_literature}.

\begin{table}[h]
\centering
\small
\setlength{\tabcolsep}{4pt}
\renewcommand{\arraystretch}{1.25}
\renewcommand{\tabularxcolumn}[1]{m{#1}}
\begin{tabularx}{\columnwidth}{
    >{\centering\arraybackslash}m{0.20\columnwidth}
    >{\centering\arraybackslash}X
    >{\centering\arraybackslash}X}
\toprule
& \textbf{Heuristic Gradient Steering} & \textbf{Principled Targeting of $\boldsymbol{\mu^*}$} \\
\midrule
\shortstack{\textbf{Pointwise Reward}\\$r(X_1)$}
& $\nabla r(X_t)$ steering \par\smallskip
  {\scriptsize (e.g.\ \citet{Dhariwal21_diffusion_beats_GANs,chung2023DPS,bansal2024universal})}
& Reweighting approaches \par\smallskip
  {\scriptsize (e.g.\ \citet{wu2023twisted,singhal2025fks,skreta2025fkc,he2026_RNE,sabour2025FMTT})} \\
\shortstack{\textbf{Distributional Reward}\\$\mathcal{R}(\mu)$}
& $\nabla_{X_i}\mathcal{R}(\hat{\mu})$ steering \par\smallskip
  {\scriptsize (e.g.\ \citet{corso2024particleguidance,maddipatla2025inverse,sani2026_mmdsteering,lam2026metadiffusion,saravanan2026diffbed})}
& \textbf{Mean-Field Steering (Ours)}  \\
\bottomrule
\end{tabularx}
\vspace{0.5em}
\caption{Positioning of our work within the pointwise-reward and distributional-reward steering literatures.}
\label{tab:position_in_literature}
\vspace{-1em}
\end{table}

\paragraph{Fine-tuning Flow-based Generative Models}

In this work, we consider the inference-time steering literature for controlling diffusion models. A separate but related literature concerns \textit{fine-tuning} diffusion models. Similarly to inference-time steering, such methods aim to improve the quality of the generations according to a chosen reward function. However, unlike inference-time steering methods, these approaches involve additional training of the generative model, and must be re-trained for different reward functions.

For the pointwise-reward case, such methods are now widespread; prominent examples of such methods include DRaFT \citep{clark2024draft}, ReFL \citep{xu2023imagereward}, and DDPO \citep{black2024training}, to name a few. Adjoint Matching \citep{domingo-enrich2025adjoint} provides a scalable method for finetuning diffusion models from the perspective of stochastic optimal control (SOC), and has subsequently seen extensions to sampling, discrete settings, and wider SOC problems \citep{havens2025adjoint, liu2025adjoint, so2026discrete, howard2026control}. A recent line of works \citet{santi2026flowcontrol, wang2026efficient} have considered fine-tuning flow-based models according to measure-valued rewards, and \citet{smith2026calibrating} consider calibrating generative models to distributional constraints. These approaches therefore consider a similar setting to the one studied in this work, but under the fine-tuning paradigm rather than inference-time steering.

\subsection{Functional derivatives}
\label{app:fun derivatives}
In this section, we briefly recall the notion of a functional derivative with respect to a probability measure. For a complete and rigorous reference, we refer to \citet{Santambrogio2015} for an accessible introduction.

Let \(\cF : \mathcal{P}(\mathbb{R}^d) \to \mathbb{R}\) be a functional on probability measures. We say that a measurable function
\[
\frac{\delta \cF}{\delta \mu}(x,\mu) : \mathbb{R}^d \times \cP(\mathbb{R}^d) \rightarrow \mathbb{R}
\]
is a \emph{functional derivative} (or \emph{first variation}) of \(\cF\) at \(\mu\) if, for measures \(\nu\) close to \(\mu\),
\begin{equation}
\cF(\nu)-\cF(\mu)
=
\int_{\mathbb{R}^d}
\frac{\delta \cF}{\delta \mu}(x,\mu)\,(\nu-\mu)(dx)
+
o(\nu-\mu).
\label{eq:first-variation-def}
\end{equation}
Informally, \(\frac{\delta \cF}{\delta \mu}(x,\mu)\) measures the first-order change in \(\cF\) when one adds a small amount of mass near the point \(x\).
Since \(\nu-\mu\) has total mass zero, the functional derivative is only defined up to an additive constant depending on \(\mu\). In most formulas this ambiguity is harmless.
A convenient equivalent characterisation is obtained by considering  perturbations of the form
\[
\mu_\varepsilon = (1-\varepsilon)\mu + \varepsilon \nu,
\]
for $\varepsilon > 0$ small enough.
Then \(\frac{\delta \cF}{\delta \mu}\) is characterised by
\begin{equation}
\left.\frac{d}{d\varepsilon} \cF(\mu_\varepsilon)\right|_{\varepsilon=0}
=
\int_{\mathbb{R}^d}
\frac{\delta \cF}{\delta \mu}(x,\mu)\,(\nu-\mu)(dx).
\label{eq:first-variation-curve}
\end{equation}

\paragraph{Examples}
In \Cref{tab:functional-derivative-examples} we record the functional derivatives of common measure-defined functionals $\cF$ that could be used under our framework (typically one would use $\cR = -\cF$ as a reward).

\begin{table}[t]
  \caption{Examples of first variations of functionals on probability measures.
  Functional derivatives on probability spaces are defined up to additive constants.}
  \label{tab:functional-derivative-examples}
  \centering
  \small
  \setlength{\tabcolsep}{3pt}
  \renewcommand{\arraystretch}{1.15}
  \begin{tabularx}{\textwidth}{@{}p{0.24\textwidth}YY@{}}
    \toprule
    \textbf{Name}
    &
    \textbf{Functional \(\cF(\mu)\)}
    &
    \textbf{Functional derivative \(\tfrac{\delta \cF}{\delta\mu}\)}
    \\
    \midrule

    Linear reward
    &
    \(\displaystyle \int f(x)\,\mu(dx)\)
    &
    \(\displaystyle f(x)\)
    \\[0.25em]

    Non-linear moment reward
    &
    \(\displaystyle F\!\left(\int \varphi(x)\,\mu(dx)\right)\)
    &
    \(\displaystyle
    F'\!\left(\int \varphi(y)\,\mu(dy)\right)\varphi(x)
    \)
    \\[0.25em]

    Quadratic moment reward
    &
    \(\displaystyle
    \left(\int \varphi(x)\,\mu(dx)\right)^2
    \)
    &
    \(\displaystyle
    2\left(\int \varphi(y)\,\mu(dy)\right)\varphi(x)
    \)
    \\[0.25em]

    Squared Maximum Mean Discrepancy ($\MMD^2$)
    &
    \(\displaystyle \iint k(x,y)\,(\mu-\nu)(dx)(\mu-\nu)(dy)\)
    &
    \(\displaystyle 2\int k(x,y)\,(\mu-\nu)(dy)\)
    \\[0.25em]

    Entropy
    &
    \(\displaystyle \int \rho_\mu(x)\log \rho_\mu(x)\,dx\)
    &
    \(\displaystyle \log \rho_\mu(x)+1\)
    \\[0.25em]

    Quadratic Wasserstein cost to fixed \(\nu\)
    &
    \(\displaystyle \frac12 W_2^2(\mu,\nu)\)
    &
    Kantorovich potential \(\phi(x)\) from \(\mu\) to \(\nu\) for cost
    \(\frac12\|x-y\|^2\)
    \\[0.25em]

    Entropic transport cost to fixed \(\nu\)
    &
    \(\displaystyle \mathrm{OT}_\varepsilon(\mu,\nu)\)
    &
    Schrödinger potential \(\varphi^\varepsilon(x)\)
    \\

    \bottomrule
  \end{tabularx}
\end{table}

\section{Implementation Details}
\label{app:implementation_details}

In this section, we provide additional implementation details for the mean-field steering procedure. Recall from Section \ref{sec:method} that we make a finite-particle approximation to the idealised weighted McKean-Vlasov dynamics \eqref{eq:McKean}. For $N$ particles, letting $\hat \mu_t = \sum_{i=1}^N w_t^i \delta_{X_t^i}$ for weights $w_t^i \propto e^{A_t^i}$, these dynamics are given by
\begin{subequations}
\begin{align}
\rmd X_t^i &= \big[ b_t(X_t^i) + \eps_t s_t(X_t^i) + \eps_t \nabla_x \Psi_t( X_t^i; \hat \mu_t) \big] \rmd t + \sqrt{2 \eps_t} \, \rmd B^i_t, \quad &X_0^i &\sim p_0 \\
\rmd A_t^i &= \big[ b_t(X_t^i) \cdot \nabla_x \Psi_t( X_t^i; \hat \mu_t) + \dot{\Psi}_t(X_t^i) \big] \rmd t, \quad &A_0^i &= 0.
\end{align}
\end{subequations}

\begin{algorithm*}[tb]
\captionsetup{labelfont=bf}
\caption{Mean-Field Inference-Time Steering for Diffusion Models (Multi-Batch)}
\label{alg:interacting_particle_steering_multibatch}

\begin{minipage}{\linewidth}
\setlist[itemize]{nosep, topsep=2pt, partopsep=0pt, parsep=0pt, itemsep=1pt}
\setlist[itemize,1]{label={}, leftmargin=2em, labelsep=0pt}
\setlist[itemize,2]{label=\textbullet, leftmargin=2.5em, labelsep=0.7em}
\setlist[itemize,3]{label=$\circ$, leftmargin=2.5em, labelsep=0.7em}
\setlength{\abovedisplayskip}{4pt}
\setlength{\belowdisplayskip}{4pt}

\noindent\textbf{Input:}
Number of batches $M$, number of particles per batch $N$, distributional reward function $\cR$ and corresponding first variation $\Psi_t$, time increment $\delta t$ or alternative time schedule.

\textbf{Initialisation:} Initialise an empty set of completed samples and log-weights $\mathcal H_0 = \emptyset$.

\noindent\textbf{For} $m \in \{1,\dots,M\}$:
\begin{itemize}
    \item \textbf{Initialise batch:} Sample $X_0^{m,1:N} \sim p_0^{\otimes N}$, initialise log-weights $A_0^{m,i}=0$.

    \item \textbf{For} $t \in \{0, \delta t, 2\delta t, \dots, 1-\delta t\}$:
    \begin{itemize}
        \item \textbf{Construct combined terminal measure:}
        \begin{itemize}
            \item Let $\mathcal H_{m-1} = \{(\bar X_1^\ell,\bar A_1^\ell)\}_{\ell=1}^{(m-1)N}$ denote the retained samples and log-weights from previous batches.
            \item Define $\hat \mu_1^{m,N} = \sum_{\ell=1}^{(m-1)N} \bar w_1^\ell \delta_{\bar X_1^\ell} + \sum_{j=1}^N w_t^{m,j}\delta_{\hat x_1(X_t^{m,j},t)}$, where $\bar w_1^\ell, w_t^{m,j} \propto e^{\bar A_1^\ell}, e^{A_t^{m,j}}$, with normalisation over both previously sampled and denoised current particles.
        \end{itemize}

        \item \textbf{Positional updates:}
        \begin{itemize}
            \item Perform discretisation step for positional updates,
            \[
                X_{t+\delta t}^{m,i} = X_t^{m,i} + \big[ b_t(X_t^{m,i}) + \eps_t s_t(X_t^{m,i}) + \eps_t \nabla_x \Psi_t(X_t^{m,i}; \hat \mu_t^{m,N}) \big]\delta t + \sqrt{2\eps_t}\,\delta B_t^{m,i},
            \]
            where, by slight abuse of notation, using $\hat \mu_t^{m,N}$ denotes using the combined terminal measure approximation induced by $\hat \mu_1^{m,N}$ when evaluating the reward terms.
        \end{itemize}

        \item \textbf{Log-weight updates:}
        \begin{itemize}
            \item Compute $\frac{\rmd}{\rmd t}\Psi_t(\cdot;\hat \mu_t^{m,N})$ at points $X_t^{m,i}$.
            \item Compute mean-field FK potentials $$g_t^{m,i} = b_t(X_t^{m,i}) \cdot \nabla_x \Psi_t(X_t^{m,i};\hat \mu_t^{m,N}) + \frac{\rmd}{\rmd t}\Psi_t(X_t^{m,i};\hat \mu_t^{m,N}).$$
            \item Perform discretisation step for log-weight updates,
            \(
                A_{t+\delta t}^{m,i} = A_t^{m,i} + g_t^{m,i}\delta t.
            \)
        \end{itemize}
        \item \textbf{Resample:} Optionally resample according to current weights $w_t^{m,i} \propto e^{A_t^{m,i}}$.
    \end{itemize}
    \item \textbf{Update completed set:}
    Update completed set with the completed batch samples and corresponding log-weights \[\mathcal H_m = \mathcal H_{m-1} \cup \{(X_1^{m,i}, A_1^{m,i})\}_{i=1}^N.\]
\end{itemize}
\textbf{Return:} Weighted particles $(X_1^i, w_1^i)$.
\end{minipage}
\end{algorithm*}

We now provide additional details regarding computation of the steering terms.

\paragraph{Gradient-Steering Term $\nabla_x \Psi_t( x; \mu_t)$}
As explained in the main text, the gradient-steering term $\nabla_x \Psi_t( x; \mu_t)$ appearing in the positional updates can be computed directly with autodifferentiation, evaluated on the current empirical measure $\hat{\mu}_t $. When one can analytically compute the first variation (as one can do, for example, for functionals in \Cref{tab:functional-derivative-examples}), then one can use a finite-sample approximation $\Psi_t(X_t^i, \hat{\mu}_t)$ and then differentiate with respect to the first argument. Alternatively, one can directly differentiate $\cR_t(\hat{\mu}_t)$ evaluated on the empirical measure with respect to the positions $X_t^i$ and rescale according to the particle weights, giving $(w_t^i)^{-1} \nabla_{X_t^i} \cR_t(\hat \mu_t)$  as the $i$-th steering term. Note that in the first case, the derivative with respect to the position is passed solely through the position $x$ with the measure $\mu$ kept fixed, while in the second the dependence on $x$ is passed through both—this is an algorithmic choice, as under standard smoothness assumptions these both converge to the desired mean-field steering term $\nabla_x \Psi_t( x; \mu_t)$ in the large-$N$ limit -- this follows from arguments similar to those in the proof of Theorem \ref{thm:weighted_particle_convergence}. In practice we found both choices to behave very similarly, and so use the first approach in our experiments.

\paragraph{Time-Derivative Term $\dot \Psi_t(x)$}
The more challenging term in the dynamics is the time-derivative $\dot \Psi_t(x) = \tfrac{\rmd}{\rmd t} \big( \Psi_t(x, \mu_t) \big)$, in which the position $x$ is kept fixed but the time-dependence arises through both $\Psi_t$ and the measure evolution $\mu_t$ (note that the dependence of $\dot \Psi_t(x)$ on $\mu_t$ is suppressed for notational convenience). In contrast to the standard case, this induces \textit{implicit} behaviour in the log-weight updates—their dynamics involve the change in the measure $\mu_t$, but the measure evolution is itself dependent on the change in the weights. We provide two different methods to overcome this. The first is mathematically more explicit, building a tractable linear system of size $N$ which can be solved cheaply. The second considers a simple fixed-point iterative procedure to approximate the weight updates. We outline both below.

For the implicit expression in \eqref{eq:implicit_eq_for_dotPsi}, all terms are generally tractable (in the proof of \Cref{prop:time derivative}, we use integration-by-parts to remove troublesome divergence terms on the score function). In many of the most standard cases (such as MMD, and diversity kernel rewards, for example) the second variation has an analytical expression. In other cases, one can take derivatives with respect to the particle weights for the finite sample approximations (which is cheap, as these are scalar). Note that the integral in \eqref{eq:implicit_eq_for_dotPsi} is linear in $\dot \Psi_t(y)$, so when evaluated using the current empirical measure this gives a linear system for the terms $\dot \Psi_t(X_t^i)$ with tractable coefficients, which can be solved cheaply. The resulting procedure is outlined in \Cref{alg:Psi_solver_implicit}.

As a simpler alternative, one can instead use a small fixed-point iterative procedure based on finite differences. For example, one can first approximate using $\dot \Psi_t(x) \approx \tfrac{\Psi_{t+\delta t}(x, \tmu_{t+\delta t}(w_t)) -  \Psi_t(x, \mu_t)}{\delta t}$ where $\tmu_{t+\delta t}(w_t) = \sum_j w_t^j \delta_{X_{t+\delta t}^j}$, then use this to approximate the weights $w_{t+\delta t}^j$. These new weights can be used as a new approximation to the next measure $\mu_{t+\delta t}$, and repeating the iterations a few times quickly gives accurate weights updates. In most settings this iterative procedure adds little computational overhead, as the expensive parts of the reward computations are usually independent of the weights (indeed, the expensive part is usually evaluating the denoiser estimates). For strong tilt strengths, we sometimes found it necessary to used damped Picard iterations, though for more reasonable tilt-strengths this was not necessary. We outline this fixed-point procedure in \Cref{alg:Psi_solver_fixedpoint}.

We found that the choice of solver for $\dot \Psi_t(x)$ had little effect on the empirical performance of the method, which we verify in \Cref{app:low_dim_details}.
We therefore used the fixed-point iterative solver in our higher-dimensional experiments for simplicity, but note that practitioners might find the first approach useful in certain settings.

\textbf{Details on the finite-difference computation}
As mentioned above, the simplest finite difference implementation would be to use $\dot \Psi_t(x) \approx \tfrac{\Psi_{t+\delta t}(x, \tmu_{t+\delta t}(w_t)) -  \Psi_t(x, \mu_t)}{\delta t}$. While simple, this does come with the downside that it involves evaluating the network on $X_t$ at the time $t+\delta t$. It is in fact possible to avoid any such time-offset by using a combination of autodifferentiation and finite-differences, as we now outline.

We are considering rewards of the form $\cR_t(\mu_t) = \alpha(t) \cR( \hat x_1(t,\cdot) \# \mu)$, which involves projecting to the current predicted endpoint. We can therefore write the first variation of the form $\Psi_t(x, \mu) = \alpha(t) \, \psi\big( \hat x_1(t,x) , \hat x_1(t,\cdot) \# \mu \big)$ for some $\psi$. Using the shorthand notation $\theta_i = \hat x_1(t,X_t^i)$ and $\hat \pi_t = \hat x_1(t,\cdot) \# \mu$, we can write the following approximation
\begin{equation}
    \label{eq:improved_dot_psi}
    \dot{\Psi}_t(X_t^i)
    \approx
    \frac{\alpha(t{+}\delta t)\,\psi\!\big(\theta_i,\hat{\pi}^{(k)}_{t+\delta t}\big)
                 -\alpha(t)\,\psi(\theta_i, \hat \pi_t)}{\delta t}
    + \alpha(t)\,\frac{d}{ds}\,\psi\!\big(\hat{x}_1(s,x), \hat \pi_t\big)\Big|_{s=t}
\end{equation}
In this expression, the first term is a finite-difference approximation that includes the dependence on the time-evolution of $\mu_t$, and the second term can be implemented easily using autodifferentiation. By implementing in this manner, one avoids evaluating the network at any points with a time-offset. We use this implementation in the protein experiments.

\paragraph{Multi-Batch Algorithm}
We outline the inference procedure for a single batch of $N$ particles in \Cref{alg:interacting_particle_steering}, which will approximate the target tilted measure $\mu^*$ as the batch size $N$ goes to infinity.
In practice, the batch size is limited by computational constraints; we therefore also provide a multi-batch procedure in \Cref{alg:interacting_particle_steering_multibatch}.
Specifically, one can simulate multiple batches of $N$ particles, and retain the generated samples along with the obtained weights. For each subsequent batch simulation, we evaluate the steering terms using the combined measure of the previously generated samples (with their respective weights) along with the particles currently being simulated. The procedure for updating the positions and log-weights of the current simulated particles remains unchanged.

\paragraph{Additional Steering Terms}

We remark here that one can also include additional steering terms in the positional updates, if one accounts for these in the log-weight dynamics. In particular, if one includes an additional $u_t(X_t; \mu_t)$ term as follows,
\begin{align}
    \rmd X_t &= \big[ b_t(X_t) + \eps_t s_t(X_t) + \eps_t \nabla_x \Psi_t( X_t; \mu_t) + u_t(X_t; \mu_t) \big] \rmd t + \sqrt{2 \eps_t} \, \rmd B_t,
\end{align}
then the necessary log-weight dynamics become
\begin{align}
    \rmd A_t &= \big[ b_t(X_t) \cdot \nabla_x \Psi_t( X_t; \mu_t) + \dot{\Psi}_t(X_t)
    + \nabla_x \cdot u_t(X_t; \mu_t) + u_t(X_t; \mu_t) \cdot (s_t(X_t) + \nabla_x \Psi_t( X_t; \mu_t))
    \big] \rmd t.
\end{align}
In particular, a natural choice might be to add in additional gradient steering by taking $u_t(X_t; \mu_t) = \chi_t \nabla_x \Psi_t( X_t; \mu_t)$ for a coefficient function $\chi_t$ (similar to the presentation in \citet{sabour2025FMTT}). In this case, the dynamics become
\begin{subequations}
\begin{align}
\rmd X_t &= \big[ b_t(X_t) + \eps_t s_t(X_t) + \eps_t \nabla_x \Psi_t( X_t; \mu_t) + \chi_t \nabla_x \Psi_t( X_t; \mu_t) \big] \rmd t + \sqrt{2 \eps_t} \, \rmd B_t, \\
\rmd A_t &= \big[ b_t(X_t) \cdot \nabla_x \Psi_t( X_t; \mu_t) + \dot{\Psi}_t(X_t)
    + \chi_t \big( 
    \lVert \nabla_x \Psi_t( X_t; \mu_t) \rVert^2 + \Delta_x \Psi_t( X_t; \mu_t) + s_t(X_t) \cdot \nabla_x \Psi_t( X_t; \mu_t)
    \big)
    \big] \rmd t.
\end{align}
\end{subequations}
These dynamics are again analogous to those in the corresponding pointwise case (see \citet{sabour2025FMTT}). Note too that there is no additional implicit dependence on the measure evolution; this is all captured by the terms already present in the standard dynamics \eqref{eq:McKean}.

In this work, we used the dynamics \eqref{eq:McKean} in our experiments. We note, however, that a possible limitation of this approach is that it can only redistribute mass among regions populated by existing particles. If the target tilt requires adding mass in an unpopulated area, this would not be possible by reweighting alone. To mitigate this issue, one could include additional steering into the positional updates using the above dynamics, to help ensure that there are particles covering the support of the target measures that can then be reweighted appropriately.

\paragraph{Optional Poisson Equation Solver}
To combat weight degeneracy in Feynman-Kac steering inference schemes, recent work \citet{ren2026driftlite} subsequently approximately solves the Poisson equation ${\nabla \cdot (\rho_t u_t) = - \rho_t g_t}$ for an additional drift term $u_t$. By solving this Poisson equation, the resulting update offloads the changes in log-weights into changes in position. \citet{ren2026driftlite} parameterise $u_t$ with a small set of basis functions, and approximately solve the Poisson equation by solving a resulting linear system. We remark here that this method can also be directly applied in our setting. The exact parameterisation can be chosen by the user; for example, given that we are already considering an interacting particle system, one could consider parameterising the additional drift to be of form
\(
    u_t(x) = \sum_j \alpha_j \nabla_x k(x, X_t^j),
\)
for a suitably chosen kernel $k$. We leave investigating combining our approach with that of \citet{ren2026driftlite} to future work; in particular, we anticipate that reducing the changes in the weights could enable applications for diversity steering, for which a resampling procedure is not well-suited.

\paragraph{Relation to Pointwise Rewards} Recall from Section \ref{sec:method} that our framework recovers the standard pointwise steering approach outlined in Section \ref{sec:pointwise background} when using the reward $\cR(\mu) = \int r(x) \rmd \mu(x)$. Given this, one might ask whether a pointwise reward could simply be used in place of a distributional one. We emphasise that the two settings solve fundamentally different problems. As an illustration, consider what happens as the reward strength $\lambda$ grows towards infinity. In the pointwise case, the tilted distribution becomes increasingly concentrated on the single point that maximises $r$. In the distributional case,  the tilt pushes $\mu$ towards the maximiser of the reward functional; for a divergence reward $D(\mu, \nu)$, this means $\mu$ approaches the entire target measure $\nu$ rather than any individual point. More fundamentally, matching a prescribed target measure $\nu$ via a pointwise reward would require choosing $r(x) = \log(\nu(x)/p_1(x))$, and hence require explicit access to the base density $p_1$ which is typically not available. Distributional rewards therefore enable forms of control that are not naturally expressible in the pointwise setting.

\begin{algorithm}[t]
\caption{Implicit Linear System $\dot \Psi_t$ Solver}
\label{alg:Psi_solver_implicit}
\begin{algorithmic}[1]
\STATE \textbf{Input:} Weighted particles $(X_t^i,w_t^i)_{i=1}^N$, functions $b_t$, $\partial_t\Psi_t$, $\nabla_y\Psi_t$, $\tPhi_t$, $\nabla_y\tPhi_t$.
\STATE \textbf{Output:} Values $\dot\psi_i \approx \dot\Psi_t(X_t^i)$.
\STATE Set $\hat\mu_t = \sum_{i=1}^N w_t^i \delta_{X_t^i}$.
\STATE For each $i,j \in \{1,\dots,N\}$, set $\phi_{ij} = \tPhi_t(X_t^i,X_t^j,\hat\mu_t)$, $g_{ij} = \nabla_y\tPhi_t(X_t^i,X_t^j,\hat\mu_t)$, $b_j = b_t(X_t^j)$, and $h_j = \nabla_y\Psi_t(X_t^j,\hat\mu_t)$.
\STATE For each $i,j \in \{1,\dots,N\}$, set $A_{ij} = \delta_{ij} - w_t^j \phi_{ij}$.
\STATE For each $i \in \{1,\dots,N\}$, set $r_i = \partial_t\Psi_t(X_t^i,\hat\mu_t) + \sum_{j=1}^N w_t^j \big( b_j \cdot g_{ij} + \phi_{ij} b_j \cdot h_j \big)$.
\STATE Solve $A\dot\psi = r$ for $\dot\psi = (\dot\psi_1,\dots,\dot\psi_N)^\top$.
\STATE \textbf{return} $(\dot\psi_i)_{i=1}^N$.
\end{algorithmic}
\end{algorithm}

\begin{algorithm}[t]
\caption{Fixed-Point $\dot \Psi_t$ Solver}
\label{alg:Psi_solver_fixedpoint}
\begin{algorithmic}[1]
\STATE \textbf{Input:} Particles and log-weights $(X_t^i,A_t^i)_{i=1}^N$, next-step particles $(X_{t+\delta t}^i)_{i=1}^N$, $b_t$, $\nabla_x\Psi_t$, step size $\delta t$, iterations $K_{FP}$, optional damping parameter $\alpha \in (0,1]$.
\STATE \textbf{Output:} Values $\dot\psi_i \approx \dot\Psi_t(X_t^i)$ and updated weights $w_{t+\delta t}^i$.
\STATE Set $\hat\mu_t = \sum_{i=1}^N w_t^i \delta_{X_t^i}$, and $a_i^{(0)} = A_t^i$, for $i=1,\dots,N$
\FOR{$k=0,\dots,K_{FP}-1$}
    \STATE Construct candidate weighted next-step measure $\hat\mu_{t+\delta t}^{(k)} = \sum_{j=1}^N w_j^{(k)} \delta_{X_{t+\delta t}^j}$.
    \STATE Compute finite differences $\dot\psi_i^{(k)} = \big(\Psi_{t+\delta t}(X_t^i,\hat\mu_{t+\delta t}^{(k)}) - \Psi_t(X_t^i,\hat\mu_t)\big)/\delta t$ (or using \eqref{eq:improved_dot_psi}).
    \STATE Update candidate log-weights updates $\tilde a_i^{(k+1)} = A_t^i +  \delta t \big[ \dot\psi_i^{(k)} + b_t(X_t^i) \cdot \nabla_x\Psi_t(X_t^i,\hat\mu_t) \big]$.
    \STATE Update next-step weights as  $a_i^{(k+1)} = (1-\alpha)a_i^{(k)} + \alpha \, \tilde a_i^{(k+1)}$.
    \STATE Construct updates next-step weights $w_i^{(k+1)} \propto e^{a_i^{(k+1)}}$.
\ENDFOR
\STATE Set $\dot\psi_i = \dot\psi_i^{(K_{FP}-1)}$ and $w_{t+\delta t}^i = w_i^{(K_{FP})}$, for $i=1,\dots,N$.
\STATE \textbf{Return:} Final iterations $(\dot\psi_i,w_{t+\delta t}^i)_{i=1}^N$.
\end{algorithmic}
\end{algorithm}
\section{Experimental Details and Additional Results}
\label{app:experimental_details}

\subsection{Low-dimensional experiments}
\label{app:low_dim_details}

\paragraph{Experimental setting}

For the base model, we train a simple flow-matching model to sample from a bimodal Gaussian mixture base distribution $p_1$, with means $(-1.0, 1.0)$, both modes with unit variance, and mode weights $(1,3)$ respectively. To construct a distributional reward, we consider tilting the model's output distribution towards a similar Gaussian mixture $\nu$ but now with respective weights $(3,1)$. For the results in the main paper, we use the squared MMD reward
\begin{equation}
    \mu^* = \argmin_\mu \big\{ \KL( \mu \| p_1) + \lambda^* \, \MMD^2( \mu , \nu  )  \big\}.
\end{equation}
We also demonstrate similar results for other reward functions below.
By optimising a Gaussian mixture model, we also obtain a highly accurate approximation to the true tilted solution $\mu^*$ from which we aim to sample; the resulting density is plotted in \Cref{fig:trajectories_3_panel} to give a visual indication of how well $\mu^*$ has been targeted.

For the reward function, we use a Gaussian radial basis kernel with bandwidth 5.0 (we note that results were robust across different kernel widths, provided they were sufficiently wide to give meaningful gradients). The results in \Cref{fig:trajectories_3_panel} are for $\lambda^*=10.0$, and again we note that similar behaviour was observed across $\lambda$ values. We generated 4,000 samples with 100 steps, and for mean-field sampling we resampled using residual resampling every 34 steps for the visualisation in \Cref{fig:trajectories_3_panel}, and every 10 steps in the other examples (we observed very similar performance for different resampling schemes). For the results in \Cref{fig:trajectories_3_panel,fig:objective_curves_main} we used the fixed-point iterations with no damping, and used 3 fixed-point iterations for the visualisation in \Cref{fig:trajectories_3_panel}. To evaluate the KL divergence, we used a leave-one-out estimator on pointclouds, using a Gaussian RBF kernel with bandwidth 0.2.  The low-dimensional experiments were run on a single NVIDIA GeForce RTX 2080Ti GPU.

\subsubsection{Additional Results}

\paragraph{Objective results for different $\lambda$ values}
In \Cref{fig:objective_curves_main}, we showed how the value of the objective optimality gap $J(\mu) - J(\mu^*)$ changes for different steering strengths, for standard gradient steering and mean-field steering. Here, $J(\mu) = \lambda^* \, \MMD^2(\mu, \nu) + \KL(\mu \| p_1)$, and we took $\lambda^*=10.0$. In particular, we saw that the optimum steering strength for mean-field steering coincided with the true $\lambda^*$, and that this was consistent across different $\sigma_t$ schedules, indicating that the mean-field sampling is correctly targeting the tilting measure $\mu^*$. In contrast, the best choice of steering strength for standard gradient steering did not correspond to the $\lambda^*$ in the objective, and thus would need searching over. In \Cref{fig:combined_schedule_objective_app}, we report the same results for different values of $\lambda^*$, and observe a similar pattern.

\begin{figure}[t]
    \centering

    \begin{subfigure}{\textwidth}
        \centering
        \includegraphics[width=0.8\textwidth]{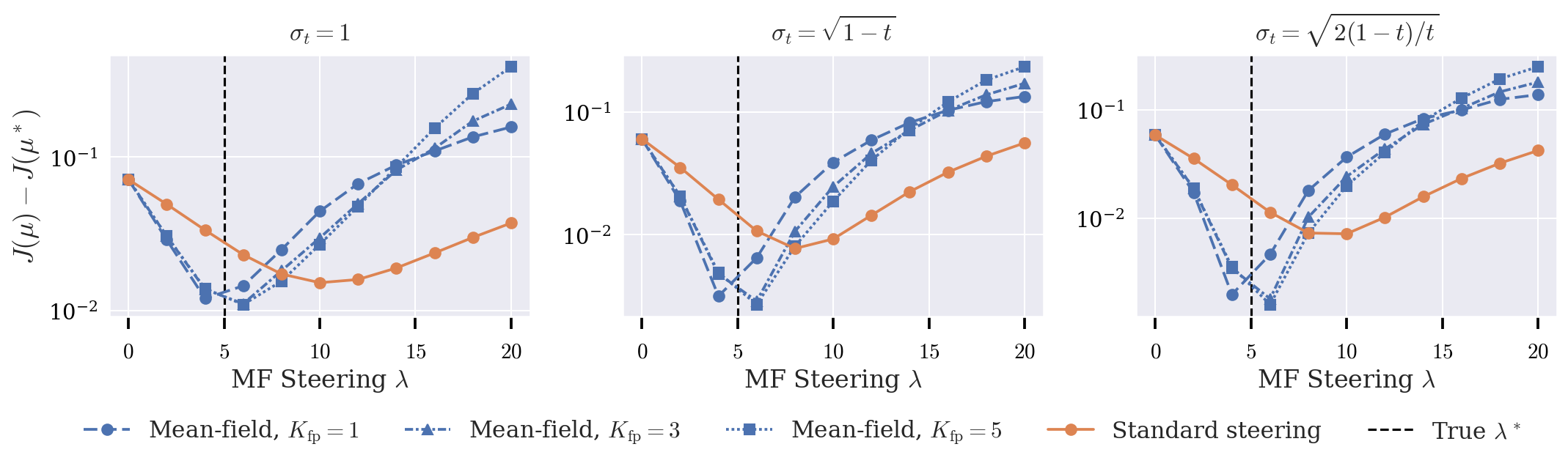}
    \end{subfigure}


    \begin{subfigure}{\textwidth}
        \centering
        \includegraphics[width=0.8\textwidth]{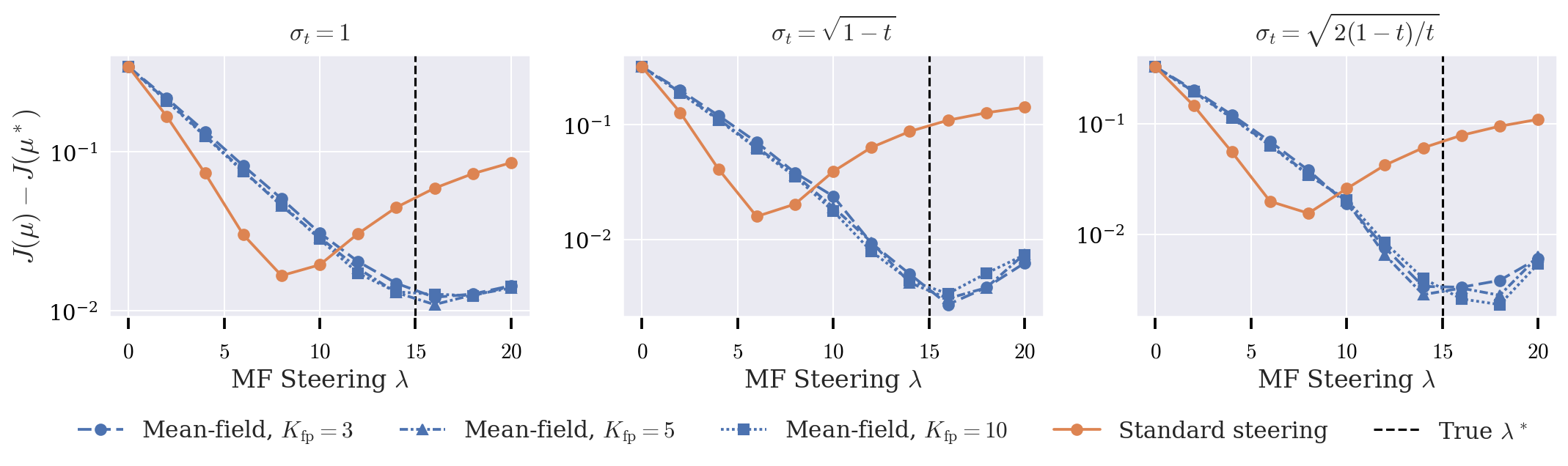}
    \end{subfigure}
    \caption{\textbf{Low-dimensional MMD Example:} Objective optimality gap plots, similar to \Cref{fig:objective_curves_main}, for additional $\lambda^*$ values 5.0 \textit{(top)} and 15.0 \textit{(bottom)}. As in \Cref{fig:objective_curves_main}, the objective gap is minimised when using mean-field steering with the correct value of $\lambda$.}
    \label{fig:combined_schedule_objective_app}
\end{figure}

\paragraph{Comparison of different $\dot \Psi_t$ solvers}
In Section \ref{sec:method}, we provide two possible approaches for computing the term $\dot \Psi_t$, namely a fixed-point procedure (\Cref{alg:Psi_solver_fixedpoint}), and also an implicit linear solver \Cref{alg:Psi_solver_implicit}. In the majority of our experiments, we use the fixed-point solver for simplicity, however here we provide a comparison and discussion regarding the choice of solver.

We generally found the fixed-point solver to converge quickly and accurately. When the tilt strength is not too high, damping in \Cref{alg:Psi_solver_fixedpoint} is not required, however for large tilt strengths we found it helped to ensure the iterations converged. The implicit linear solver worked reliably in all cases, though its implementation is more involved. We visualise results for standard and large tilt strengths $\lambda\in \{10.0, 40.0\}$ in \Cref{fig:toy_mmd_dotpsi_comp}, which reflect the discussion here.

\paragraph{Dependence on $N$}
To assess the dependence on the batch size $N$, we plot the Sinkhorn divergence relative to approximated `ground-truth' $\mu^*$ for different values of $N$ in \Cref{fig:N_dependence}. As expected, the approximation becomes increasingly accurate as $N$ increases, and for large $N$ is very consistent between runs.

\begin{figure}
    \centering
    \includegraphics[width=0.3\linewidth]{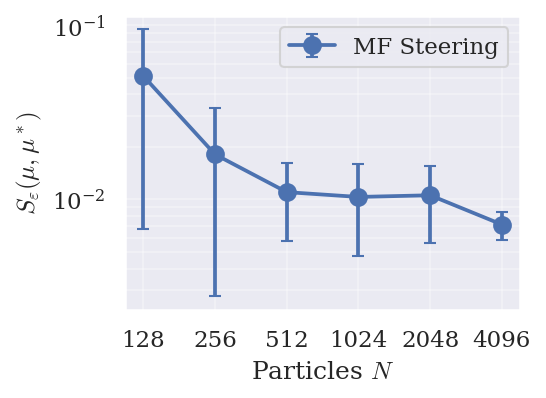}
    \caption{Sinkhorn divergence $\SD_\eps(\mu, \mu^*)$ relative to approximated `ground-truth' $\mu^*$, for the generated mean-field steering samples using different batch sizes $N$ (mean$\pm$std over 8 runs, using entropic regularisation $\eps=0.1$).}
    \label{fig:N_dependence}
\end{figure}

\paragraph{Runtime Analysis}
In \Cref{tab:runtime_comparison}, we also report runtimes for the different approaches in the MMD example. In this low-dimensional setting, the main additional cost comes from the kernel evaluations required to compute $\dot \Psi_t$, which explains the increased inference time relative to gradient steering. However, we can re-use these kernel evaluations across the fixed-point iterations, meaning that subsequent fixed-point iterations do not noticeably increase the sampling time. Generally, we found the implicit linear solver to be slower than the fixed-point approach. We note that in the higher-dimensional cases, the bottleneck increasingly becomes the network evaluation rather than such reward-based computations, meaning the cost of the $\dot \Psi_t$ solver becomes less pronounced.

\begin{table}[t]
\centering
\begin{tabular}{lccccc}
\toprule
Grad & $K_{\mathrm{FP}}=1$ & $K_{\mathrm{FP}}=4$ & $K_{\mathrm{FP}}=10$ & Implicit Linear \\
\midrule
 1.6 & 3.9 & 3.9 & 3.9 & 5.8 \\
\bottomrule
\end{tabular}
\vspace{1em}
\caption{Runtime comparison across steering and solver variants, in the MMD example. Time taken for 100 step simulations (s).}
\label{tab:runtime_comparison}
\end{table}

\begin{figure}[t]
    \centering

    \begin{subfigure}{\textwidth}
        \centering
        \includegraphics[width=0.9\textwidth]{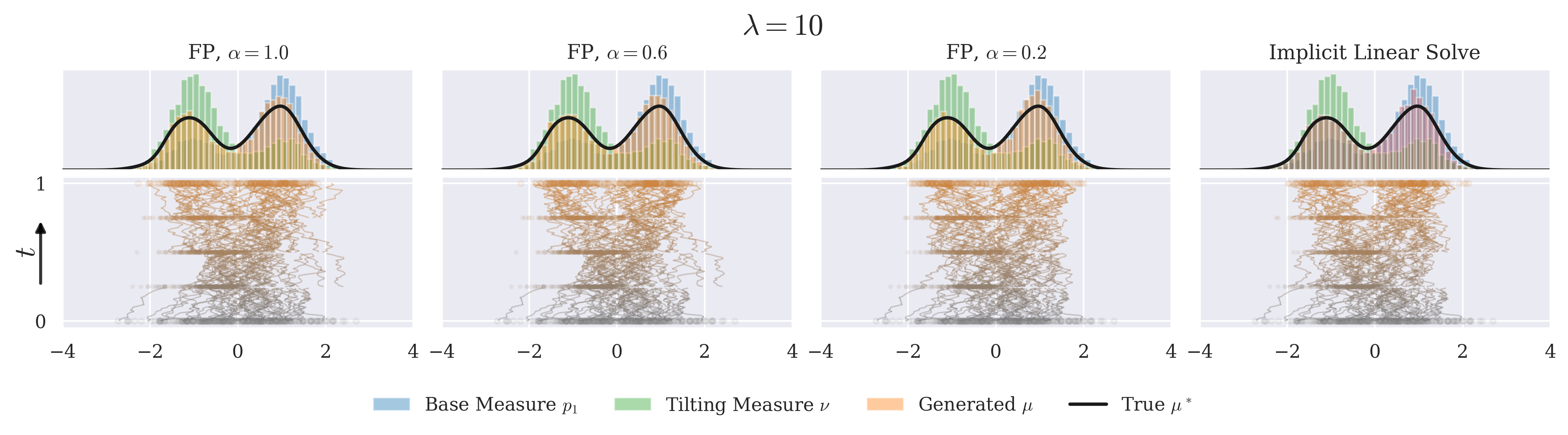}
    \end{subfigure}


    \begin{subfigure}{\textwidth}
        \centering
        \includegraphics[width=0.9\textwidth]{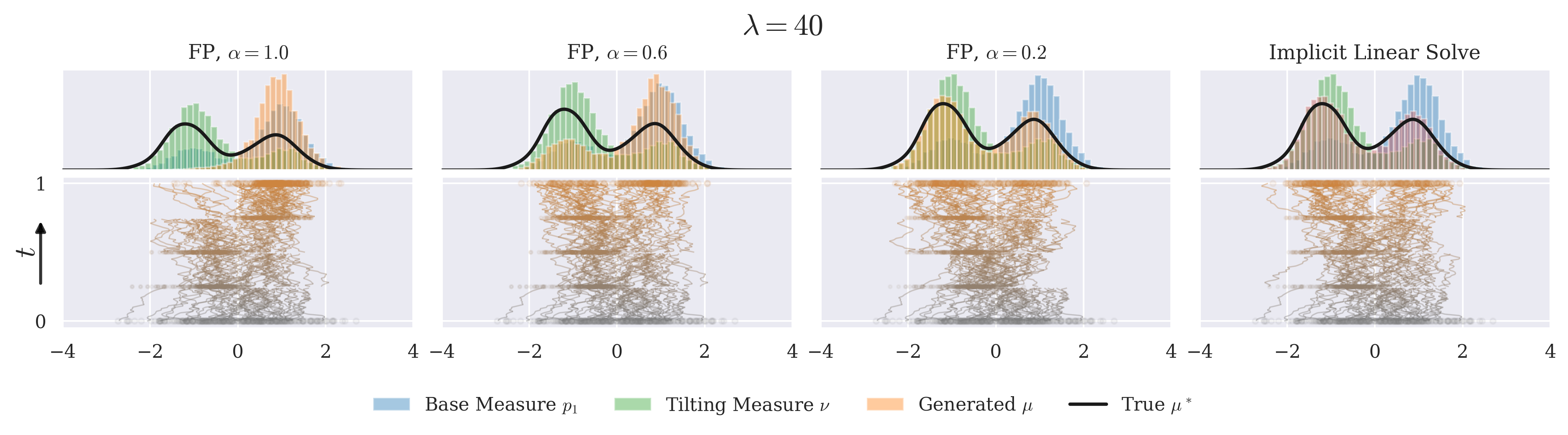}
    \end{subfigure}
    \caption{\textbf{Comparison of $\dot \Psi_t$ solvers, low-dimensional MMD example:} \textit{(Top)} For smaller tilt strength $\lambda=10.0$, the fixed-point procedure converges accurately for all damping levels. \textit{(Bottom)} A stronger tilt strength necessitates using a larger damping factor in order to converge. \textit{(Right)} The implicit linear solver computes the solution accurately in both cases. }
    \label{fig:toy_mmd_dotpsi_comp}
\end{figure}

\subsubsection{Different Reward Functions}

To verify that the same behaviour holds in other settings, we also consider the same experimental setup as above but for some alternative choices of the reward function $\cR(\mu)$.

\paragraph{Sinkhorn Divergence}
The Sinkhorn Divergence $\SD_\eps$ \citep{genevay2018_sinkdiv} is a smooth, debiased approximation to the Wasserstein distance between two measures, regularised by an entropic parameter $\eps$. As it is differentiable, we can steer to minimise the objective
\begin{equation}
    J(\mu) = \lambda \, \SD_\eps(\mu, \nu) + \KL(\mu \| p_1).
\end{equation}
We display obtained trajectories and the corresponding objective optimality gaps for this choice of reward in \Cref{fig:sinkhorn_divergence_results_app}, using $\eps=0.1$ as the regularisation value. Again, $J(\mu)$ is generally minimised by mean-field steering with $\lambda=\lambda^*$, confirming that our procedure is targeting the correct $\mu^*$ in this setting too. This is also confirmed visually in the histogram plots.

\begin{figure}[t]
    \centering

    \begin{subfigure}{\textwidth}
        \centering
        \includegraphics[width=0.9\textwidth]{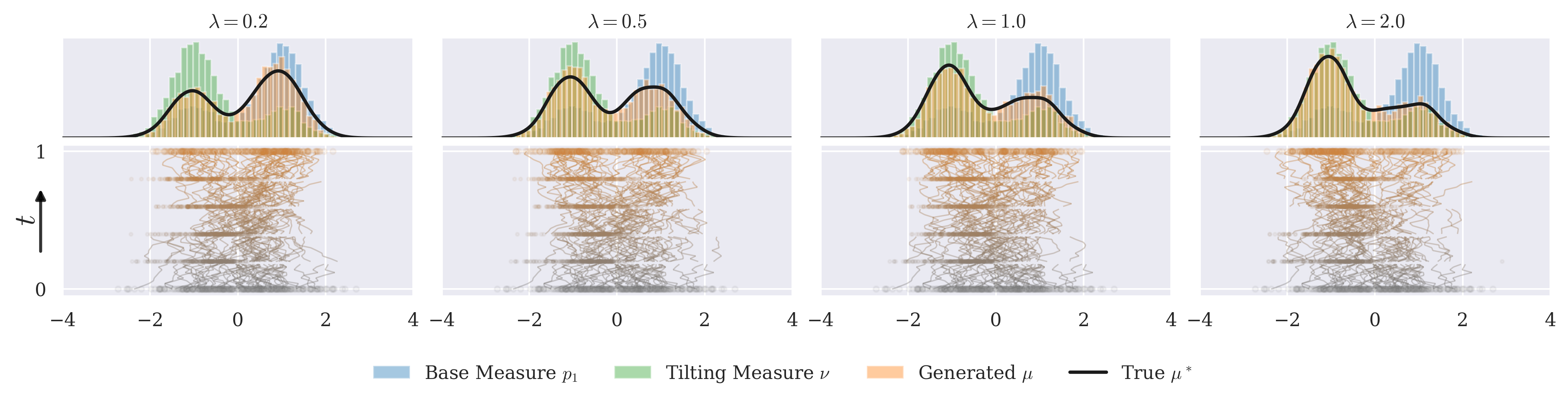}
    \end{subfigure}

    \begin{subfigure}{\textwidth}
        \centering
        \includegraphics[width=0.75\textwidth]{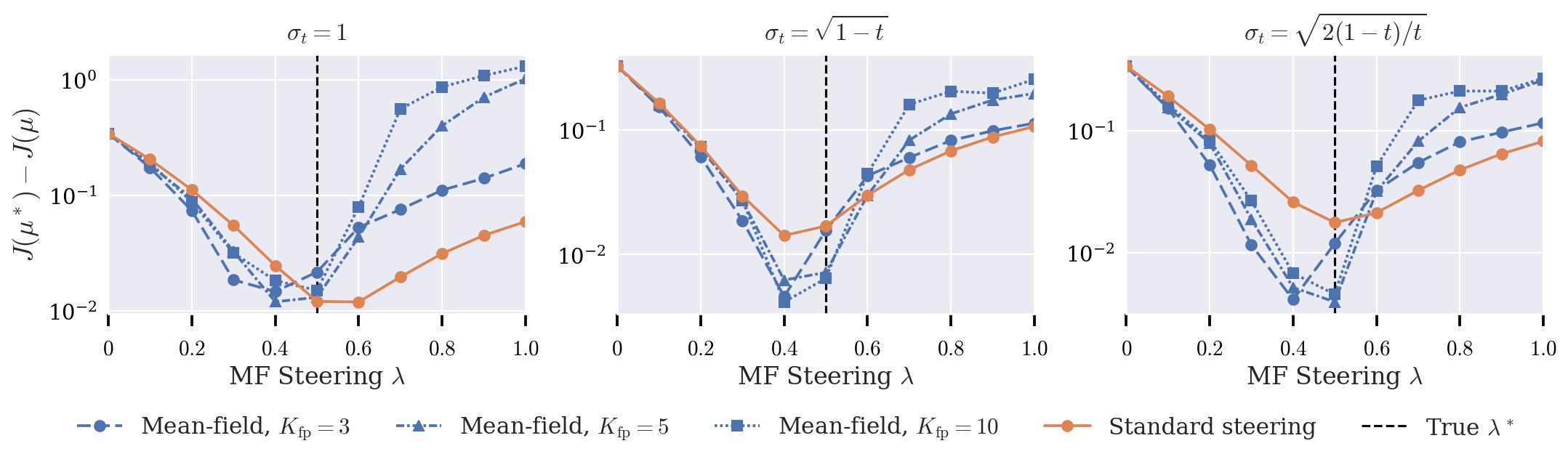}
    \end{subfigure}
    \caption{\textbf{Low-dimensional Sinkhorn Divergence Example:} \textit{(Top)} Comparison of trajectories obtained using mean-field steering, for increasing $\lambda$. The $\dot \Psi_t$ use iterations $\{3,3,10,20\}$ with damping parameters $\{1.0, 1.0, 0.6, 0.2\}$ respectively. \textit{(Bottom)} Comparison of the optimality gap $J(\mu)-J(\mu^*)$ for $\lambda^*=0.5$, for mean-field and gradient-only steering. The gradient-steering coefficient is varied linearly in $[0, 3.0]$.}
    \label{fig:sinkhorn_divergence_results_app}
\end{figure}

\paragraph{Entropy}
We also do similarly using the entropy $\cR(\mu) = - \int \log \mu \, \rmd \mu$. Rather than steering towards a `tilting' measure $\nu$, such a reward encourages the particles to spread out. We plot the same results as above for this new reward in \Cref{fig:entropy_results_app}, and observe similar behaviour in this setting too.

\begin{figure}[t]
    \centering

    \begin{subfigure}{\textwidth}
        \centering
        \includegraphics[width=0.9\textwidth]{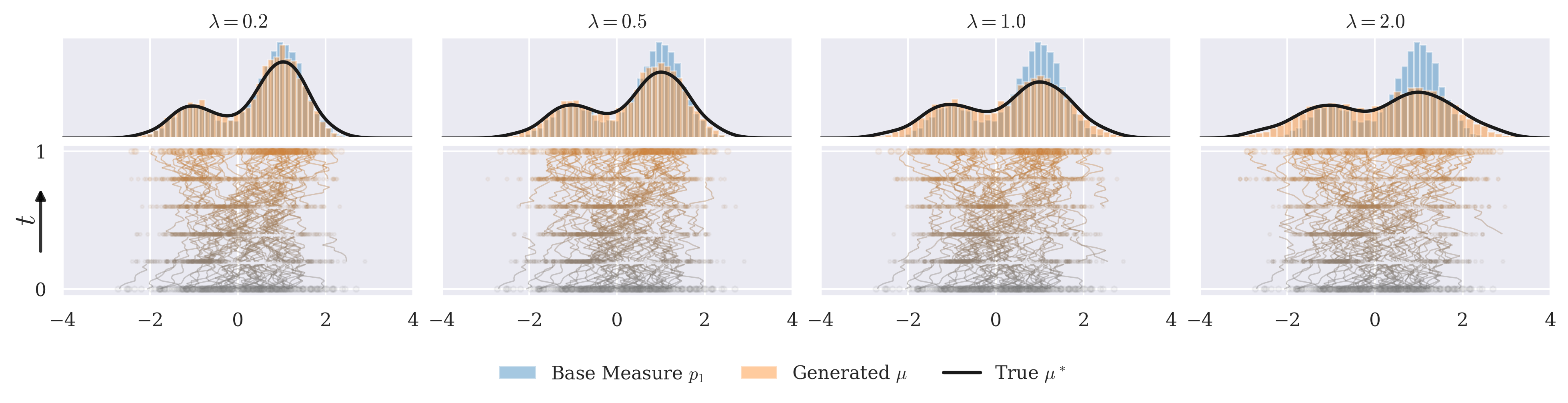}
    \end{subfigure}

    \begin{subfigure}{\textwidth}
        \centering
        \includegraphics[width=0.75\textwidth]{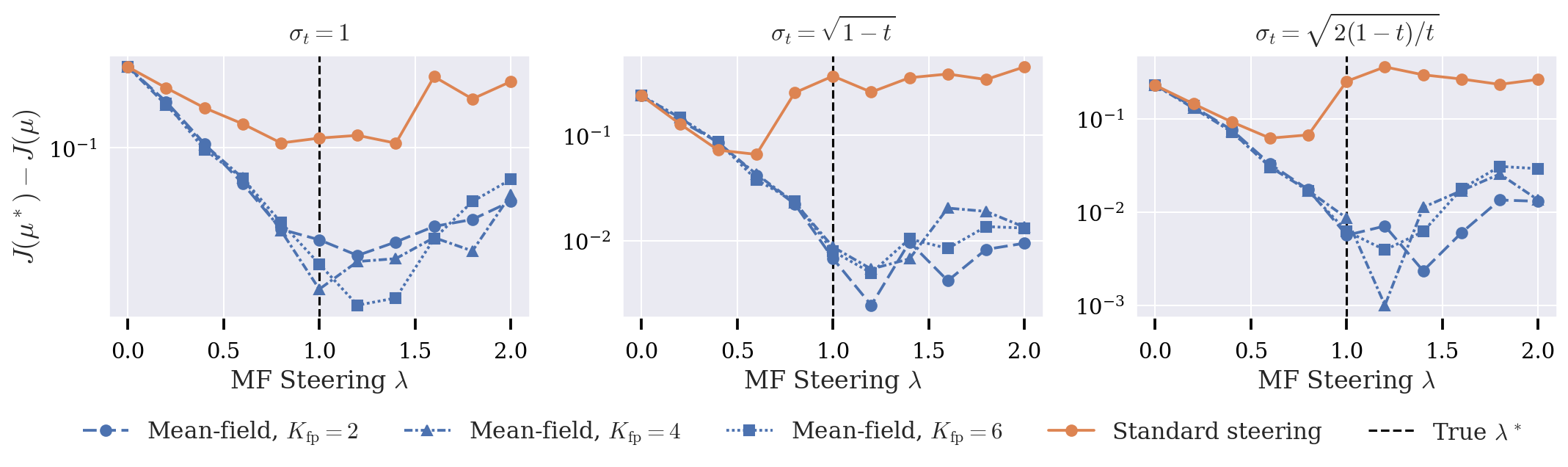}
    \end{subfigure}
    \caption{\textbf{Low-dimensional Entropy Example:} \textit{(Top)} Comparison of trajectories obtained using mean-field steering, for increasing $\lambda$. The $\dot \Psi_t$ use iterations $\{3,3,5,10\}$ with damping parameter $\alpha \in \{1.0, 1.0, 1.0, 0.5\}$ respectively.\textit{(Bottom)} Comparison of the optimality gap $J(\mu)-J(\mu^*)$ for $\lambda^*=1.0$, for mean-field and gradient-only steering.}
    \label{fig:entropy_results_app}
\end{figure}

\subsubsection{Gaussian Ground-Truth Validation in Higher Dimensions}

Above, we validated that our mean-field steering procedure correctly targets the true $\mu^*$ in several 1$d$ settings, in which the ground-truth is not available in closed form but can nevertheless be accurately approximated in a tractable manner. We now provide an additional example to validate our method in higher dimensions. Numerical optimisation to obtain an approximation to the ground-truth $\mu^*$ is no longer tractable, so we consider a setting in which $\mu^*$ is available in closed form. Specifically, we consider a setting where the base diffusion samples a standard Gaussian $p_1 = \mathcal{N}(0, I_d)$, and the reward is defined based on the mean of $\mu$ as $\mathcal{R}(\mu) = - \tfrac{\lambda}{2} \lVert \mathbb{E}_{\mu}[X] - a \rVert_2^2$ for some $a\in \mathbb{R}^d$. In this setting, the ground-truth can be computed analytically and is given in closed form as $\mu^* = \mathcal{N}( \tfrac{\lambda}{1+\lambda} a, I_d)$.

In \Cref{tab:Gaussian_BW_distances} below, we report the Bures-Wasserstein distance to the ground-truth when $\lambda^*=1.0$ and $a = \mathbf{1}$. We run the procedure for different values of $\lambda$, and as in the existing examples we again see that the value is minimised for the true value $\lambda=\lambda^*$, indicating that our approach again targets the correct $\mu^*$ (mean $\pm$ std, over 8 runs). The experiments are run with 100 steps, and 10,000 samples.

\begin{table}[h]
\centering
\begin{tabular}{lccccc}
\toprule
\textbf{Steering $\boldsymbol{\lambda}$:} & $\mathbf{0.6}$ & $\mathbf{0.8}$ & $\boldsymbol{\lambda^*=1.0}$ & $\mathbf{1.2}$ & $\mathbf{1.4}$ \\
\midrule
$\mathbf{d=5}$ & $0.305 \pm 0.009$ & $0.196 \pm 0.025$ & $0.150 \pm 0.013$ & $0.192 \pm 0.017$ & $0.245 \pm 0.014$ \\
$\mathbf{d=10}$ & $0.515 \pm 0.022$ & $0.379 \pm 0.018$ & $0.346 \pm 0.014$ & $0.366 \pm 0.018$ & $0.429 \pm 0.018$ \\
$\mathbf{d=20}$ & $0.830 \pm 0.011$ & $0.664 \pm 0.021$ & $0.610 \pm 0.021$ & $0.625 \pm 0.014$ & $0.675 \pm 0.028$ \\
\bottomrule
\end{tabular}
\vspace{1em}
\caption{Bures-Wasserstein distance to the ground-truth $\mu^*$ corresponding to a true tilting strength $\lambda^*=1.0$ in the Gaussian example, for different steering strengths $\lambda$. The distance is minimised for the correct value $\lambda=\lambda^*$, indicating that $\mu^*$ is being targeted correctly.}
\label{tab:Gaussian_BW_distances}
\end{table}

We also report the dependence on the number of particles $N$ in \Cref{tab:Gaussian_N_dependence}, for the same Gaussian example with $d=10$.

\begin{table}[h]
\centering
\begin{tabular}{cccc}
\toprule
$\mathbf{N=1{,}000}$ & $\mathbf{N=3{,}000}$ & $\mathbf{N=10{,}000}$ & $\mathbf{N=30{,}000}$ \\
\midrule
$0.742 \pm 0.073$ & $0.478 \pm 0.023$ & $0.330 \pm 0.018$ & $0.287 \pm 0.016$ \\
\bottomrule
\end{tabular}
\vspace{1em}
\caption{Bures-Wasserstein distance to the ground-truth $\mu^*$ corresponding to a true tilting strength $\lambda^*=1.0$ in the Gaussian example, for different numbers of particles $N$.}
\label{tab:Gaussian_N_dependence}
\end{table}


\subsection{HIV-1 Protease}
\label{app:v6apo_details}

Recently, there have been many works that have considered steering diffusion models for protein conformation generation according to experimental observations (see \citet{Fadini2026, Liu25_ExEnDiff, levy2025solvinginverseproblemsprotein, maddipatla2025inverse, Maddipatla2026_nature, li2026robustinferencetimesteeringprotein}, to name a few). In Section \ref{sec:HIV_protease_main}, we first considered an example in which we steer generations for HIV-1 protease according to an experimentally observed distribution over residue distances \citep{Liu16_HIV1conformation}.
HIV-1 protease is a symmetric homodimer with 99-residues in each monomer, resulting in generating a 1536 atom system giving a full dimensionality of 4608.
The system exhibits an important `flap region', which opens and closes between different conformational states (\Cref{fig:v6apo_visualisation}). The double electron-electron resonance (DEER) method can be used to infer proportions across different states by attaching spin-labels to selected residues and approximately measuring the distance between them, aggregated over the conformational ensemble. We will consider the v6 variant of the protease as studied in \citet{Liu16_HIV1conformation}; this is known from X-ray crystallography and MD modelling to exhibit four different states: \textit{closed},\textit{ semi-open}, \textit{wide-open}, and \textit{tucked}. The DEER data analysed by \citet{Liu16_HIV1conformation} suggests that these states are occupied by proportions 21\% closed, 61\% semi-open, 8\% wide-open, and 10\% tucked respectively. However, samples from Boltz-2 are strongly biased towards the closed state by comparison, with the states being sampled with proportions 83\% closed, 17\% semi-open with no tucked or wide-open generations (\Cref{fig:v6apo_histograms_mf}, left). Following the analysis from \citet{Liu16_HIV1conformation}, we will aim to tilt towards a Gaussian mixture $\nu$ defined over residue 55 distances, with relative proportions 25\% and 75\% (adjusted for removing the tucked and wide-open states), and optimise the objective
\begin{equation}
    \label{eq:MMD_objective_proj}
    \mu^* = \argmin_\mu \big\{ \KL( \mu \| p_1) + \lambda \, \MMD^2( T \# \mu , \nu  )  \big\},
\end{equation}
where $T$ denotes the map to the residue-55 $C_\alpha-C_\alpha$ distance. For the MMD computation, we used a Gaussian RBF kernel on this residue-distance space, with bandwidth 2.5.

\paragraph{Experimental Setting}

We implement our mean-field steering method in Boltz-2, performing the appropriate modifications for the variance-exploding formulation used by the model. We use 50 steps, and slightly adjust the time schedule to spend longer in the middle of the generative procedure (according to a sigmoid CDF), as this is where the bifurcation into the distinct states appears to occur. We set the step-scale parameter of Boltz-2 to 1.5, which is a common value to control the temperature from which the distribution is sampled. In all Boltz-2 experiments, we use a sample size of 64 and generate 1024 samples, using the multi-batch procedure in \Cref{alg:interacting_particle_steering_multibatch}. We use residual resampling at every step (though we also obtained similar results with less resampling). We observed occasional spikes in the derivative through the denoiser, so for stability we clipped the maximum change in the log-weights to 1.0 at each step, and clipped the steering term $\nabla \Psi_t(X_t^i, \hat \mu_t)$ so that its norm did not exceed 10\% of the norm of the standard drift $b_t + \eps_t s_t$. As the cost of the Picard iterations is negligible (see \Cref{tab:compute_cost_profiling}), we used $K_{FP}=50$ fixed-point iterations with a damping factor of $\alpha=0.5$ to be confident of convergence (we remark that this is likely very conservative). The Boltz-2 experiments were run on a single NVIDIA L40S GPU.

Note that when evaluating the objective $J(\mu)$, it is intractable to evaluate the $\KL( \mu \| p_1)$ term in the full ambient space, so instead we compute a proxy on the 1$d$ projection onto the residue-55 distances. As the data exhibits two prominent modes with different scales, we use a bandwidth of 0.2 for the left mode and 0.4 for the right mode when computing the KL divergence proxy; other than this, we follow the same procedure as in the low-dimensional experiments.

\subsubsection{Additional Results}

In \Cref{fig:v6apo_histogram_comp_app}, we provide similar histogram plots to the ones in \Cref{fig:v6apo_histograms_mf} for the standard gradient-steering approach, which corresponds to the mean-field steering but without the corrective weights, with the mean-field steering results plotted again for comparison. We see that the standard gradient-steering, while steering towards the targets, results in a generated distribution that is slightly different in shape to $\mu^*$, with a stronger peak. In contrast, mean-field steering exhibits an accurate fit to $\mu^*$.

In \Cref{fig:v6apo_objectives_app}, we also plot the objective curves for values $\lambda^*\in \{1,3\}$, in addition to $\lambda^*=2$ also reported in \Cref{fig:v6apo_objective}. For each, we again see that the best choice of $\lambda$ indeed coincides with the true value $\lambda^*$, indicating that our method is correctly targeting $\mu^*$. In contrast, using strength $\lambda=\lambda^*$ is not the best choice of steering strength when using gradient-only steering.

In \Cref{tab:v6apo_N_dependence}, we show how the approximation to the proxy `ground-truth' in the induced 1-dimensional projected space changes as the number of batches increases, as measured by Sinkhorn-divergence using entropy-regularisation 0.1. This experiment used 8 sequential batches sampled using the multi-batch procedure. As expected, the approximation quality improves as the number of batches increases; we anticipate that this is helped by using the already-generated terminal samples in the empirical approximation of $\hat \mu_1$, which are better quality than the Tweedie estimates.

\begin{figure}[t]
    \centering

    \begin{subfigure}{\textwidth}
        \centering
        \includegraphics[width=0.9\textwidth]{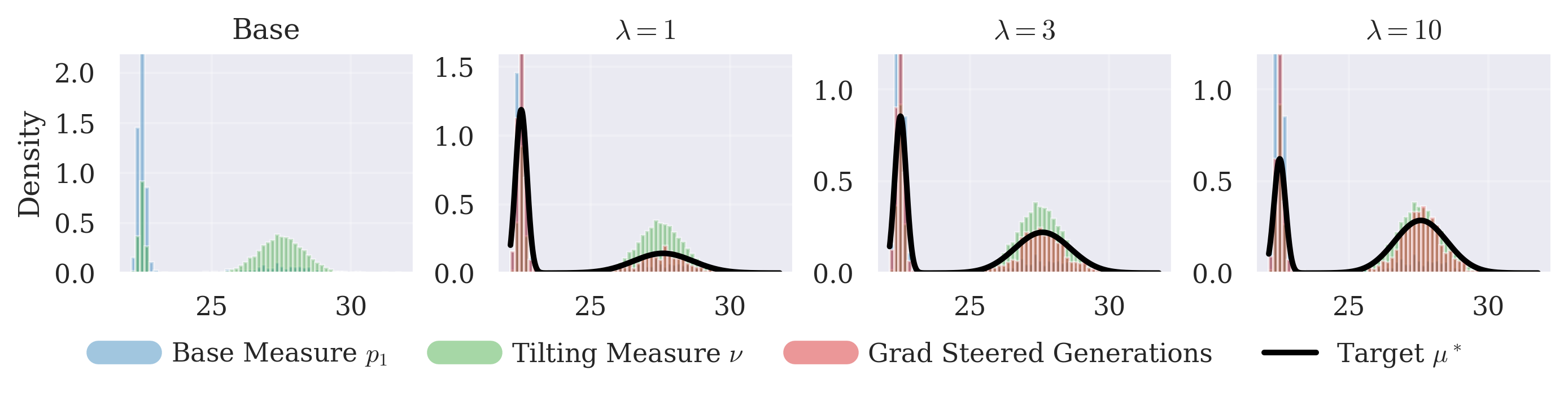}
    \end{subfigure}

    \begin{subfigure}{\textwidth}
        \centering
        \includegraphics[width=0.9\textwidth]{figures/v6apo_autodiff_histograms_base_target_mf_gmm_lambdas_1_3_10.png}
    \end{subfigure}
    \caption{\textbf{HIV-1 Protease Histogram Comparison:} Comparison of the histograms generated for using gradient-only steering \textit{(top)}, and the same steering but now using the mean-field reweighting \textit{(bottom)}. The mean-field steering gives a better fit to the `true' distribution $\mu^*$, as it is targeting this measure in a principled way, whereas gradient steering gives a distribution that is a slightly different shape.}
    \label{fig:v6apo_histogram_comp_app}
\end{figure}
\begin{figure}[t]
    \centering
    \begin{subfigure}{0.32\textwidth}
        \centering
        \includegraphics[width=0.8\linewidth]{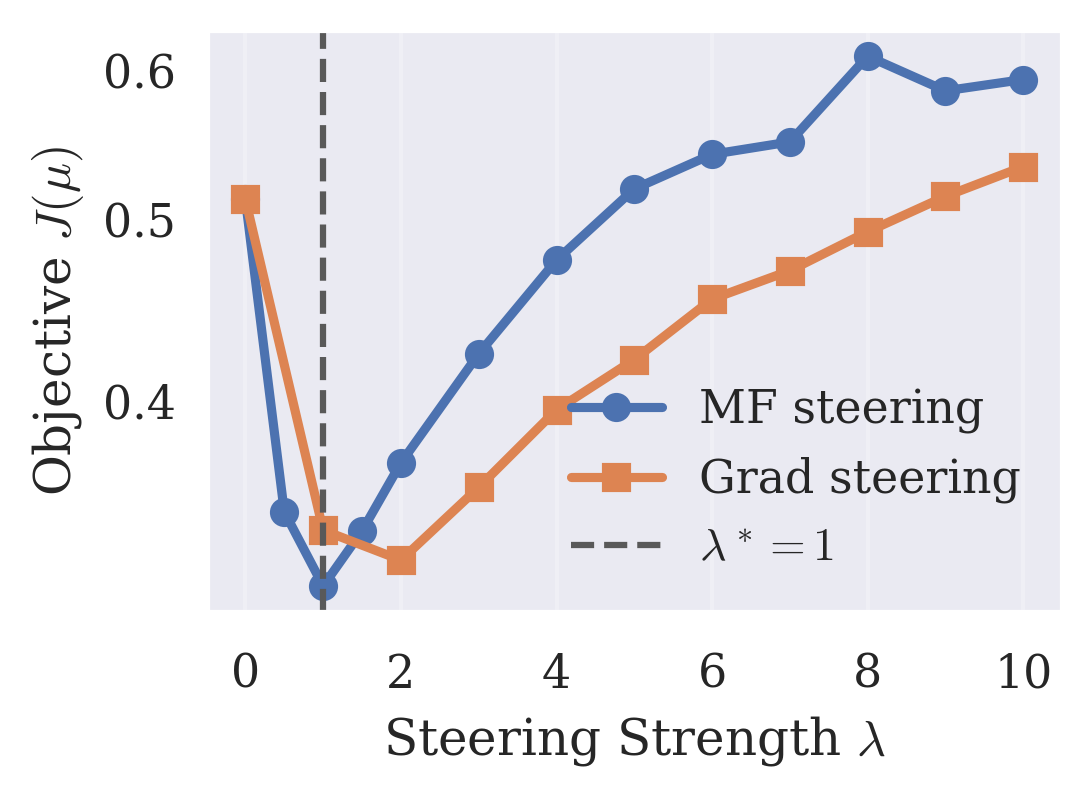}
        \caption{$\lambda^*=1.0$}
        \label{fig:first}
    \end{subfigure}
    \hfill
    \begin{subfigure}{0.32\textwidth}
        \centering
        \includegraphics[width=0.8\linewidth]{figures/v6apo_autodiff_objectives_mf_grad_true_lambda_2p0.png}
        \caption{$\lambda^*=2.0$}
        \label{fig:second}
    \end{subfigure}
    \hfill
    \begin{subfigure}{0.32\textwidth}
        \centering
        \includegraphics[width=0.8\linewidth]{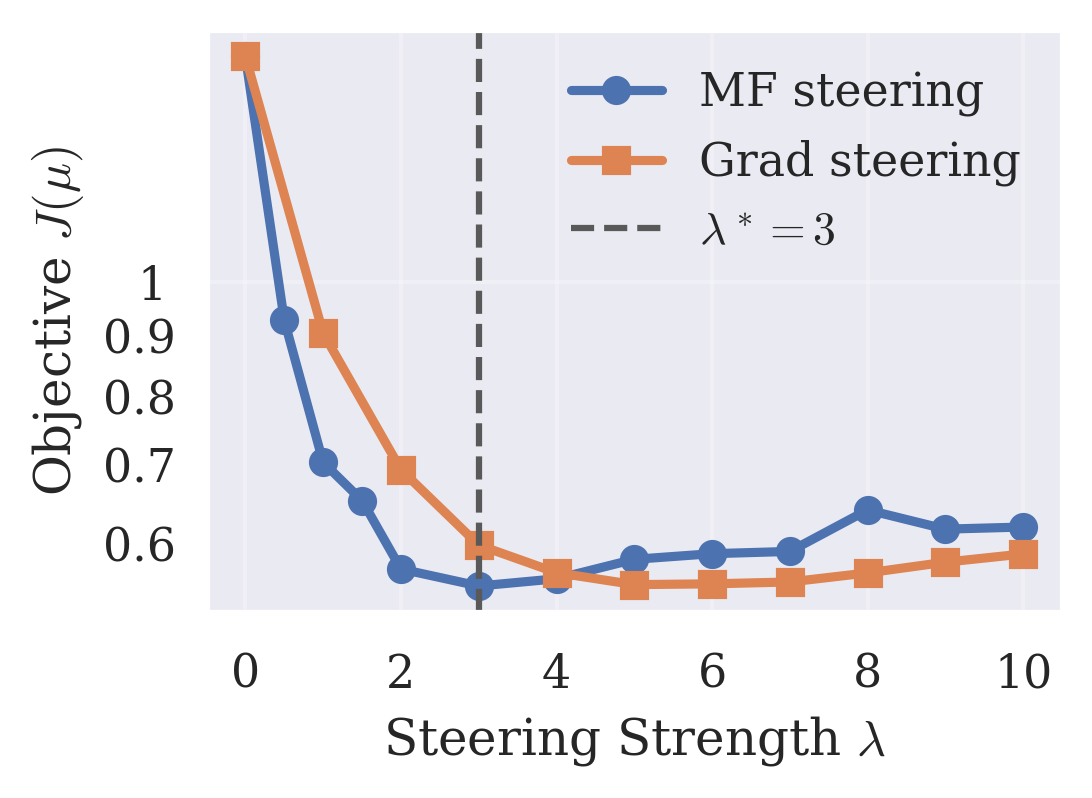}
        \caption{$\lambda^*=3.0$}
        \label{fig:third}
    \end{subfigure}
    \caption{\textbf{HIV-1 Protease Objective:} Values of $J(\mu) = \lambda^* \, \MMD^2(\mu, \nu) + \KL(\mu \| p_1)$, for the different steering mechanisms with varying steering strengths.}
    \label{fig:v6apo_objectives_app}
\end{figure}

\begin{table}[h]
\centering
\begin{tabular}{cccc}
\toprule
$\mathbf{2\ Batches}\ (N=128)$ & $\mathbf{4\ Batches}\ (N=256)$ & $\mathbf{6\ Batches}\ (N=384)$ & $\mathbf{8\ Batches}\ (N=512)$ \\
\midrule
$2.60$ & $0.24$ & $0.10$ & $0.05$ \\
\bottomrule
\end{tabular}
\vspace{1em}
\caption{Sinkhorn-divergence between the proxy `ground-truth' and the induced distribution of residue-55 distances in the projected 1-dimensional space, as the number of batches increases. Experiment ran for 8 batches of 64, with 5 Picard iterations and damping parameter 0.8.}
\label{tab:v6apo_N_dependence}
\end{table}

\subsection{Adenylate Kinase}
\label{app:Adk}

Another route towards observing a protein's conformational landscape is through molecular dynamics (MD) simulations. Atomistic MD is accurate but very expensive; this has motivated cheaper coarse-grained simulations to infer information about the higher-level structure. Such simulations could be used to infer approximate distributions over certain collective variables, such as important angles or residue distances. Given such information, one may then wish to `tilt' the full-atom generations of a diffusion model towards such approximate distributional properties of the higher-level structure.

As an illustrative example of this idea, we consider another example using Boltz-2 in which we sample the conformational ensemble of adenylate kinase (AdK). This consists of 1664 atoms giving a dimensionality of 4992. AdK is known to transition between open and closed states through motions that are well-characterised through two collective variables: the LID-CORE angle and the NMP-CORE angle (\Cref{fig:adk_visualisation}). From MD simulations, one can observe the joint distribution over these two angles. In our experiment, we consider tilting according to data from \citet{BECKSTEIN2009160}, which is displayed as the underlying heatmap in \Cref{fig:adk_mf_heatmaps}.

We generate the tilting data by sampling 4,000 points according to the densities provided by \citet{BECKSTEIN2009160}. We again optimise for an MMD objective as in \eqref{eq:MMD_objective_proj}, in which the MMD is now evaluated on the 2-dimensional angle space using an isotropic kernel bandwidth of 10.0. We stop the reweighting procedure at time 0.75 so that the duplicated particles become sufficiently dispersed in the remaining generation time. For the plot in \Cref{fig:top_left_proportion}, we classify a generation as in the `upper-left' region if the NMP and LID angles are in the ranges $(40 \degree, 50\degree)$ and $(112\degree, 150\degree)$ respectively.
We remark that we observed negligible change in the proportion of steric clashes in the generations for the steering strengths we considered. Across both steering mechanisms, the mean clash rate per 1000 heavy atoms varied between 6.0 and 6.7 across the different strengths. It is unfortunately difficult to reliably evaluate a 2$d$ proxy for the $\KL$ divergence in this example, as $p_1$ does not place sufficient mass in the upper-left region. In \Cref{fig:adk_mmd_plot_app} we plot just the squared MMD value, which similarly to \Cref{fig:top_left_proportion,fig:adk_mf_heatmaps} demonstrate that mean-field steering produces a stronger shift towards $\nu$.

\begin{figure*}[!t]
    \centering

    \begin{minipage}{0.3\textwidth}
        \centering
        \includegraphics[width=0.9\textwidth]{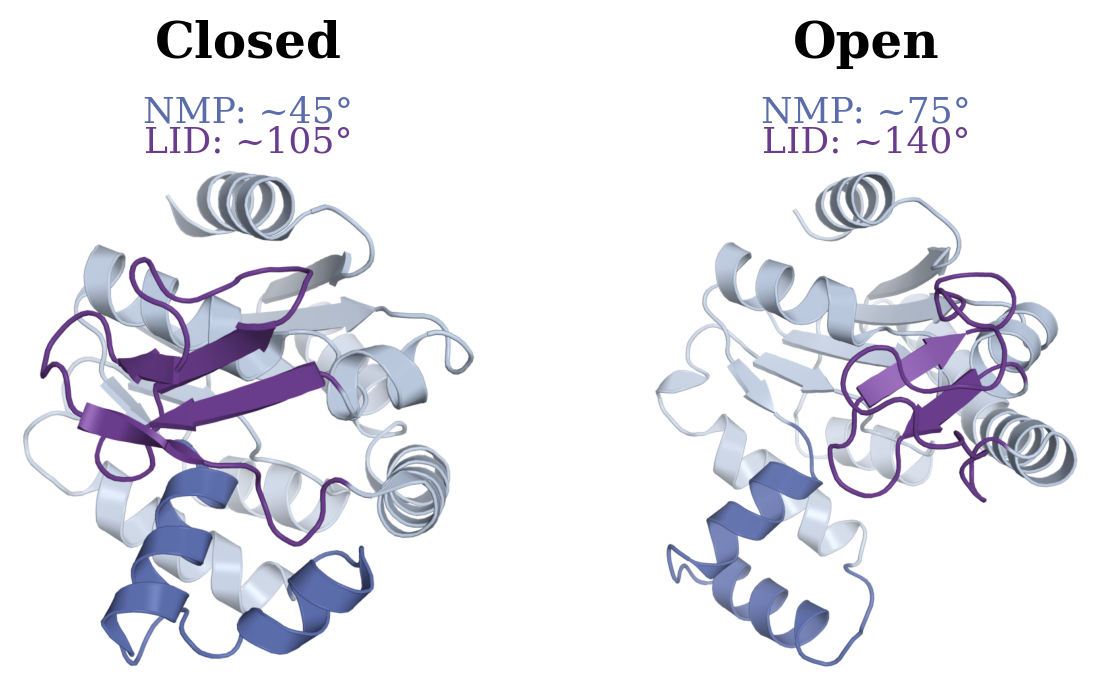}
        \caption{Open and closed conformations of Adenylate Kinase.}
        \label{fig:adk_visualisation}
    \end{minipage}
    \hfill
    \begin{minipage}{0.3\textwidth}
        \centering
        \includegraphics[width=0.8\textwidth]{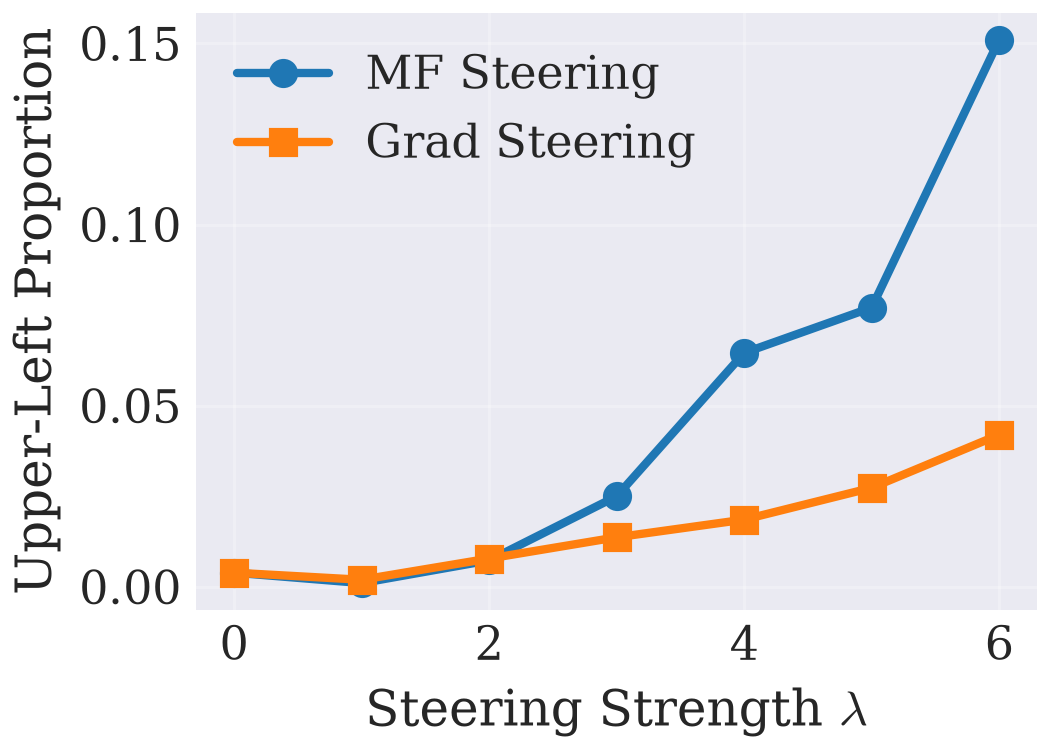}
        \caption{The proportion of generations in the `upper-left' region, for different $\lambda$.}
        \label{fig:top_left_proportion}
    \end{minipage}
    \hfill
    \begin{minipage}{0.3\textwidth}
        \centering
        \includegraphics[width=0.8\textwidth]{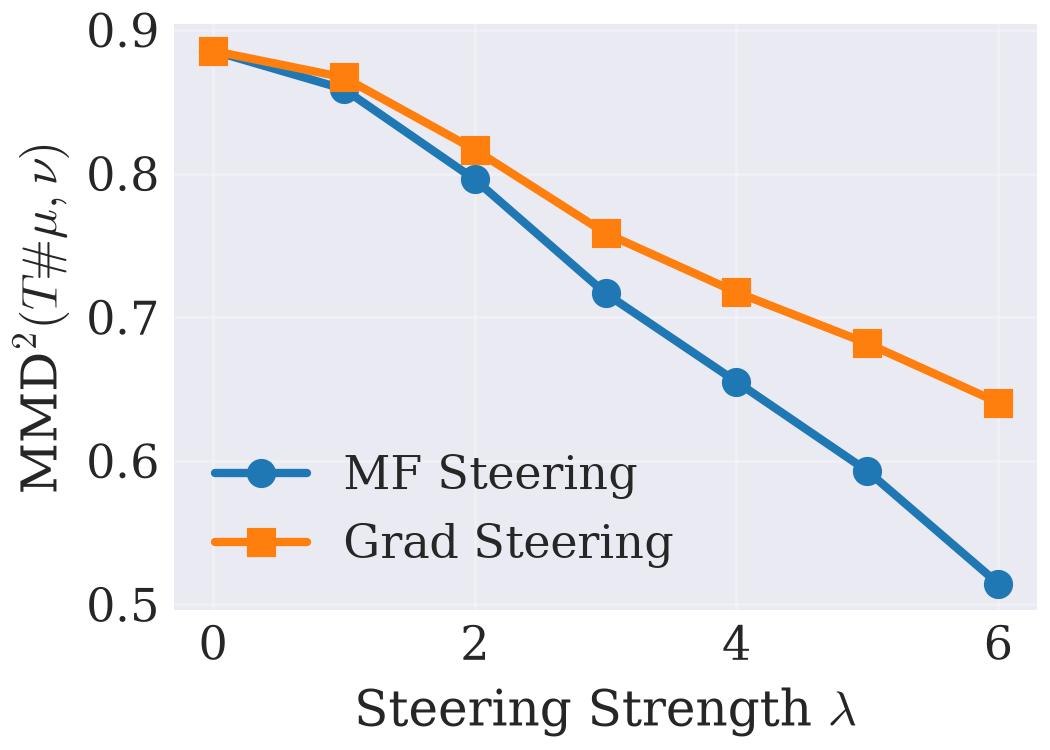}
        \caption{$\MMD^2(T\# \mu, \nu)$ for increasing steering strength $\lambda$.}
        \label{fig:adk_mmd_plot_app}
    \end{minipage}

    \vspace{1.5em}

    \begin{subfigure}{0.49\textwidth}
        \centering
        \includegraphics[width=0.9\linewidth]{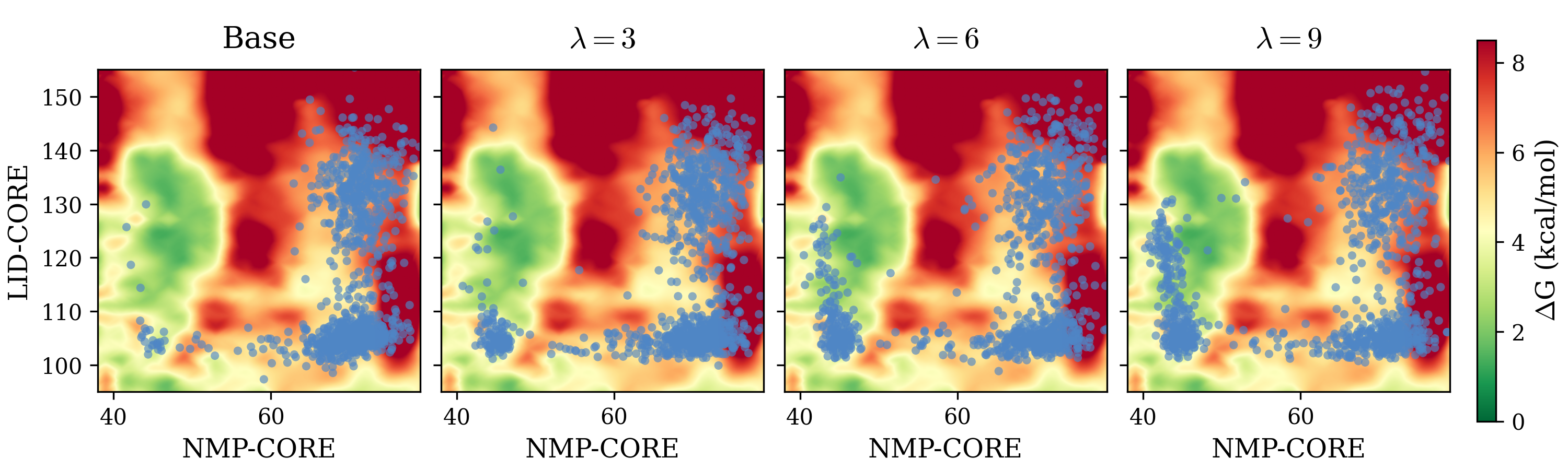}
        \caption{Gradient-only steering.}
        \label{fig:adk_gradient_heatmaps}
    \end{subfigure}
    \hfill
    \begin{subfigure}{0.49\textwidth}
        \centering
        \includegraphics[width=0.9\linewidth]{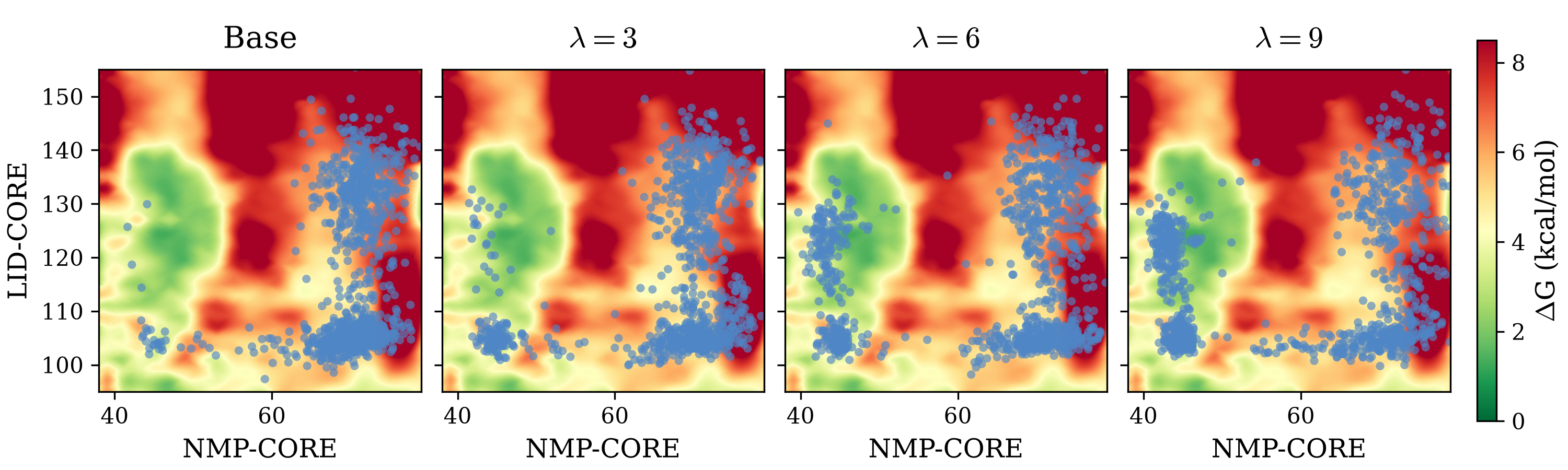}
        \caption{Mean-field steering.}
        \label{fig:adk_mf_heatmaps_subfig}
    \end{subfigure}
    \caption{
    Scatter plots for the LID and NMP angles for the generated data, with overlaid empirical heatmap from \citet{BECKSTEIN2009160}.
    }
    \label{fig:adk_mf_heatmaps}

\end{figure*}

\subsection{Analysis of Computational Cost}
\label{app:comp_cost}
We now examine the additional computational cost of our approach compared to standard gradient-only distributional steering.
The cost of the interactions between the particles in the \textit{positional} updates is the same as in standard gradient-only steering; the additional overhead incurred by our approach is limited to the solving of the 1-dimensional log-weight dynamics. We note that as network size increases, the cost is dominated by the network evaluations. Generally, our approach will require an additional $\sim$50\% of runtime and no additional peak memory usage compared to gradient-only steering; we explain this by breaking down the cost of each operation below.

Our approach requires 3 different types of network evaluation: the evaluation at $X_t$ (a forward pass), the evaluation of $\nabla_x \Psi_t$ and $\partial_t \Psi_t$ (a backward pass), and the drift evaluation at the next step $X_{t+\delta t}$ for the finite-difference computation of $\dot \Psi_t$ (a forward pass). Gradient-only steering already requires the first two passes; our approach incurs only the final forward pass at the next step as an additional cost, leading to an approximately 50\% increase in runtime per step. The peak memory requirements are unchanged because the final additional forward pass can be computed without gradient tracking so it incurs comparatively little memory overhead. The cost of the Picard iterations is negligible compared to the cost of the network evaluations.

These observations are confirmed by the profiling of the per-step computational cost (reported in \Cref{tab:compute_cost_profiling}) in the HIV-1 Protease example, which reports the proportion of per-step runtime and the peak memory usage of each of these operations (similar results are observed in the other experiments too). As expected, increasing the number of Picard iterations adds negligible computational cost. In \Cref{tab:runtime_comparison_protein}, we also report sampling runtimes for both steering mechanisms in the two protein examples, which are consistent with the runtime discussion above. As expected, the proportional overhead of the $\dot \Psi_t$ solver is lower compared to the timings in \Cref{tab:runtime_comparison}, as the cost of evaluating the network is now much higher. Again, the results confirm that the additional cost of further fixed point iterations is negligible.

\begin{table}[t]
\centering
\begin{tabular}{lcccc}
\toprule
\textbf{Sampler operation} & \textbf{1 Picard iteration} & \textbf{5 iterations} & \textbf{10 iterations} & \textbf{Peak memory usage} \\
\midrule
Network forward at $X_t$
& 32.41\% & 32.38\% & 32.34\% & 38.4 GB \\
Backward for $\nabla_x \Psi$ and $\partial \Psi_t$
& 35.56\% & 35.33\% & 35.29\% & 37.47 GB \\
Network forward at $X_{t+\delta t}$
& 32.19\% & 32.16\% & 32.12\% & 4.42 GB \\
$\Psi$ evaluation and Picard iterations
& 0.036\% & 0.128\% & 0.244\% & 2.01 GB \\
\bottomrule
\end{tabular}
\vspace{1em}
\caption{Breakdown of computational costs for the different operations performed in each sampling step, in the HIV-1 protease experiment. Note that the first two operations are required for gradient-only steering too.}
\label{tab:compute_cost_profiling}
\end{table}

\begin{table}[t]
\centering
\begin{tabular}{lccccc}
\toprule
& Grad & $K_{\mathrm{FP}}=1$ & $K_{\mathrm{FP}}=4$ & $K_{\mathrm{FP}}=10$\\
\midrule
HIV-1 & 44s & 69s & 69s & 69s \\
AdK & 48s & 75s & 75s & 75s \\
\bottomrule
\end{tabular}
\vspace{1em}
\caption{Runtime comparison across steering and solver variants, in the protein examples. Time taken for 50 step simulations, with a batch size of 64.}
\label{tab:runtime_comparison_protein}
\end{table}

\subsection{Tilting Towards X-ray Crystallographic Electron Densities}
\label{app:electron_densities}

\paragraph{X-Ray Crystallography} Another approach for probing the conformational behaviour of proteins is X-ray crystallography. This process involves crystallising a purified sample of the protein, and analysing the resulting X-ray diffraction pattern to produce an experimentally-observed 3-dimensional electron density. This electron density is an average over all molecules in the crystal, so when the molecules exhibit heterogeneous conformations in the structure then the density cannot be explained by a single conformation prediction alone. Aligning a generative model towards such experimentally-observed information therefore again requires using a distributional reward function.

Motivated by the fact that many experimental techniques give ensemble-level information, recent works \citet{maddipatla2025inverse, Maddipatla2026_nature} have considered steering Protenix \citep{protenixv0} (an open-source implementation of AlphaFold3 \citep{abramson2024_AF3}) using the gradient of distribution-level reward functions to guide the generation.
For an amino acid sequence $a$, and experimental observation $y$ dependent on the ensemble $\cX = \{ X^1, ..., X^n \}$, they aim to sample an ensemble from the posterior $p(\cX | a,y)$, which by Bayes' rule satisfies $p(\cX | a,y) \propto p(y | \cX, a) \cdot p(\cX | a)$. Here, $p(y | \cX, a)$ denotes a chosen likelihood function, and the prior $p(\cX | a)$ is given by the base diffusion model.
This corresponds to a finite-sample version of our framework, using the log-likelihood reward function $\cR(\mu) = \log p(y | \mu, a)$.
Steering according to X-ray crystallographic electron densities is an important experimental setting considered in these works.

As with other examples of gradient-only steering, steering with the gradient alone will not sample from the exact posterior distribution $\mu^*$. We therefore here compare standard gradient-steering, akin to the approach in \citet{maddipatla2025inverse, Maddipatla2026_nature}, along with our mean-field steering procedure, to investigate how the inclusion of the FK weights affects the behaviour.

\paragraph{Reward Function}
We largely follow the experimental settings of \citet{maddipatla2025inverse, Maddipatla2026_nature}, with a few modifications to make suitable for our setting. The reward function is chosen as
\begin{equation}
    \cR(\hat \mu) = - \lambda \left \| F_o - \tfrac{1}{N} \sum_{i=1}^N F_c(X^i) \right \|_2^2,
\end{equation}
where $F_o$ denotes the experimentally-observed density, $F_c(X^i)$ denotes a constructed density for the generated structure $X^i$, and $\lambda$ is a strength parameter. We remark that \citet{maddipatla2025inverse, Maddipatla2026_nature} used an $L_1$ distance rather than $L_2$ considered here; we chose to use $L_2$ here as we found it gave more consistent behaviour between experiments. Following \citet{maddipatla2025inverse}, we steer only the atoms that are in the region of interest (ROI) that exhibits the variations in structure. To do so, the electron density $F_c(X^i)$ is constructed using the current generation's ROI atoms combined with non-ROI atoms from the known reference structure in the PDB dataset. For 4OLE, the ROI consists of the residues 423-431 (highlighted in red in \Cref{fig:4OLE_conformations}).

As the densities will not necessarily be on the same scale, we centre and normalise both densities (we stop the gradient with respect to atom positions through the standardisation constants). 
Similar to \citet{maddipatla2025inverse}, the density $F_c(X^i)$ for the generated samples is constructed as a mixture of Gaussian kernels,
\begin{equation}
    F_c(\xi)\;=\;\sum_{i=1}^{m}\;\sum_{j=1}^{6}\;
a_{ij}
\left(\frac{4\pi}{b_{ij}+B_i}\right)^{\!3/2}
\exp\!\left(-\frac{4\pi^{2}}{b_{ij}+B_i}\,
\big\|\mathbf{x}_i-\xi\big\|_2^{2}\right).
\end{equation}
Here, $\xi \in \sR^3$ is the point where the density is being evaluated, and $\mathbf{x}_i \in \sR^3$ is the 3d position of the $i$-th atom, and $a_{ij}, b_{ij}$ are per-atom constants. The $B_i$ factors additionally determine the width of the kernels; the work \citet{maddipatla2025inverse} sets this to $B=4/N$, but we choose to use the $B$ factors from the PDB data as we simulate larger ensembles.
As we are interested in the behaviour of the atoms in the ROI, we construct this density using the predicted ROI atoms, along with non-ROI atom positions from the known reference structure. Therefore the steering acts only on the ROI atoms. Note that before evaluating the reward, the generations are aligned to the reference structure using the non-ROI atoms using the Kabsch algorithm \citep{Kabsch76}. We use the $2mF_{\mathrm{o}}-DF_{\mathrm{c}}$ map of 4OLE as the target density, whose values are given on the FFT grid at approximately 0.84\AA\ spacing. We evaluate the reward using the 13,247 points on the grid within 5\AA\ of the ROI atoms.

The implementation in \citet{maddipatla2025inverse} also includes an additional substructure-conditioning term, which steers the non-ROI atoms towards their known positions in the reference structure. We omit this steering in our implementation, as it would mean that the base generative process would not follow the trajectory of a trained flow model which is required for us to have access to the score of the marginals. In practice, we found the base diffusion process already had good alignment with the structure outside of the ROI, so the generations were reasonable without requiring this additional steering term.

\paragraph{Sampling}
We sample 16 batches of 64 samples, using our multi-batch sampling procedure. We note that the sampled distribution became more consistent as the number of batches increased, which we anticipate may be due to using more terminal samples in the empirical approximation of $\hat \mu_1$ which gives a better approximation (as the terminal samples are more realistic than the Tweedie estimates). We used 100 sampling steps, using residual sampling at each reweighting step. Similarly to in the Boltz-2 examples, we used a middle-heavy schedule to spend more time in the period in which the modes were determined. As before, we used $K_{FP}=50$ Picard iterations with damping factor $\alpha=0.5$ to be confident of convergence (this is again likely very conservative, but adds negligible cost). We stopped applying the reweighting procedure after time 0.75 to allow the particles to become sufficiently dispersed in the final generated ensemble (since in the Protenix sampling procedure, the noise becomes increasingly small towards the terminal time).  The experiments were run on a single NVIDIA L40S GPU.

\paragraph{Results}
We report results in \Cref{fig:4OLE_comparisons}. In \Cref{fig:cosine_vs_strength_4OLE}, we report how the cosine alignment between the target density $F_o$ and the constructed density $\sum_i F_c(X^i)$ of the generated ensemble changes as the steering strength increases, which reflects changes in the reward $\cR(\mu)$. In \Cref{fig:propB_vs_strength_4OLE} we do likewise for the proportion of the generations closer to conformation B. In \Cref{fig:propB_vs_cosine_4OLE}, we also plot this same proportion against the cosine-alignment obtained using different steering strengths; the displayed results use strengths $\lambda \in \{ 0.0, 0.001, 0.002, 0.003, 0.004, 0.005, 0.0075, 0.01\}$ for gradient-only steering, and $\lambda \in \{ 0.0, 0.001, 0.0015, 0.002, 0.0025, 0.003, 0.0035, 0.004, 0.005\}$ for mean-field steering. The visualisation of the experimental setting in \Cref{fig:electron_densities} is inspired by Figure 3 in \citet{maddipatla2025inverse}; \Cref{fig:base_electron_density} displays three samples from the base diffusion model, and \Cref{fig:steered_electron_density} shows samples obtained using mean-field steering with strength $\lambda=0.003$. For clear visualisation, we display three samples close to conformation A and the three samples closest to conformation B.

In the absence of a known ground-truth, we instead focus on discussing the qualitative differences between the results. We first note that for the same steering strength, mean-field steering appears to result in a stronger reward tilt, suggesting that gradient-only steering understeers relative to the true inverse-problem target $\mu^*$ in this instance. We also observe that mean-field steering placed more samples in the alternate conformation (which appeared more difficult for gradient-only steering); these values are generally closer to the known conformation proportions (65\% and 35\%). We note that the accuracy of these results is likely limited by using a simple KDE approximation for the generated electron density; as noted by \citet{maddipatla2025inverse}, we anticipate these results could be improved by using a more accurate electron density model.
\section{Proofs}
\label{app:proofs}

In this section, we provide the proofs of the theoretical results in the paper.

\subsection{Proof of Proposition \ref{prop:mean field tilt}}
\label{app:proof mean field tilt}

We establish the result under the following (mild) regularity assumption on $\cR$:

\begin{assumption}
The reward functional $\mathcal{R}$ admits a functional derivative, is upper semicontinuous with respect to weak convergence and bounded from above.
\end{assumption}

\begin{proof}[Proof of Proposition \ref{prop:mean field tilt}]

Denote the objective by
\[
\mathcal J(\mu)
:=
\mathcal R(\mu)-\KL(\mu\|p_1).
\]
1) We first show that \(\mathcal J\) admits a maximiser. Since \(\mathcal R\) is bounded from above (say $\sup_{\mu \in \cP(\mathbb{R}^d)} \cR(\mu) \le C_{\cR}$) and \(\KL(\mu\|p_1)\geq 0\), we have
$
\sup_{\mu\in\mathcal P(\mathbb R^d)} \mathcal J(\mu)
<  \infty$.

Let \((\mu_n)_{n\geq 1}\) be a maximising sequence. Then, for \(n\) large enough, we have that
$
\mathcal J(\mu_n)\geq \mathcal R(p_1)-1,
$ because $\KL(p_1\|p_1) = 0$.
It follows that
\[
\KL(\mu_n\|p_1)
=
\mathcal R(\mu_n)-\mathcal J(\mu_n)
\leq
C_{\mathcal R}-\mathcal R(p_1)+1,
\]
and thus the sequence \((\KL(\mu_n\|p_1))_{n \ge 1}\) is bounded. The sublevel sets of the relative entropy are weakly compact, so, after passing to a subsequence, we may assume that
\(
\mu_n \Rightarrow \mu^*,
\)
weakly for some candidate maximiser \(\mu^*\in\mathcal P(\mathbb R^d)\).

Since \(\mathcal R\) is upper semicontinuous with respect to weak convergence, and since \(\KL(\cdot\|p_1)\) is lower semicontinuous, we have
\[
\mathcal J(\mu^*)
=
\mathcal R(\mu^*)-\KL(\mu^*\|p_1)
\geq
\limsup_{n\to\infty}
\left(
\mathcal R(\mu_n)-\KL(\mu_n\|p_1)
\right).
\]
Hence, \(\mu^*\) is indeed a maximiser. This maximiser is also unique by concavity of $\cR$ and strict convexity of $\KL(\cdot \| p_1)$. 

2) We now derive the first-order optimality condition \eqref{eq:distributional_tilt_target}.
For an arbitrary perturbation \(\nu\in\mathcal P(\mathbb R^d)\), set
\[
\mu_\varepsilon=(1-\varepsilon)\mu^*+\varepsilon \nu.
\]
By optimality of \(\mu^*\), the derivative of $
\varepsilon \mapsto \mathcal J(\mu_\varepsilon)
$
at \(\varepsilon=0\) vanishes. By definition, we have
\[
\left.\frac{\rmd}{\rmd\varepsilon}\mathcal R(\mu_\varepsilon)\right|_{\varepsilon=0}
=
\int_{\mathbb R^d}
\Psi(x,\mu^*)\,(\nu-\mu^*)(\rmd x).
\]
We also compute the functional derivative of the $\KL$-regulariser:
\[
\left.\frac{\rmd}{\rmd\varepsilon}\KL(\mu_\varepsilon\|p_1)\right|_{\varepsilon=0}
=
\int_{\mathbb R^d}
\left(
\log\frac{\rmd \mu^*}{\rmd p_1}(x)+1
\right)(\nu-\mu^*)(\rmd x).
\]
As a consequence,
\[
\int_{\mathbb R^d}
\left[
\Psi(x,\mu^*)
-
\log\frac{\rmd \mu^*}{\rmd p_1}(x)
-
1
\right]
(\nu-\mu^*)(\rmd x)
=
0.
\]
Since $\nu$ is arbitrary, and \(\nu-\mu^*\) has total mass zero, we see that
\[
\log\frac{\rmd \mu^*}{\rmd p_1}(x)
=
\Psi(x,\mu^*)-C,
\]
for some constant $C \in \mathbb{R}$.
Equivalently,
\[
\mu^*(\rmd x)
=
e^{-C}e^{\Psi(x,\mu^*)}p_1(x) \, \rmd x,
\]
leading to \eqref{eq:distributional_tilt_target} after enforcing normalisation.
\end{proof}

\subsection{Proof of Proposition \ref{prop:target evolution}}
\label{app:proof target evolution}

In order to construct a particle system that realises the interpolating measures defined in \eqref{eq:targets}, we need the following intermediate result about $\mu_t^*$.
\begin{proposition}
\label{prop:target evolution}
The targets \eqref{eq:targets} satisfy the evolution equation
\begin{equation}
\label{eq:target evolution}
\partial_t \mu^*_t(x) = \mu^*_t(x) \left( h_t(x) - \int_{\mathbb{R}^d} h_t(y) \mu^*_t(y) \, dy \right),
\end{equation}
where we define the \emph{log-growth rate} $h_t(x) := \frac{\rmd}{\rmd t} \left( \Psi_t(x,\mu^*_t) \right)- \nabla \cdot b_t(x) - \nabla \log p_t(x) \cdot b_t(x)$, which is measure-dependent.
\end{proposition}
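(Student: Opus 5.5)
We differentiate the defining relation \eqref{eq:targets} in time directly. Write $\mu_t^*(x) = Z_t^{-1} e^{\Psi_t(x,\mu_t^*)} p_t(x)$, so that
\[
\log \mu_t^*(x) = \Psi_t(x,\mu_t^*) + \log p_t(x) - \log Z_t .
\]
Differentiating in $t$ at fixed $x$ gives
\[
\partial_t \log \mu_t^*(x) = \tfrac{\rmd}{\rmd t}\Psi_t(x,\mu_t^*) + \partial_t \log p_t(x) - \tfrac{\rmd}{\rmd t}\log Z_t .
\]
The first term is kept as a total derivative, in exactly the form appearing in $h_t$. This is legitimate: we do not expand it via the chain rule through $\mu_t^*$, which is the content of \Cref{prop:time derivative}. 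We only need it to exist, which we assume along with differentiability of $t\mapsto \mu_t^*$ (the ``mild conditions'' of the paper).

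Next we express $\partial_t \log p_t$ through the base dynamics \eqref{eq:base model}. The Fokker--Planck equation reads
\[
\partial_t p_t = -\nabla\cdot\big(p_t (b_t+\eps_t s_t)\big) + \eps_t \Delta p_t .
\]
Since $s_t = \nabla \log p_t$, we have $\eps_t \Delta p_t = \nabla\cdot(\eps_t p_t s_t)$, so the score and diffusion contributions cancel and
\[
\partial_t p_t = -\nabla\cdot(p_t b_t).
\]
This is the key point: $b_t$ alone transports $p_t$, for any noise schedule $\eps_t$. Dividing by $p_t$ gives
\[
\partial_t \log p_t = -\nabla\cdot b_t - \nabla\log p_t\cdot b_t .
\]
Substituting, $\partial_t \log\mu_t^*(x) = h_t(x) - \tfrac{\rmd}{\rmd t}\log Z_t$, with $h_t$ exactly as defined in the statement.

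Finally, we identify the constant $\tfrac{\rmd}{\rmd t}\log Z_t$ by mass conservation rather than by differentiating $Z_t$ directly. Since $\int \mu_t^* = 1$ for all $t$,
\[
0 = \int \partial_t \mu_t^* = \int \mu_t^*(x)\, h_t(x)\,\rmd x - \tfrac{\rmd}{\rmd t}\log Z_t ,
\]
which forces $\tfrac{\rmd}{\rmd t}\log Z_t = \int h_t \,\rmd\mu_t^*$. Multiplying $\partial_t\log\mu_t^* = h_t - \int h_t\,\rmd\mu_t^*$ by $\mu_t^*$ yields \eqref{eq:target evolution}.

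The main obstacle is justifying these interchanges: differentiating under the integral, the integrability of $h_t$ under $\mu_t^*$, and the differentiability of $t\mapsto\Psi_t(x,\mu_t^*)$ when $\mu_t^*$ is only implicitly defined. For the last point we would appeal to an implicit-function argument, using the concavity and uniqueness from \Cref{prop:mean field tilt}. At the level of rigour used in the paper, we would state these as standing regularity assumptions (smooth, positive $p_t$ with sufficient decay, and $\Psi_t$ differentiable along the curve) rather than prove them.
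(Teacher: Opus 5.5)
Your proof is correct and follows the paper's argument: take logarithms of \eqref{eq:targets}, use the continuity equation $\partial_t p_t = -\nabla\cdot(p_t b_t)$, and identify $\partial_t \log Z_t = \int h_t \, \rmd\mu_t^*$; you additionally justify the continuity equation via the score cancellation in the Fokker--Planck equation, where the paper simply asserts it. The only other difference is cosmetic: you obtain the normalisation term from conservation of mass rather than by differentiating $Z_t$ under the integral sign as the paper does, and both rely on the same interchange of derivative and integral.
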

\begin{remark}
As described by \eqref{eq:target evolution}, the target $\mu_t^*$ is instantaneously reweighted by $h_t$ (relative to its mean). This type of equation appears in several related settings, including SMC and Feynman-Kac type flows \citep{DelMoral2004}, tempering and annealed importance sampling \citep{Neal2001}, and evolutionary replicator dynamics \citep{HofbauerSigmund1998}. In our setting, much like $\Psi_t(x;\mu_t^*)$ in \eqref{eq:targets}, $h_t$ itself depends on the targets $\mu_t^*$.  
\end{remark}

\begin{proof}[Proof of Proposition \ref{prop:target evolution}]
We take logarithms on both sides of \eqref{eq:targets}, leading to
\[
\log \mu_t^*(x)
=
\Psi_t(x,\mu_t^*)
+
\log p_t(x)
-
\log Z_t.
\]
Differentiating in time yields
\[
\frac{\partial_t \mu_t^*(x)}{\mu_t^*(x)}
=
\frac{\rmd}{\rmd t}\Psi_t(x,\mu_t^*)
+
\partial_t \log p_t(x)
-
\partial_t \log Z_t.
\]
The time marginals associated to the base dynamics \eqref{eq:base model} satisfy the continuity equation
\[
\partial_t p_t+\nabla\cdot(p_t b_t)=0,
\]
and thus we obtain
\[
\partial_t \log p_t(x)
=
\frac{\partial_t p_t(x)}{p_t(x)}
=
-\frac{\nabla\cdot(p_t b_t)(x)}{p_t(x)}
=
-\nabla\cdot b_t(x)-\nabla\log p_t(x)\cdot b_t(x).
\]
Therefore, by the definition of \(h_t\),
\begin{equation}
\label{eq:mu proof halfway}
\frac{\partial_t \mu_t^*(x)}{\mu_t^*(x)}
=
h_t(x)-\partial_t\log Z_t.
\end{equation}

It remains to identify \(\partial_t\log Z_t\). 
Differentiating under the integral sign gives
\[
\partial_t Z_t
=
\int_{\mathbb R^d}
e^{\Psi_t(y,\mu_t^*)}p_t(y)
\left[
\frac{\rmd}{\rmd t}\Psi_t(y,\mu_t^*)
+
\partial_t\log p_t(y)
\right]\rmd y =
\int_{\mathbb R^d}
e^{\Psi_t(y,\mu_t^*)}p_t(y)h_t(y)\,\rmd y,
\]
using the definition of \(h_t\) in the second equality. Therefore, we have
\[
\partial_t\log Z_t
=
\frac{\partial_t Z_t}{Z_t}
=
\int_{\mathbb R^d} h_t(y)\,\mu_t^*(y)\,\rmd y,
\]
and combining with \eqref{eq:mu proof halfway} gives \eqref{eq:target evolution}.
\end{proof}

\subsection{Proof of Theorem \ref{th:mf_weighted_dynamics}}

We establish the result under the following assumption:
\begin{assumption}
\label{ass:well posed}
The McKean-Vlasov dynamics \eqref{eq:McKean} is well posed: for the initial condition $X_0 \sim p_0$, the solution exists and is unique.
\end{assumption}

This assumption holds true under mild growth and regularity conditions on the base drift $b_t$ and the reward $\mathcal{R}$ (see, for example, \citet[Section 3]{del2007branching} or \citet[Section 4.2]{carmona_delarue_2018_mfg1}). We highlight the following subtlety: to convert \eqref{eq:McKean} into standard form covered by those references, it is necessary to \emph{define} $\dot{\Psi}$ via the linear equation \eqref{eq:implicit_eq_for_dotPsi} (and verify appropriate assumptions on the associated solution operator), because time derivatives with respect to the measure $\mu_t$ are non-standard in this context.

\begin{proof}[Proof of Theorem \ref{th:mf_weighted_dynamics}]
For convenience, let us introduce the short-hands
\begin{equation}
\label{eq:drift coefficients}
B_t(x,\mu)
:=
b_t(x)+\eps_t s_t(x)+\eps_t\nabla_x\Psi_t(x;\mu),
\qquad
g_t(x,\mu)
:=
b_t(x)\cdot\nabla_x\Psi_t(x;\mu)+\dot\Psi_t(x),
\end{equation}
and define the unnormalised weighted law
\[
\nu_t(\phi):=\mathbb E[\phi(X_t)e^{A_t}].
\]
By It\^o's formula, \(\nu_t\) satisfies the Feynman--Kac equation \citep[Section 3.1]{del2007branching}
\[
\partial_t\nu_t
=
(L_t^{\mu_t})^\dagger\nu_t
+
g_t(\cdot,\mu_t)\nu_t,
\]
where
\begin{equation}
\label{eq:generator}
L_t^\mu \phi
=
B_t(\cdot,\mu)\cdot\nabla\phi+\eps_t\Delta\phi
\end{equation}
is the $\mu$-dependent generator of \eqref{eq:X McKean}, and  $(L_t^{\mu_t})^\dagger$ is its adjoint in $L^2(\mathbb{R}^d)$.
Since \(\mu_t=\nu_t/\nu_t(1)\), the normalised law satisfies
\begin{equation}
\label{eq:normalised-fk-short}
\partial_t\mu_t
=
(L_t^{\mu_t})^\dagger\mu_t
+
\left(
g_t(\cdot,\mu_t)-\mu_t(g_t(\cdot,\mu_t))
\right)\mu_t.
\end{equation}

We now check that the path of target measures \(\mu_t^*\) satisfies the same equation. Since
\(
\mu_t^*(x)=\frac1{Z_t}e^{\Psi_t(x;\mu_t^*)}p_t(x)
\)
and \(s_t=\nabla\log p_t\), we can write the score of the target measures as
\[
s_t+\nabla_x\Psi_t(\cdot;\mu_t^*)
=
\nabla\log\mu_t^*.
\]
Consequently, looking at the expression for \((L_t^{\mu_t^*})^*\mu_t^*\),  the \(\eps_t\)-terms cancel,
\begin{align*}
    (L_t^{\mu_t^*})^*\mu_t^* &= -\nabla\cdot\bigl((b_t + \eps_t \nabla\log\mu_t^* )\, \mu_t^*\bigr) + \eps_t\Delta\mu_t^* \\
    &= - \nabla \cdot (b_t\mu_t^*) - \eps_t \nabla \cdot (\mu_t^* \nabla \log\mu_t^*)  + \eps_t\Delta\mu_t^* \\
    &= -\nabla\cdot(b_t\mu_t^*).
\end{align*}
Expanding this divergence, we have
\begin{align*}
    -\nabla\cdot(b_t\mu_t^*) &= - (\nabla \cdot b_t) \, \mu_t^* - b_t \cdot \nabla \mu_t^* \\
    &= - (\nabla \cdot b_t) \, \mu_t^* - b_t \cdot \bigl (\mu_t^* (s_t + \nabla \Psi_t) \bigr)  \\
    &= \bigl( - \nabla \cdot b_t - b_t \cdot s_t - b_t \cdot \nabla \Psi_t \bigr) \, \mu_t^*
\end{align*}
Noting that $g_t(x,\mu_t^*) \mu_t^* = \bigl( b_t \cdot \nabla \Psi_t + \dot \Psi_t\bigr) \mu_t^*$, we therefore have
\begin{align*}
    (L_t^{\mu_t^*})^*\mu_t^* + g_t(\cdot,\mu_t^*)\mu_t^*
    &= 
    \bigl( \dot \Psi_t - \nabla \cdot b_t - b_t \cdot s_t \bigr) \, \mu_t^* \\
    &= h_t\,\mu_t^*,
\end{align*}
where \(h_t\) is the function from Proposition~\ref{prop:target evolution}. Integrating this expression, and noting that the first term integrates to 0, we see that $\mu_t^*(g_t(\cdot,\mu_t^*)) = \mu_t^*(h_t)$. After subtracting this, we have
\[
(L_t^{\mu_t^*})^*\mu_t^*
+
\left(
g_t(\cdot,\mu_t^*)-\mu_t^*(g_t(\cdot,\mu_t^*))
\right)\mu_t^*
=
\mu_t^*
\left(
h_t-\mu_t^*(h_t)
\right).
\]
By Proposition~\ref{prop:target evolution}, the right-hand side equals
\(
\partial_t\mu_t^*.
\)
Thus \(\mu_t^*\) solves the same normalised Feynman--Kac equation
\eqref{eq:normalised-fk-short} as \(\mu_t\). Since the initial conditions agree and the equation is unique by assumption, we conclude that
\[
\mu_t=\mu_t^*,
\qquad t\in[0,T].
\]
\end{proof}

\subsection{Proof of Proposition \ref{prop:time derivative}}

\begin{proof}
We use the definition $\dot \Psi_t(x) = \tfrac{\rmd}{\rmd t} \big( \Psi_t(x, \mu_t) \big)$ and the chain rule to compute
\begin{align*}
\dot{\Psi}_t(x) &= \partial_t \Psi_t(x,\mu_t) + \int_{\mathbb{R}^d} \Phi(x,y,\mu_t) \, \partial_t \mu_t(\rmd y) \\
&= \partial_t \Psi_t(x,\mu_t) + \int_{\mathbb{R}^d} \Phi(x,y,\mu_t) \, \left( h_t(y) - \int_{\mathbb{R}^d} h_t(z) \mu_t(z) \, \rmd z \right) \mu_t(\rmd y) \\
&= \partial_t \Psi_t(x,\mu_t) + \int_{\mathbb{R}^d} \tPhi(x,y,\mu_t) \, h_t(y) \, \mu_t(\rmd y) \\
        &= \partial_t \Psi_t(x,\mu_t) + \int_{\mathbb{R}^d} \tPhi(x,y,\mu_t) \, \big[ \dot{\Psi}_t(y) - \nabla \cdot b_t(y) - \nabla \log p_t(y) \cdot b_t(y) \big] \, \mu_t(\rmd y),
\label{eq:time_derivative_eq1}
\end{align*}
where we have used Proposition \ref{prop:target evolution} in the second line.
From a computational standpoint, the problematic term here is $\int_{\mathbb{R}^d} \tPhi(x,y,\mu_t) \nabla \cdot b_t(y) \mu_t(\rmd y)$, as this divergence is required to be computed in the data space which could be large. Fortunately, we can use integration by parts to obtain a more tractable expression. For any test function $\phi(y)$, we have
    \begin{align*}
        \int \phi(x) \nabla & \cdot b_t(x) \, \mu_t(\rmd x) = 
        - \int b_t(x) \cdot \nabla \big( \phi(x) \mu_t(x) \big) \, \rmd x \\
        &= - \int b_t(x) \cdot \big[ \mu_t(x) \nabla \phi(x) + \phi(x) \nabla \mu_t(x) \big ]\,  \rmd x \\
        & = - \int b_t(x) \cdot \mu_t(x) \nabla \phi(x) \, \rmd x - \int \phi(x) b_t(x) \cdot \nabla \log \mu_t(x) \, \mu_t(\rmd x) \\
        & = - \int b_t(x) \cdot \mu_t(x) \nabla \phi(x) \rmd x - \int \phi(x) \, b_t(x) \cdot \nabla \Psi_t(x, \mu_t) \, \mu_t(\rmd x) 
        \\
        & \qquad - \int \phi(x) \, b_t(x) \cdot \nabla \log p_t(x) \, \mu_t(\rmd x),
    \end{align*}
where we used the log-derivative trick $\nabla \mu_t = (\nabla \log \mu_t) \mu_t$, and in the final line used the fact that $\nabla \log \mu_t = \nabla \Psi_t(\cdot, \mu_t) + \nabla \log p_t$ (which follows from \eqref{eq:targets}). Taking $\phi(y) = \tPhi_t(x,y, \mu_t)$ and noting that the terms involving the score $\nabla \log p_t$ cancel, we obtain \eqref{eq:implicit_eq_for_dotPsi}.
\end{proof}

\subsection{Proof of Theorem \ref{thm:weighted_particle_convergence_main}}

Consider the dynamics
\begin{subequations}
\label{eq:finite_particle_dynamics}
\begin{align}
\rmd X_t^{i,N}
&=
\left[
b_t(X_t^{i,N})
+\eps_t s_t(X_t^{i,N})
+\eps_t\nabla_x\Psi_t(X_t^{i,N};\hat\mu_t^N)
\right]\rmd t
+\sqrt{2\eps_t}\,\rmd B_t^i,
\qquad
X_0^{i,N}\sim p_0,
\\
\label{eq:particle weights}
\rmd A_t^{i,N}
&=
\left[
b_t(X_t^{i,N})\cdot
\nabla_x\Psi_t(X_t^{i,N};\hat\mu_t^N)
+
\dot\Psi_t^{(N)}(X_t^{i,N})
\right]\rmd t,
\qquad
A_0^{i,N}=0.
\end{align}
\end{subequations}
where the weighted empirical measure is
\begin{equation}
\label{eq:weighted empirical}
\hat\mu_t^N
:=
\sum_{i=1}^Nw_t^{i,N}\delta_{X_t^{i,N}},
\end{equation}
with normalised weights
\begin{equation}
\label{eq:weights}
w_t^{i,N}
:=
\frac{\exp(A_t^{i,N})}
{\sum_{j=1}^N\exp(A_t^{j,N})}.
\end{equation}
In \eqref{eq:particle weights}, $\dot{\Psi}_t^{(N)}$ denotes a computable approximation of $\dot{\Psi}_t=\frac{\rmd}{\rmd t}\Psi_t(\cdot;\mu_t)$, in the following sense: For an arbitrary probability measure $\mu$, define $\dot{\Psi}_t^{\mathrm{exact}}(\cdot,\mu)$ as the solution to \eqref{eq:implicit_eq_for_dotPsi} where all occurrences of $\mu_t^*$ are replaced by $\mu$. In other words, $\dot{\Psi}_t^{\mathrm{exact}}(\cdot,\mu)$ is defined to be the solution to  
\begin{align}
\label{eq:PsiN}
\dot \Psi^{\mathrm{exact}}_t(x,\mu) &  =   \partial_t \Psi_t(x, \mu)
\\
& + \int_{\mathbb{R}^d} \Big(\tPhi_t(x,y,\mu) \dot \Psi^{\mathrm{exact}}_t(y,\mu) \notag 
+ b_t(y) \cdot \nabla_y \tPhi_t(x,y,\mu) \notag   + \tPhi_t(x,y,\mu) \, b_t(y) \cdot \nabla_y \Psi_t(y,\mu) \Big) \; \rmd \mu(y).
\end{align}
In particular, for the weighted empirical measure $
\hat{\mu}_t^N
=
\sum_{j=1}^N
w_t^{j,N}\delta_{X_t^{j,N}}$,
this equation reads
\begin{align}
\label{eq:empirical_dotPsi}
\dot{\Psi}_t^{\mathrm{exact}}(x,\hat{\mu}_t^N)
={}&
\partial_t\Psi_t(x,\hat{\mu}_t^N)
\notag
+
\sum_{j=1}^N
w_t^{j,N}
\Big[
\tPhi_t(x,X_t^{j,N},\hat{\mu}_t^N)
\dot{\Psi}_t^{\mathrm{exact}}
(X_t^{j,N},\hat{\mu}_t^N)
\notag\\
&\qquad\qquad
+
b_t(X_t^{j,N})\cdot
\nabla_y\tPhi_t
(x,X_t^{j,N},\hat{\mu}_t^N)
+
\tPhi_t(x,X_t^{j,N},\hat{\mu}_t^N)
\,b_t(X_t^{j,N})\cdot
\nabla_y\Psi_t
(X_t^{j,N},\hat{\mu}_t^N)
\Big];
\end{align}
evaluating this at $x = X_t^{i,N}$ leads to an $N \times N$ linear system that forms the basis of Algorithm \ref{alg:Psi_solver_implicit}.

The implemented quantity $\dot{\Psi}_t^{(N)}$ used in \eqref{eq:particle weights} may instead only be approximately equal to $\dot{\Psi}_t^{\mathrm{exact}}(\cdot,\hat{\mu}_t^N)$, for example coming from a finite number of Picard iterations as in Algorithm \ref{alg:Psi_solver_fixedpoint}. The solver-consistency
condition in the third item of Assumption~\ref{ass:particle_convergence} quantifies the
resulting numerical error.

\begin{assumption}[Regularity, well-posedness and solver consistency]
\label{ass:particle_convergence}
Assume the following.
\begin{enumerate}
\item The McKean-Vlasov dynamics \eqref{eq:McKean} are well posed, in the sense that the corresponding Feynman-Kac equation \eqref{eq:normalised-fk-short} has a unique solution in $C([0,T],\mathcal P(\mathbb R^d))$.
\item The drift coefficients of \eqref{eq:McKean} (denoted by \(B_t\) and \(g_t\) in \eqref{eq:drift coefficients}) are bounded and Lipschitz in
\((x,\mu)\), uniformly for \(t\in[0,T]\). More precisely, for some constant
\(L>0\),
\begin{align}
|B_t(x,\mu)-B_t(y,\nu)|
&\leq
L\bigl(|x-y|+d_{\mathrm{BL}}(\mu,\nu)\bigr),
\\
|g_t(x,\mu)-g_t(y,\nu)|
&\leq
L\bigl(|x-y|+d_{\mathrm{BL}}(\mu,\nu)\bigr),
\end{align}
and
\begin{equation}
\label{eq:boundedness}
|B_t(x,\mu)|+|g_t(x,\mu)|\leq L.
\end{equation} 
We also assume that $\sup_{t\in[0,T]}\eps_t<\infty$.
\item The approximation $\dot{\Psi}_t^{(N)}$ is consistent in the sense that
\begin{equation}
\label{eq:solver_error}
\delta_N
:=
\mathbb{E}
\left[
\int_0^T
\sum_{i=1}^N w_t^{i}
\left|
 \dot{\Psi}_t^{(N)}(X_t^{i})
-
\dot{\Psi}^{\mathrm{exact}}_t(X_t^{i},\hat{\mu}_t^N)
\right|
\,\rmd t
\right]
\longrightarrow 0
\qquad
\text{as }N\to\infty,
\end{equation}
where $\dot{\Psi}^{\mathrm{exact}}_t(\cdot,\hat{\mu}_t^N)$ denotes the exact solution for $\dot{\Psi}$ to equation \eqref{eq:implicit_eq_for_dotPsi} in which every occurrence of $\mu_t^*$ is  replaced by $\hat{\mu}_t^N$, and $t \in [0,T]$ is fixed.
\end{enumerate}
\end{assumption}
Under this assumption, the weighted empirical measure in \eqref{eq:weighted empirical} converges to the target $\mu_t^*$:
\begin{restatable}[\textbf{Convergence of the Weighted Particle System}]
{theorem}{convergenceresult}
\label{thm:weighted_particle_convergence}
Suppose Assumption~\ref{ass:particle_convergence} holds. 
Then
\begin{equation}
\label{eq:test_function_qualitative_convergence}
    \mathbb E\left[
        \sup_{t\in[0,T]}
        \left|
            \hat\mu_t^N(\phi)-\mu_t^*(\phi)
        \right|
    \right]
    \longrightarrow0  \qquad
\text{as }N\to\infty,
\end{equation}
 for every $\phi\in C_b^2(\mathbb R^d)$,
\end{restatable}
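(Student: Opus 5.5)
We will prove \Cref{thm:weighted_particle_convergence} by a coupling argument with independent copies of the weighted McKean--Vlasov system. The approach is a Sznitman-type propagation of chaos adapted to weights.

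\textbf{Step 1: the coupled nonlinear system.} Let $\mu_t$ denote the solution of \eqref{eq:normalised-fk-short}, which equals $\mu_t^*$ by \Cref{th:mf_weighted_dynamics} and item 1 of Assumption~\ref{ass:particle_convergence}. We introduce i.i.d.\ processes $(\bar X_t^i,\bar A_t^i)$. Each solves \eqref{eq:McKean} with measure argument frozen at $\mu_t^*$, is driven by the same Brownian motion $B^i$, and starts from the same $X_0^i$. We then define the unnormalised measures
\[
\bar\nu_t^N=\tfrac1N\sum_i e^{\bar A_t^i}\delta_{\bar X_t^i},
\qquad
\nu_t^N=\tfrac1N\sum_i e^{A_t^{i,N}}\delta_{X_t^{i,N}}.
\]
By boundedness of $g$, $|A_t|,|\bar A_t|\le LT$, so all exponential weights lie in $[e^{-LT},e^{LT}]$. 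Hence normalisation is Lipschitz: for $\|\phi\|_\infty\le1$,
\[
|\hat\mu^N_t(\phi)-\mu(\phi)|\le C\big(|\nu^N_t(\phi)-\nu(\phi)|+|\nu_t^N(1)-\nu(1)|\big).
\]
This reduces the problem to the unnormalised measures.

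\textbf{Step 2: the i.i.d.\ term.} The quantity $\bar\nu_t^N(\phi)$ is an average of i.i.d.\ bounded variables with mean $\mathbb E[\phi(\bar X_t)e^{\bar A_t}]=\nu_t(\phi)$. We need this control uniformly in $t$ and, since $d_{\mathrm{BL}}$ enters the Lipschitz bounds, over the bounded-Lipschitz class. Our plan is to combine the Glivenko--Cantelli property of the BL class on tight laws (Varadarajan-type argument) with a time-grid discretisation. Between grid points we use Lipschitz continuity in time, $\mathbb E|\bar X_t-\bar X_s|\lesssim |t-s|^{1/2}$ and $|\bar A_t-\bar A_s|\le L|t-s|$. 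This gives
\[
\epsilon_N:=\mathbb E\sup_t d_{\mathrm{BL}}(\bar{\hat\mu}^N_t,\mu_t^*)\to0 ,
\]
where $\bar{\hat\mu}^N_t$ is the normalised version of $\bar\nu_t^N$.

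\textbf{Step 3: Gronwall on the coupling error.} Define
\[
D_t=\tfrac1N\sum_i\mathbb E\sup_{s\le t}\big(|X_s^{i,N}-\bar X_s^i|+|A_s^{i,N}-\bar A_s^i|\big).
\]
The noise cancels, so the Lipschitz bounds of item 2 give
\[
|X^{i,N}_t-\bar X^i_t|\le L\int_0^t\big(|X^{i,N}_s-\bar X^i_s|+d_{\mathrm{BL}}(\hat\mu^N_s,\mu_s^*)\big)\,\rmd s .
\]
The log-weights obey the same bound plus the solver error $|\dot\Psi^{(N)}_t-\dot\Psi^{\mathrm{exact}}_t(\cdot,\hat\mu^N_t)|$. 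This error is controlled in the weighted average by $\delta_N$, and since weights are within $e^{\pm 2LT}$ of $1/N$, also in the uniform average by $C\delta_N$. Next we bound
\[
d_{\mathrm{BL}}(\hat\mu_s^N,\mu_s^*)\le d_{\mathrm{BL}}(\hat\mu_s^N,\bar{\hat\mu}^N_s)+d_{\mathrm{BL}}(\bar{\hat\mu}^N_s,\mu_s^*) .
\]
The first term is at most $C\frac1N\sum_i(|X_s^{i,N}-\bar X_s^i|+|A_s^{i,N}-\bar A_s^i|)$, using $|e^a-e^b|\le e^{LT}|a-b|$ and the Step 1 normalisation bound. Gronwall then yields $D_T\le C(\epsilon_N+\delta_N)\to0$.

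\textbf{Conclusion and main obstacle.} For $\phi\in C_b^2\subset$ BL, we have
\[
\sup_t|\hat\mu^N_t(\phi)-\mu^*_t(\phi)|\le C\|\phi\|_{\mathrm{BL}}\sup_t\big(d_{\mathrm{BL}}(\hat\mu^N_t,\bar{\hat\mu}^N_t)+d_{\mathrm{BL}}(\bar{\hat\mu}^N_t,\mu_t^*)\big).
\]
Taking expectations gives \eqref{eq:test_function_qualitative_convergence}. We expect the main obstacle to be Step 2: obtaining $\mathbb E\sup_t d_{\mathrm{BL}}$ of a weighted empirical measure to its limit without moment or dimension-dependent rates. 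Our first attempt is to exploit the qualitative nature of the claim. Tightness follows from the bounded drift, and a compact-set truncation plus a finite $\varepsilon$-net of BL functions on that compact set reduces the supremum to finitely many test functions and grid times. Each of these is handled by the law of large numbers, with dominated convergence passing the result to expectation.
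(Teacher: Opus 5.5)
Your synchronous-coupling route differs genuinely from the paper's proof. The paper uses an It\^o expansion of $\hat\mu_t^N(\phi)$, kills the martingale term via \Cref{lem:S}, obtains tightness from Jakubowski's criterion and \Cref{lem:compact_containment}, and identifies limit points through uniqueness of \eqref{eq:normalised-fk-short}. Your Step 2 is sound in substance.

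\textbf{The gap.} Steps 1 and 3 rely on $|A_t^{i,N}|\le LT$, which Assumption~\ref{ass:particle_convergence} does not give you. The bound $|g_t(x,\mu)|\le L$ concerns the drift built from $\dot\Psi_t^{\mathrm{exact}}(\cdot,\mu)$. The particle log-weights are instead driven by $g_t(X_t^{i,N},\hat\mu_t^N)+e_t^{i,N}$, with solver error $e_t^{i,N}=\dot\Psi_t^{(N)}(X_t^{i,N})-\dot\Psi_t^{\mathrm{exact}}(X_t^{i,N},\hat\mu_t^N)$. Item 3 controls $e^{i,N}$ only through the weighted, time-integrated quantity $\delta_N$ in \eqref{eq:solver_error}, which is far weaker than what you use. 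Two examples show this:
\begin{itemize}
\item Let $e_t^{i,N}=-M$ on $[0,T]$ for $k_N=o(N)$ particles. Their weights decay roughly like $e^{-Mt}/N$, so they contribute only $O(k_N/N)$ to $\delta_N$, however large $M$ is. Yet $A_T^{i,N}\approx -MT$, and the unweighted average $\tfrac1N\sum_i\int_0^T|e_t^{i,N}|\,\rmd t\approx k_NMT/N$ can be arbitrarily large.
\item An error $+M$ with $MT=c\log N$, $c<1$, produces $e^{A_T^{i,N}}\approx N^{c}$ at weighted cost only $O(N^{c-1})$.
\end{itemize}
So three steps are unjustified: the comparison $w_t^{i,N}\in[e^{-2LT}/N,e^{2LT}/N]$; the conversion of the weighted solver error into $C\delta_N$ for the unweighted average in $D_t$; and $|e^a-e^b|\le e^{LT}|a-b|$ inside $d_{\mathrm{BL}}(\hat\mu_s^N,\bar{\hat\mu}^N_s)$. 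Consequently $D_T\le C(\epsilon_N+\delta_N)$ does not follow. The paper never meets this issue, because the solver error enters only through $\sum_i w_t^{i,N}|e_t^{i,N}|$, both in \Cref{lem:S} and in the residual $R_t^{N,\phi}$.

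\textbf{A repair.} Compare normalised weights in total variation instead of comparing log-weights. Write $\gamma_t^i:=g_t(\bar X_t^i,\mu_t^*)$, $g_t^{i,N}:=g_t(X_t^{i,N},\hat\mu_t^N)$ and $\bar w_t^i\propto e^{\bar A_t^i}$. The frozen weights do satisfy $\bar w_t^i\le e^{2LT}/N$, since their log-weight drift is exactly $\gamma_t^i$. Differentiating both weight vectors gives, for a.e.\ $t$,
\[
\tfrac{\rmd}{\rmd t}\sum_i|w_t^{i,N}-\bar w_t^i|\le 3L\sum_i|w_t^{i,N}-\bar w_t^i|+2\sum_i\bar w_t^i|g_t^{i,N}-\gamma_t^i|+2\sum_iw_t^{i,N}|e_t^{i,N}|.
\]
Meanwhile $d_{\mathrm{BL}}(\hat\mu_t^N,\bar{\hat\mu}^N_t)\le\sum_i|w_t^{i,N}-\bar w_t^i|+\sum_i\bar w_t^i|X_t^{i,N}-\bar X_t^i|$ involves only the bounded frozen weights. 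A joint pathwise Gronwall argument for $\tfrac1N\sum_i|X_t^{i,N}-\bar X_t^i|+\sum_i|w_t^{i,N}-\bar w_t^i|$ then yields $\mathbb E\sup_t d_{\mathrm{BL}}(\hat\mu_t^N,\mu_t^*)\le C(\epsilon_N+\delta_N)$ under exactly the paper's assumptions.

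\textbf{What each route buys.} Repaired, your route gives an explicit estimate in $\epsilon_N$ and $\delta_N$ and avoids the tightness and compact-containment machinery. The price is reliance on the additive noise for the synchronous coupling.

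\textbf{Smaller points.}
\begin{itemize}
\item To identify the mean of your i.i.d.\ system with $\mu^*$, say explicitly that you use uniqueness of the \emph{linear} Feynman--Kac equation with coefficients frozen at $\mu^*$. This is standard for bounded Lipschitz coefficients, but it is not what item 1 asserts.
\item In Step 2, the pointwise bound $\mathbb E|\bar X_t-\bar X_s|\lesssim|t-s|^{1/2}$ does not control suprema over grid cells. Use instead the i.i.d.\ moduli of continuity $\sup_{|t-s|\le h}|\bar X_t^i-\bar X_s^i|\wedge 2$, whose common mean tends to $0$ as $h\to0$ by path continuity and dominated convergence.
\end{itemize}
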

With this in place, a similar result holds if the particle system \eqref{eq:finite_particle_dynamics} is rejuvenated by resampling:
\begin{corollary}[Convergence with residual resampling]
\label{cor:particle_convergence_resampling}
Suppose the assumptions of
Theorem~\ref{thm:weighted_particle_convergence} hold, and suppose that
residual resampling is performed at a fixed finite collection of times
\[
0<\tau_1<\cdots<\tau_K\leq T,
\]
where the maximum number of resampling steps $K$ is uniformly bounded in $N$. After each resampling
event, the particles are assigned equal weights \(\tfrac{1}{N}\), i.e. $A^j_{\tau_i} = 0$ for $j=1,\ldots, N$ and $i = 1,\ldots, K$. Then, for
every \(\phi\in C_b^2(\mathbb R^d)\),
\[
\mathbb E\left[
    \sup_{t\in[0,T]}
    \left|
        \hat\mu_t^N(\phi)-\mu_t^*(\phi)
    \right|
\right]
\longrightarrow0
\qquad
\text{as }N\to\infty.
\]
\end{corollary}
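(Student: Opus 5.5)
The corollary follows from Theorem~\ref{thm:weighted_particle_convergence} by induction over the resampling epochs $[\tau_{k},\tau_{k+1})$ (with $\tau_0=0$, $\tau_{K+1}=T$). Each resampling event adds an error that vanishes as $N\to\infty$. Because $K$ is bounded uniformly in $N$, these errors accumulate only finitely many times.

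On each epoch, the resampled system is again a weighted particle system of the form \eqref{eq:finite_particle_dynamics}, started at time $\tau_k$ with equal weights. Its initial empirical measure is $\bar\mu^N_{\tau_k}$ rather than i.i.d.\ draws from $p_0$. We therefore need a version of Theorem~\ref{thm:weighted_particle_convergence} with a general initial time and a random initial empirical measure. The statement to aim for: if $\mathbb E|\bar\mu^N_{\tau_k}(\phi)-\mu^*_{\tau_k}(\phi)|\to0$ for all $\phi\in C_b^2$, then the conclusion holds on $[\tau_k,\tau_{k+1}]$. This uses that the normalised Feynman--Kac flow \eqref{eq:normalised-fk-short}, started at $\mu^*_{\tau_k}$, reproduces $\mu^*_t$ (Theorem~\ref{th:mf_weighted_dynamics} together with uniqueness in Assumption~\ref{ass:particle_convergence}). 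Resetting $A^j_{\tau_k}=0$ is harmless, since $\mu_t$ depends on the log-weights only through their normalised exponentials. The corresponding step in the theorem's proof must then be modified: the initial-condition term $\hat\mu_0^N-p_0$ is replaced by $\bar\mu^N_{\tau_k}-\mu^*_{\tau_k}$. I expect the Lipschitz/Gr\"onwall structure in $d_{\mathrm{BL}}$ (Assumption~\ref{ass:particle_convergence}(2)) to carry this through, because the drifts and the potentials $g_t$ are bounded and Lipschitz.

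The resampling step itself is controlled by a standard residual-resampling estimate. Conditionally on the pre-resampling particles, residual resampling is unbiased, $\mathbb E[\bar\mu^N_{\tau_k}(\phi)\mid\mathcal F_{\tau_k^-}]=\hat\mu^N_{\tau_k^-}(\phi)$. Its conditional variance is at most $\|\phi\|_\infty^2/N$, because only the residual part, at most $N$ multinomial draws, is random. Hence $\mathbb E|\bar\mu^N_{\tau_k}(\phi)-\hat\mu^N_{\tau_k^-}(\phi)|\le\|\phi\|_\infty N^{-1/2}$. Combined with the inductive hypothesis at $\tau_k^-$, this gives convergence of the restarted initial measures. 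Taking the supremum over $t$ of the maximum over the finitely many epochs, and summing the $K+1$ epoch bounds, yields the claim.

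The main obstacle is the restart step. The theorem's proof must handle the interaction through $\hat\mu^N$ and the solver error $\delta_N$ when the initial particles are no longer independent. Convergence at fixed test functions may not immediately control $d_{\mathrm{BL}}$, which is what Assumption~\ref{ass:particle_convergence}(2) needs. To get around this, I would first upgrade to $\mathbb E\, d_{\mathrm{BL}}(\bar\mu^N_{\tau_k},\mu^*_{\tau_k})\to0$, using tightness (the drifts are bounded and $\eps_t$ is bounded, so the particle laws are uniformly tight) and a countable convergence-determining class. Alternatively, I would carry $\mathbb E\,d_{\mathrm{BL}}$ as the inductive quantity throughout. The solver-consistency condition $\delta_N\to0$ should be read as holding for the resampled system as well, and I would state this explicitly.
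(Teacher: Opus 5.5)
Your argument is correct, but it is organised differently from the paper's. The paper never restarts Theorem~\ref{thm:weighted_particle_convergence}. It keeps the single decomposition \eqref{eq:empirical_weak} on all of $[0,T]$ and notes that resetting the weights only lowers $S_t^N$ to $1/N$, so Lemma~\ref{lem:S} survives verbatim. It then adds a jump term $J_t^{N,\phi}=\sum_{k:\tau_k\le t}\Delta_k^{N,\phi}$. By conditional unbiasedness and the $\|\phi\|_\infty^2/N$ variance bound, this is a martingale with $\mathbb E[\sup_t|J_t^{N,\phi}|^2]\le CK\|\phi\|_\infty^2/N$. Tightness and identification then run as before, and uniqueness is invoked only once, for the problem started from $p_0$.

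You use the same resampling estimate, but in $L^1$ at each restart. The price is that you need the theorem started at $\tau_k$ from a random, dependent initial measure. That generalisation is milder than you fear. The proof never uses independence of the initial particles; it uses only three things:
\begin{itemize}
\item $S^N_{\tau_k}=1/N$;
\item tail control for Lemma~\ref{lem:compact_containment}, which you get by applying your inductive hypothesis to $\phi=\chi_R$;
\item identification of $\overline\mu_{\tau_k}$, for which convergence on a countable convergence-determining class already suffices.
\end{itemize}
So no upgrade to $d_{\mathrm{BL}}$ is needed.

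What you should make explicit is uniqueness of the restarted equation from $\mu^*_{\tau_k}$. This is not literally Assumption~\ref{ass:particle_convergence}(1), which is posed from $p_0$ at time $0$. It follows by concatenating a candidate restarted solution with $\mu^*$ on $[0,\tau_k]$ and using uniqueness from $p_0$, with that assumption read, as is natural, on every horizon.

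In exchange, your paths are continuous within each epoch, so tightness genuinely takes place in $C([\tau_k,\tau_{k+1}],\mathcal P(\mathbb R^d))$. In the paper's global argument, $\hat\mu^N$ jumps at the $\tau_k$. Its appeal to tightness in $C([0,T],\mathcal P(\mathbb R^d))$ therefore tacitly needs a Skorokhod-space (C-tightness) step, using that these jumps vanish in probability. The paper's route, in turn, is shorter and avoids both a restated theorem and restarted uniqueness.
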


Before proceeding to the proof, we need the following auxiliary result:
\begin{lemma}
\label{lem:S}
Defining the squared sum of weights
\(
    S_t^N
    :=\sum_{i=1}^N(w_t^{i,N})^2,
\)
we have that
\begin{equation}
\label{eq:weight bound}
    \mathbb E\left[
        \sup_{t\in[0,T]}S_t^N
    \right]
    \leq
    e^{4LT}\left(\frac1N+4\delta_N\right)
    \longrightarrow0, \qquad \text{as} \,\, N \rightarrow \infty.
\end{equation}
\end{lemma}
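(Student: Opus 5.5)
The plan is to work pathwise: the log-weights $A_t^{i,N}$ solve an ODE with no martingale part, so $S_t^N$ is an absolutely continuous function of $t$. We derive a differential inequality for it and close with Gronwall. Write $g^i_t := b_t(X_t^{i,N})\cdot\nabla_x\Psi_t(X_t^{i,N};\hat\mu_t^N)+\dot\Psi_t^{(N)}(X_t^{i,N})$ for the implemented log-weight drift, so that $\rmd A_t^{i,N}=g_t^i\,\rmd t$. Split it as $g_t^i = g_t(X_t^{i,N},\hat\mu_t^N) + e_t^i$, where $g_t(\cdot,\mu)$ is the exact coefficient from \eqref{eq:drift coefficients} (with $\dot\Psi_t$ replaced by $\dot\Psi_t^{\mathrm{exact}}(\cdot,\mu)$) and $e_t^i := \dot\Psi_t^{(N)}(X_t^{i,N})-\dot\Psi_t^{\mathrm{exact}}(X_t^{i,N},\hat\mu_t^N)$ is the solver error. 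By \eqref{eq:boundedness}, $|g_t(x,\mu)|\le L$ for every $x$ and every probability measure $\mu$, in particular for $\mu=\hat\mu_t^N$.

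First, from \eqref{eq:weights} we get the replicator-type identity $\tfrac{\rmd}{\rmd t}w_t^{i,N}=w_t^{i,N}\bigl(g_t^i-\bar g_t\bigr)$, where $\bar g_t:=\sum_j w_t^{j,N}g_t^j$. This is the particle analogue of \eqref{eq:target evolution}. Hence
\[
\frac{\rmd}{\rmd t}S_t^N = 2\sum_i (w_t^{i,N})^2 g_t^i - 2S_t^N\,\bar g_t .
\]
Next we insert the splitting.

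\emph{Exact part.} Using $|g_t(\cdot,\hat\mu_t^N)|\le L$ and $\sum_i (w_t^{i,N})^2=S_t^N$, this part contributes at most $2LS_t^N+2LS_t^N=4LS_t^N$.

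\emph{Error part.} Since $w_t^{i,N}\le 1$ and $S_t^N\le 1$, we have $2\sum_i (w_t^{i,N})^2 e_t^i\le 2E_t$ and $-2S_t^N\sum_j w_t^{j,N}e_t^j\le 2E_t$, where $E_t:=\sum_i w_t^{i,N}|e_t^i|$.

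Together these give $\tfrac{\rmd}{\rmd t}S_t^N\le 4LS_t^N+4E_t$ for almost every $t$. With $S_0^N=1/N$ (all $A_0^{i,N}=0$), Gronwall's lemma yields, pathwise,
\[
S_t^N\le e^{4Lt}\Bigl(\tfrac1N+4\int_0^t E_s\,\rmd s\Bigr).
\]
The right-hand side is nondecreasing in $t$, so the same bound with $t=T$ controls $\sup_{t\le T}S_t^N$. Taking expectations and recognising $\mathbb E\int_0^T E_s\,\rmd s=\delta_N$ from \eqref{eq:solver_error} gives \eqref{eq:weight bound}. The convergence to $0$ then follows from Assumption~\ref{ass:particle_convergence}(3).

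The main point needing care is justifying the differentiation. We must check that $t\mapsto A_t^{i,N}$ is absolutely continuous, i.e.\ that $g_t^i$ is locally integrable in $t$. For the exact part this follows from boundedness. For the solver part, it follows from the finiteness of $\delta_N$, at least on the event of full measure where $\int_0^T E_s\,\rmd s<\infty$; if necessary we work with the integral form of the inequality directly. We must also confirm that the uniform bound in \eqref{eq:boundedness} is meant to hold at arbitrary (empirical) measures $\mu$, not only along the limiting flow; this is how Assumption~\ref{ass:particle_convergence}(2) is phrased.
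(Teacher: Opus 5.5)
Your proposal is correct and follows the paper's proof essentially step for step: the replicator identity for $w_t^{i,N}$, the split of the log-weight drift into the exact coefficient (bounded by $L$, giving $4LS_t^N$) and the solver error (giving $4\sum_i w_t^{i,N}|e_t^{i,N}|$ via $(w_t^{i,N})^2\le w_t^{i,N}$ and $S_t^N\le 1$), then Gr\"onwall from $S_0^N=1/N$ and taking expectations. Your extra remarks make explicit three points the paper leaves implicit: the monotonicity of the Gr\"onwall bound used to handle the supremum, the absolute continuity of $A_t^{i,N}$, and that $|g_t(x,\mu)|\le L$ must hold at empirical measures $\mu$.
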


\begin{proof}
For each particle, define the shorthand
\[
g_t^{i,N}
:=
g_t\left(X_t^{i,N},\widehat{\mu}_t^N\right)
\]
and the solver error
\[
e_t^{i,N}
:=
\dot{\Psi}_t^{(N)}\left(X_t^{i,N}\right)
-
\dot{\Psi}_t^{\mathrm{exact}}
\left(
    X_t^{i,N},
    \widehat{\mu}_t^N
\right).
\]
We also define the weighted averages
\[
\overline{g}_t^N
:=
\sum_{j=1}^N
w_t^{j,N}g_t^{j,N},
\qquad
\overline{e}_t^N
:=
\sum_{j=1}^N
w_t^{j,N}e_t^{j,N}.
\]
Differentiating \eqref{eq:weights}, and using \eqref{eq:particle weights},
gives
\begin{equation*}
\frac{\mathrm{d}}{\mathrm{d}t}w_t^{i,N}
=
w_t^{i,N}
\left(
    g_t^{i,N}-\overline{g}_t^N
    +
    e_t^{i,N}-\overline{e}_t^N
\right).
\end{equation*}
Consequently,
\begin{equation}
\label{eq:S dynamics}
\frac{\mathrm{d}}{\mathrm{d}t}S_t^N
={}
2\sum_{i=1}^N
\left(w_t^{i,N}\right)^2
\left(
    g_t^{i,N}-\overline{g}_t^N
\right)
+
2\sum_{i=1}^N
\left(w_t^{i,N}\right)^2
\left(
    e_t^{i,N}-\overline{e}_t^N
\right).
\end{equation}
We proceed by estimating the right-hand side of \eqref{eq:S dynamics}:
By the boundedness assumption \eqref{eq:boundedness}, we have $
\left|g_t^{i,N}\right|\leq L$ and $
\left|\overline{g}_t^N\right|
\leq
\sum_{j=1}^N
w_t^{j,N}\left|g_t^{j,N}\right|
\leq L$, and hence
\[
2\sum_{i=1}^N
\left(w_t^{i,N}\right)^2
\left(
    g_t^{i,N}-\overline{g}_t^N
\right)
\leq
4L S_t^N.
\]

By the definition \eqref{eq:weights}, the weights $w_t^{i,N}$ are nonnegative and sum to one, and so $
\left(w_t^{i,N}\right)^2
\leq
w_t^{i,N}$ and $
S_t^N\leq 1$.
Therefore,
\begin{align*}
2\sum_{i=1}^N
\left(w_t^{i,N}\right)^2
\left(
    e_t^{i,N}-\overline{e}_t^N
\right)
\leq
2\sum_{i=1}^N
\left(w_t^{i,N}\right)^2
\left|e_t^{i,N}\right|
+
2S_t^N
\left|\overline{e}_t^N\right|
\leq
4\sum_{i=1}^N
w_t^{i,N}\left|e_t^{i,N}\right|.
\end{align*}
Combining the preceding estimates with \eqref{eq:S dynamics} gives
\[
\frac{\mathrm{d}}{\mathrm{d}t}S_t^N
\leq
4L S_t^N
+
4\sum_{i=1}^N
w_t^{i,N}\left|e_t^{i,N}\right|.
\]

Since the weights are initialised uniformly, \(w_0^{i,N}=1/N\), we have $
S_0^N
=
\sum_{i=1}^N\frac{1}{N^2}
=
\frac{1}{N}$.
Gr\"onwall's inequality therefore yields
\[
\sup_{t\in[0,T]}S_t^N
\leq
e^{4LT}
\left[
    \frac{1}{N}
    +
    4\int_0^T
    \sum_{i=1}^N
    w_t^{i,N}\left|e_t^{i,N}\right|
    \,\mathrm{d}t
\right].
\]
Taking expectations then implies the claim.
\end{proof}

\begin{proof}[Proof of Theorem \ref{thm:weighted_particle_convergence}]

We fix $\phi\in C_b^2(\mathbb R^d)$ and derive the evolution equation for  the empirical measure $\hat\mu_t^N(\phi) = \sum_{i=1}^N w_t^{i,N} \phi(X_t^{i,N})$. It\^o's formula gives 
\begin{equation}
\label{eq:empirical_weak}
    \hat\mu_t^N(\phi)
    =
    \hat\mu_0^N(\phi)
    +\int_0^t\mathcal F_s^\phi(\hat\mu_s^N)\,\rmd s
    +E_t^{N,\phi}
    +M_t^{N,\phi},
\end{equation}
where 
\begin{align}
\label{eq:F_fk}
\mathcal F_t^\phi(\mu)
:={}
\mu\left(
B_t(\cdot,\mu)\cdot\nabla\phi
    +\eps_t\Delta\phi
\right)
+
\mu(g_t(\cdot,\mu)\phi)
-
\mu(g_t(\cdot,\mu))\mu(\phi)
\end{align}
refers to the integrated right-hand side of \eqref{eq:normalised-fk-short}, the solver error is encoded in 
\begin{equation}
\label{eq:solver_residual}
    E_t^{N,\phi}
    :=
    \int_0^tR_s^{N,\phi}\,\rmd s,
    \qquad
    R_t^{N,\phi}
    :=
    \sum_{i=1}^Nw_t^{i,N}e_t^{i,N}
    \left(
        \phi(X_t^{i,N})-\hat\mu_t^N(\phi)
    \right),
\end{equation}
and
\begin{equation}
\label{eq:martingale}
    M_t^{N,\phi}
    :=
    \int_0^t
    \sqrt{2\eps_s}
    \sum_{i=1}^Nw_s^{i,N}
    \nabla\phi(X_s^{i,N})\cdot\rmd B_s^i
\end{equation}
is the martingale part.
Notice that $
    |R_t^{N,\phi}|
    \leq
    2\|\phi\|_\infty\sum_{i=1}^N
    w_t^{i,N}\left|e_t^{i,N}\right|$; the assumption in \eqref{eq:solver_error} therefore implies
\begin{equation}
\label{eq:solver_residual_bound}
    \mathbb E\left[
        \sup_{t\in[0,T]}|E_t^{N,\phi}|
    \right]
    \leq
    2\|\phi\|_\infty\delta_N
    \longrightarrow0.
\end{equation}
The quadratic variation of the martingale part can be estimated as
\[
    \langle M^{N,\phi}\rangle_T
    \leq
    2\|\eps\|_\infty T
    \|\nabla\phi\|_\infty^2
    \sup_{t\in[0,T]}S_t^N.
\]
It{\^o}'s isometry, Doob's $L^2$ maximal inequality, and Lemma \ref{lem:S} therefore imply
\begin{equation}
\label{eq:martingale_bound}
    \mathbb E\left[
        \sup_{t\in[0,T]}|M_t^{N,\phi}|^2
    \right]
    \leq
    C_T\|\nabla\phi\|_\infty^2
    \left(\frac1N+\delta_N\right)
    \longrightarrow0,
\end{equation}
where the constant $C_T$ is independent of $N$. Formally, we thus conclude that \eqref{eq:empirical_weak} converges term-by-term to a weak form of the Feynman-Kac equation \eqref{eq:normalised-fk-short} which governs the target law $\mu_t^*$. To make this argument precise, we argue as follows:

\textbf{Tightness.} We show that the family $\{\operatorname{Law}(\hat\mu^N):N\geq1\}$ is tight in $C([0,T],\mathcal P(\mathbb R^d))$. To this end, fix $\phi\in C_c^\infty(\mathbb R^d)$.  By the boundedness  assumption \eqref{eq:boundedness} and $\sup_{t \in [0,T]}\varepsilon_t < \infty$, we have that $
    \sup_{t,\mu}|\mathcal F_t^\phi(\mu)|<\infty$.
Therefore, the drift paths
$t\mapsto\int_0^t\mathcal F_s^\phi(\hat\mu_s^N)\,\rmd s$ have a
common deterministic Lipschitz constant, while
\eqref{eq:solver_residual_bound} and
\eqref{eq:martingale_bound} show that the remaining error
$E_t^{N,\phi}
    +M_t^{N,\phi}$  in \eqref{eq:empirical_weak} vanishes uniformly in probability.  It now follows from
Arzela--Ascoli that the family of real-valued processes
$(\hat\mu^N(\phi))_{N\geq1}$ is tight in $C([0,T],\mathbb R)$. To obtain tightness of $\{\operatorname{Law}(\hat\mu^N):N\geq1\}$, we verify the conditions of Jakubowski's tightness criterion
\citep[Theorem~3.1(i)]{jakubowski1986}: indeed,
we may choose a countable convergence-determining family
$\mathcal G\subset C_c^\infty(\mathbb R^d)$; compact containment follows from Lemma~\ref{lem:compact_containment} below.
Finally, since every
$\hat\mu^N$ has continuous paths, and the Skorokhod topology restricted
to continuous paths is the uniform topology
\citep[Proposition~1.6(i)]{jakubowski1986}, the laws \{$\operatorname{Law}(\hat\mu^N):N\geq1\}$ are indeed tight in 
$C([0,T],\mathcal P(\mathbb R^d))$.

\textbf{Subsequential limits.} We conclude by showing that every subsequence of $\hat{\mu}^N$ has a further subsequence whose law converges weakly to the Dirac measure at $\mu^*$ in $\mathcal{P}(C([0,T],\mathcal P(\mathbb R^d)))$: Let $(N_j) \subset \mathbb{N}$ be an
arbitrary subsequence.  By tightness, it has a further subsequence,
not relabelled, such that
\[
    \hat\mu^{N_j}
    \Longrightarrow
    \overline\mu
    \qquad\text{in }
    C([0,T],\mathcal P(\mathbb R^d)).
\]
For every $\phi\in C_c^\infty(\mathbb R^d)$, Assumption~\ref{ass:particle_convergence} implies the Lipschitz property
\begin{equation}
\label{eq:F_continuity}
    |\mathcal F_t^\phi(\mu)-\mathcal F_t^\phi(\nu)|
    \leq
    C_\phi d_{\mathrm{BL}}(\mu,\nu),
\end{equation}
uniformly in $t$. Together with the facts that  
$E_t^{N,\phi}+M_t^{N,\phi}$ converges to zero in probability and  that the initial empirical measures converge to $p_0$ by the law of large numbers,
the continuous mapping theorem and Slutsky's lemma  imply
that, almost surely,
\[
    \overline\mu_t(\phi)
    =
    p_0(\phi)
    +\int_0^t\mathcal F_s^\phi(\overline\mu_s)\,\rmd s
\]
for every rational $t$ and every $\phi$ in a countable dense subset of
$C_c^\infty(\mathbb R^d)$.  Continuity in time and density extend this
identity to every $t\in[0,T]$ and every
$\phi\in C_c^\infty(\mathbb R^d)$. As a consequence, $\overline\mu$ is a weak
solution of the Feynman--Kac equation \eqref{eq:normalised-fk-short}.
Uniqueness in Assumption~\ref{ass:particle_convergence} gives
$\overline\mu=\mu^*$ almost surely.

Every subsequence therefore has a further subsequence converging in
law to the same deterministic path $\mu^*$.  Hence the full sequence
converges in probability to \(\mu^*\) in $
C\bigl([0,T],\mathcal P(\mathbb R^d)\bigr)$.
Since every \(\phi\in C_b^2(\mathbb R^d)\) is bounded and Lipschitz,
it follows that $
\sup_{t\in[0,T]}
\left|
\hat\mu_t^N(\phi)-\mu_t^*(\phi)
\right|
\longrightarrow 0$ in probability.
Moreover,
\[
\sup_{t\in[0,T]}
\left|
\hat\mu_t^N(\phi)-\mu_t^*(\phi)
\right|
\leq
2\|\phi\|_\infty
\]
shows that the sequence is  uniformly integrable, and thus the claim follows.
\end{proof}
The following technical lemma is used in the tightness step of the preceding proof. 
\begin{lemma}[Compact containment]
\label{lem:compact_containment}
For every \(\eta>0\), there exists a weakly compact set $
\mathsf K_\eta
\subset
\mathcal P(\mathbb R^d)$ such that
\[
\liminf_{N\to\infty}
\mathbb P\left(
    \hat\mu_t^N\in\mathsf K_\eta
    \text{ for every }t\in[0,T]
\right)
\geq
1-\eta.
\]
\end{lemma}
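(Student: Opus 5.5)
The plan is to reduce compact containment to uniform tightness of the first moment (or of a Lyapunov function) under the weighted empirical measures. By Prokhorov, for any $M>0$ the set
\[
\mathsf K_M:=\Bigl\{\mu\in\mathcal P(\mathbb R^d):\ \mu\bigl(V\bigr)\leq M\Bigr\},\qquad V(x):=\sqrt{1+|x|^2},
\]
is weakly compact, since $V$ has compact sublevel sets and is lower semicontinuous. It therefore suffices to show that
\[
\limsup_{N\to\infty}\mathbb P\Bigl(\sup_{t\in[0,T]}\hat\mu_t^N(V)>M\Bigr)\leq \eta
\]
for $M=M_\eta$ large. Markov's inequality reduces this further to a uniform-in-$N$ bound on $\mathbb E[\sup_{t\in[0,T]}\hat\mu_t^N(V)]$. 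We may assume $p_0$ has a finite first moment; for a Gaussian $p_0$ this is automatic.

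For this moment bound, we apply the same It\^o computation that gave \eqref{eq:empirical_weak}, now with $\phi=V$. The function $V$ is unbounded, but $\nabla V$ and $\nabla^2 V$ are bounded. The drift term is $\hat\mu^N_s(B_s\cdot\nabla V+\eps_s\Delta V)\leq L+d\|\eps\|_\infty$ by \eqref{eq:boundedness}. The reweighting term needs more care. It equals $\hat\mu_s^N(g_sV)-\hat\mu_s^N(g_s)\hat\mu_s^N(V)$, and by $|g_s|\leq L$ it is bounded by $2L\,\hat\mu_s^N(V)$. The solver residual $R^{N,V}_s$ is the delicate part, because it involves the unbounded $V$ multiplied by $e^{i,N}_s$.

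The martingale is handled as in \eqref{eq:martingale_bound}. Since $|\nabla V|\leq 1$, we get $\mathbb E[\sup_t|M^{N,V}_t|^2]\leq C_T\mathbb E[\sup_t S_t^N]$, which is uniformly bounded by Lemma \ref{lem:S}. Gr\"onwall's lemma applied pathwise to $\sup_{s\le t}\hat\mu_s^N(V)$ then gives
\[
\sup_t\hat\mu^N_t(V)\leq e^{2LT}\Bigl(\hat\mu^N_0(V)+CT+\sup_t|M_t^{N,V}|+\sup_t|E^{N,V}_t|\Bigr).
\]
Taking expectations, the first three terms are bounded uniformly in $N$, using $\mathbb E\hat\mu_0^N(V)=p_0(V)$.

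The main obstacle is the solver term $E^{N,V}$, because \eqref{eq:solver_error} only controls $\sum_i w^i|e^i|$ and not the same sum weighted by $V(X^i)$. I would first try to avoid the issue. One route is to note that the weight dynamics \eqref{eq:particle weights} use $\dot\Psi^{(N)}$, so the total log-growth rate $g^{(N)}:=g+e$ drives the weights. If $\dot\Psi^{(N)}$ is bounded (for instance, after clipping, as in the experiments), then $|g^{(N)}|\leq L'$, and we can bound the whole reweighting term by $2L'\hat\mu^N_s(V)$ without splitting off $E$.

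If such boundedness is not available, the fallback is to truncate. Replace $V$ by $V_R:=\min(V,R)$, smoothed, so that $|R^{N,V_R}_s|\leq 2R\sum_iw^i_s|e^i_s|$ and its contribution is at most $2R\delta_N\to0$. Then bound $\mathbb P(\sup_t\hat\mu_t^N(V_R)>M)$ uniformly using the argument above. Finally, observe that $\{\mu(V_R)\leq M\}$ with $R>M$ still confines the mass to a ball, up to $M/R$. Combining this with a countable sequence $R_k\uparrow\infty$, $M_k=\eta^{-1}2^k$, and intersecting the corresponding sets yields a compact $\mathsf K_\eta$ satisfying the claimed $\liminf$ bound.
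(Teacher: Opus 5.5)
\textbf{There is a gap in the final assembly of your truncation route.}

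Your first route, with the untruncated $V$, needs $\dot\Psi^{(N)}$ (equivalently the solver error $e^{i,N}_t$) to be bounded. That is not part of Assumption~\ref{ass:particle_convergence}, which only controls $\mathbb E\int_0^T\sum_i w^{i,N}_t|e^{i,N}_t|\,\rmd t$ through \eqref{eq:solver_error}. So within the stated hypotheses everything rests on the truncated route.

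There the per-level estimate is fine. With a smoothing satisfying $|\nabla V_R|\le1$ and $|\Delta V_R|\le C$ uniformly in $R$, you get $\mathbb E[\sup_t\hat\mu^N_t(V_R)]\le C_0+2e^{2LT}R\,\delta_N$. Here $C_0$ is independent of $R$ and bounded in $N$, given $p_0(V)<\infty$.

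The problem is the last step. A single set $\{\mu:\mu(V_R)\le M\}$ is not tight, so you need infinitely many levels with $M_k/R_k\to0$. For each \emph{fixed} $N$, the union bound for $\mathbb P(\exists k,t:\ \hat\mu^N_t(V_{R_k})>M_k)$ is then $\sum_k(C_0+2e^{2LT}R_k\delta_N)/M_k=+\infty$, because $R_k/M_k\to\infty$. Knowing $\limsup_N\mathbb P(\sup_t\hat\mu^N_t(V_{R_k})>M_k)\le C_0/M_k$ for each $k$ separately does not let you pass $\limsup_N$ through an infinite union, since there is no summable dominating sequence. Monotonicity does not help either, because $V_R$ increases with $R$.

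\textbf{The missing step} is to restate your estimate for a quantity that is monotone in the right direction. Take a smoothing that agrees with $V$ on $\{V\le R-1\}$ and is nondecreasing. Then $V_{r+3}\ge r\,\mathbf 1_{\{|x|>r\}}$, so
$a_N(r):=\mathbb E[\sup_t\hat\mu^N_t(\{|x|>r\})]\le r^{-1}\bigl(C_0+2e^{2LT}(r+3)\delta_N\bigr)$, and hence $\lim_{r\to\infty}\limsup_{N}a_N(r)=0$.

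Two further facts upgrade this to $\lim_r\sup_N a_N(r)=0$. First, $a_N$ is nonincreasing in $r$. Second, for each fixed $N$, $a_N(r)\to0$ as $r\to\infty$, because the finitely many particle paths are continuous and hence bounded on $[0,T]$, and dominated convergence applies.

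Now pick $r_k$ with $\sup_Na_N(r_k)\le\eta4^{-k}$ and set $\mathsf K_\eta=\bigcap_{k\ge1}\{\mu:\mu(\{|x|>r_k\})\le2^{-k}\}$. This set is weakly closed by the portmanteau theorem, since the sets $\{|x|>r_k\}$ are open. It is tight by construction, hence compact. Markov's inequality and the union bound give $\mathbb P(\exists t:\hat\mu^N_t\notin\mathsf K_\eta)\le\sum_k2^ka_N(r_k)\le\eta$ for every $N$.

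\textbf{Comparison with the paper.} With this repair, your argument is the paper's proof with the cutoff $\chi_R$ replaced by a truncated Lyapunov function. The paper's choice is slightly more economical:
\begin{itemize}
\item $\|\chi_R\|_\infty\le1$, so the solver residual costs $2\delta_N$ rather than $2R\delta_N$.
\item Decay in $R$ comes from the derivative bounds on $\chi_R$ and from $p_0(\chi_R)\to0$, not from Markov's inequality.
\item No moment condition on $p_0$ is needed.
\end{itemize}
The paper's brief ``countable-tail construction'' relies on exactly the same monotone upgrade from $\limsup_N$ to $\sup_N$.
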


\begin{proof}
For any radius \(r>0\), we denote the open ball in $\mathbb{R}^d$ by $
B_r:=\left\{x\in\mathbb R^d:|x|<r\right\}$.
Choose a radial, nondecreasing function
\(\chi\in C^\infty(\mathbb R^d;[0,1])\) such that $
\chi=0$ on $B_1$ and 
$\chi=1$ on $B_2^c$, 
and set
\[
\chi_R(x):=\chi(x/R), \qquad R > 0.
\]
Notice the following properties,
\begin{equation}
\label{eq:chi bound}
\mathbf 1_{B_{2R}^c}
\leq
\chi_R
\leq
\mathbf 1_{B_R^c},
\qquad
\|\nabla\chi_R\|_\infty\leq\frac{C}{R},
\qquad
\|\Delta\chi_R\|_\infty\leq\frac{C}{R^2},
\end{equation}
for some constant $C<\infty$, as well as 
\begin{equation}
\left|
\operatorname{Cov}_{\mu}
\left(
g_t(\cdot,\mu),\chi_R
\right)
\right|
\leq
2L\mu(\chi_R),
\end{equation}
which follows directly from $0 \le \xi_R \le 1$ and \eqref{eq:boundedness}.

We now apply
\eqref{eq:empirical_weak} with \(\phi=\chi_R\), take the supremum over time and expectations. Using the estimates \eqref{eq:solver_residual_bound},  
\eqref{eq:martingale_bound}, \eqref{eq:chi bound}, as well as Cauchy-Schwarz, we arrive at
\begin{align*}
\mathbb E\left[
    \sup_{r\in[0,t]}\hat\mu_r^N(\chi_R)
\right]
\leq{}&
p_0(\chi_R)
+
C_T\left(
    \frac1R+\frac1{R^2}
\right)
+
2\delta_N
\\
&+
\frac{C_T}{R}
\left(
    \frac1N+\delta_N
\right)^{1/2}
+
2L\int_0^t
\mathbb E\left[
    \sup_{u\in[0,s]}\hat\mu_u^N(\chi_R)
\right]
\,\mathrm ds,
\end{align*}
with a constant $C_T < \infty$ that may depend on $T$, but not on $N$ or $R$.
Gr\"onwall's inequality then yields
\begin{equation}
\label{eq:cutoff_tail_bound}
\mathbb E\left[
    \sup_{t\in[0,T]}\hat\mu_t^N(\chi_R)
\right]
\leq
C_T\left[
    p_0(\chi_R)
    +
    \frac1R
    +
    \delta_N
    +
    \frac1R
    \left(
        \frac1N+\delta_N
    \right)^{1/2}
\right],
\end{equation}
with a different constant $C_T < \infty$, denoted by the same symbol.
Since \(p_0(\chi_R)\to0\) as \(R\to\infty\) and
\(\delta_N\to0\) as $N \rightarrow \infty$, it follows that
\[
\lim_{R\to\infty}
\limsup_{N\to\infty}
\mathbb E\left[
    \sup_{t\in[0,T]}\hat\mu_t^N(\chi_R)
\right]
=
0.
\]
This uniform control of the probability mass at spatial infinity yields the claim by a relatively standard countable-tail
construction; see, for example,
\citet[Lemma~6.1]{giesecke2013default}.
\end{proof}

\begin{proof}[Proof of Corollary]
Between resampling times, the proof of
Theorem~\ref{thm:weighted_particle_convergence} is unchanged.
Moreover, resampling resets all weights to \(\tfrac{1}{N}\), and hence
\[
\sum_{i=1}^N
\left(w_{\tau_k}^{i,N}\right)^2
=
\frac1N
\leq
\sum_{i=1}^N
\left(w_{\tau_k-}^{i,N}\right)^2,
\]
where $w_{\tau_k-}^{i,N}$ are the weights just before, and $w_{\tau_k}^{i,N}$ are the weights just after resampling.
As a consequence, the conclusion of Lemma~\ref{lem:S} remains
valid.

For a bounded test function \(\phi\), define the jump induced by the
\(k\)-th resampling step by
\[
\Delta_k^{N,\phi}
:=
\hat\mu_{\tau_k}^N(\phi)
-
\hat\mu_{\tau_k-}^N(\phi).
\]
In \eqref{eq:empirical_weak}, the effect of resampling is the appearance of an additional jump term 
\begin{equation}
\label{eq:jump term}
J_t^{N,\phi}
:=
\sum_{k:\,\tau_k\leq t}
\Delta_k^{N,\phi}.
\end{equation}

Residual resampling is conditionally unbiased and its conditional
variance is bounded by that of multinomial resampling; see
\citet[Sections~2.2 and~3.2]{douc2005comparison}.
Consequently,
\[
\mathbb E\left[
    \Delta_k^{N,\phi}
    \,\middle|\,
    \mathcal F_{\tau_k^N-}
\right]
=0,
\qquad
\mathbb E\left[
    \left|\Delta_k^{N,\phi}\right|^2
    \,\middle|\,
    \mathcal F_{\tau_k^N-}
\right]
\leq
\frac{\|\phi\|_\infty^2}{N}.
\]
Therefore, \eqref{eq:jump term}
is a martingale, and we have the estimate
\[
\mathbb E\left[
    \sup_{t\in[0,T]}
    \left|J_t^{N,\phi}\right|^2
\right]
\leq
\frac{C K\|\phi\|_\infty^2}{N}
\longrightarrow0,
\]
where the limit statement rests on the fact that $K$ is uniformly bounded. The tightness and subsequential-identification arguments
proceed exactly as in the proof of
Theorem~\ref{thm:weighted_particle_convergence}, yielding the result.
\end{proof}

\section{Licenses}
\label{app:licenses}

The following assets were used in this work.
\begin{itemize}
    \item Boltz-2 \citep{passaro2025boltz2} \\
    MIT License \\ \url{https://github.com/jwohlwend/boltz}
    \item Protenix \citep{protenixv0} \\
    Apache-2.0 License \\ \url{https://github.com/bytedance/Protenix}
    \item Guided Protein Structure Prediction \citep{maddipatla2025inverse,Maddipatla2026_nature} \\
    \url{https://github.com/sai-advaith/guided_alphafold}
    \item Adenylate Kinase Potential of Mean Force \citep{BECKSTEIN2009160} \\
    Creative Commons Attribution-ShareAlike 4.0 International License \\ \url{https://becksteinlab.physics.asu.edu/research/52/adk-apo-pmf}
    \item 4OLE PDB entry \citep{4OLE_structure} \\
    CC0 1.0 Universal (CC0 1.0) Public Domain Dedication \\
    \url{https://www.rcsb.org/structure/4OLE}
\end{itemize}

\end{document}